\newtheorem{theorem}{Theorem}
\newtheorem{theoreminappendix}{Theorem (Re)}
\newtheorem{lemma}{Lemma}
\newtheorem{lemmainappendix}{Lemma (Re)}
\newtheorem{remark}{Remark}
\newtheorem{corollary}{Corollary}
\def\E{\mathbb{E}}
\def\P{\mathbb P}
\def\ConfTr{\text{CTr}}
\def\UA{\text{CUT}}
\def\Sigmoid{{\text{Sigmoid}}}
\def\indicator{\mathbb{1}}
\def\APS{\text{APS}}
\def\HPS{\text{HPS}}
\def\RAPS{\text{RAPS}}
\def\SAPS{\text{SAPS}}
\def\reg{\text{reg}}
\def\calY{\mathcal Y}
\def\calC{\mathcal C}
\def\calD{\mathcal D}
\def\calP{\mathcal P}
\def\calQ{\mathcal Q}
\def\calB{\mathcal B}
\def\calH{\mathcal H}
\def\calL{\mathcal L}
\def\E{\mathbb E}
\def\P{\mathbb P}
\def\sup{\text{sup}}
\def\max{\text{max}}
\def\tr{\text{tr}}
\def\test{\text{test}}
\newcounter{checksubsection}
\newcounter{checkitem}[checksubsection]
\newcommand{\checksubsection}[1]{%
  \refstepcounter{checksubsection}%
  \paragraph{\arabic{checksubsection}. #1}%
  \setcounter{checkitem}{0}%
}
\newcommand{\checkitem}{%
  \refstepcounter{checkitem}%
  \item[\arabic{checksubsection}.\arabic{checkitem}.]%
}
\newcommand{\question}[2]{\normalcolor\checkitem #1 #2 \color{blue}}
\newcommand{\ifyespoints}[1]{\makebox[0pt][l]{\hspace{-15pt}\normalcolor #1}}
\title{ Cost-Sensitive Conformal Training with Provably Controllable Learning Bounds }
\author{
    Xuesong Jia\equalcontrib\textsuperscript{\rm 1},
    Yuanjie Shi\equalcontrib\textsuperscript{\rm 1},
    Ziquan Liu\textsuperscript{\rm 2}, 
    Yi Xu\textsuperscript{\rm 3}, 
    Yan Yan\textsuperscript{\rm 1}
}
\begin{document}

\maketitle

\begin{abstract}
Conformal prediction (CP) is a general framework to quantify the predictive uncertainty of machine learning models that uses a set prediction to include the true label with a valid probability.
To align the uncertainty measured by CP, conformal training methods minimize the size of the prediction sets.
A typical way is to use a surrogate indicator function, usually Sigmoid or Gaussian error function.
However, these surrogate functions do not have a uniform error bound to the indicator function, leading to uncontrollable learning bounds.
In this paper, we propose a simple cost-sensitive conformal training algorithm that does not rely on the indicator approximation mechanism.
Specifically, we theoretically show that minimizing the expected size of prediction sets is upper bounded by the expected rank of true labels.
To this end, we develop a rank weighting strategy that assigns the weight using the rank of true label on each data sample.
Our analysis provably demonstrates the tightness between the proposed weighted objective and the expected size of conformal prediction sets.
Extensive experiments verify the validity of our theoretical insights, 
and superior empirical performance over other conformal training in terms of predictive efficiency with $21.38\%$ reduction for average prediction set size.
\end{abstract}

\begin{links}
    \link{Code}{https://github.com/JoSaitama/RWCE}
\end{links}

\section{ Introduction }
\label{section:introduction}

\begin{figure}[!t]
    \centering
    \begin{minipage}[t]{0.49\linewidth}
    \centering
    \textbf{(a)} Comparison on ResNet
    \end{minipage} 
    \begin{minipage}[t]{0.49\linewidth}
    \centering
    \textbf{(b)} Comparison on DenseNet
    \end{minipage} 
    \hfill
    \begin{minipage}[t]{0.49\linewidth}
     \centering   
     \includegraphics[width = \linewidth]{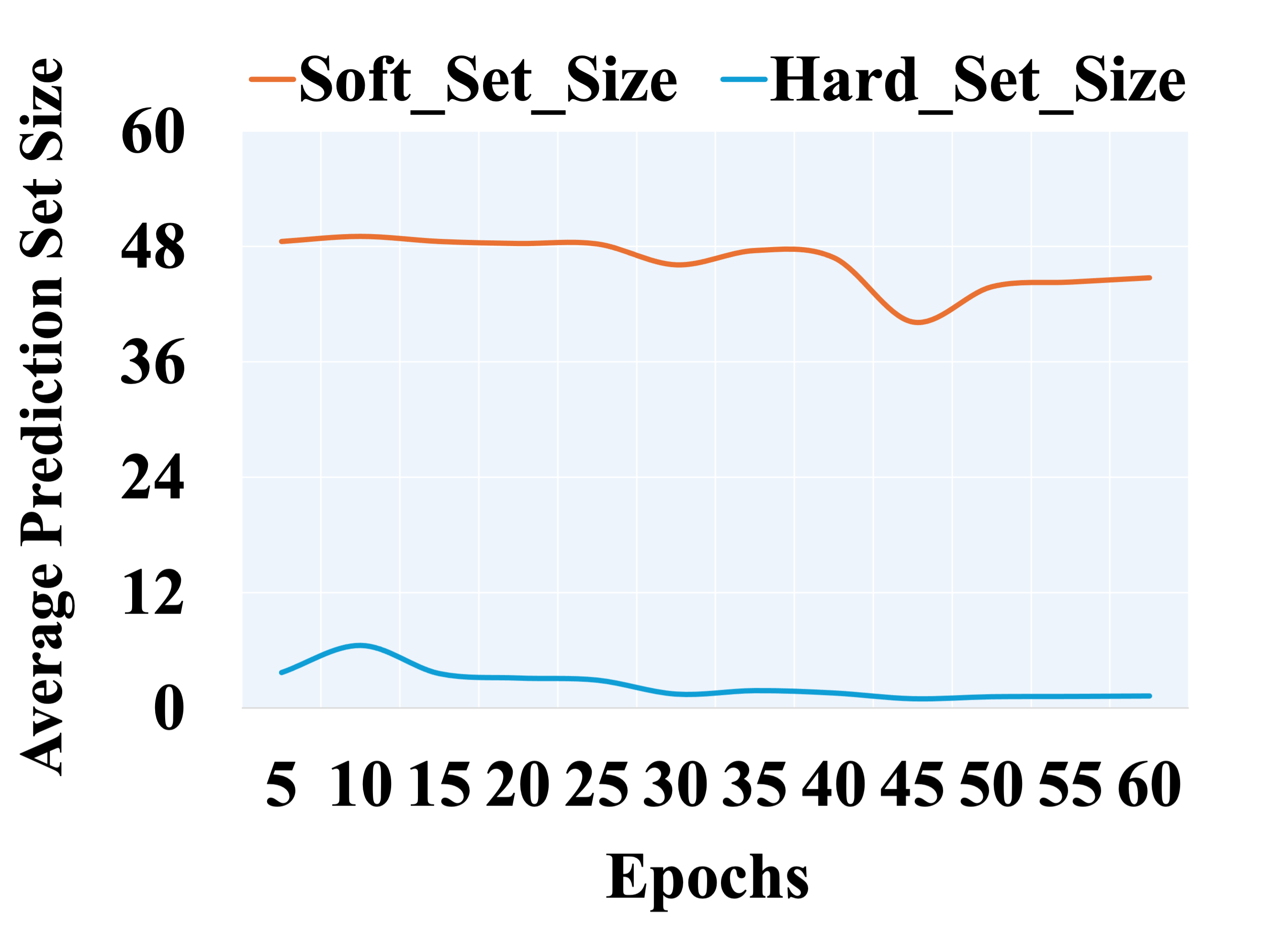}
     \end{minipage}
    \begin{minipage}[t]{0.49\linewidth}
    \centering
    \includegraphics[width=\linewidth]{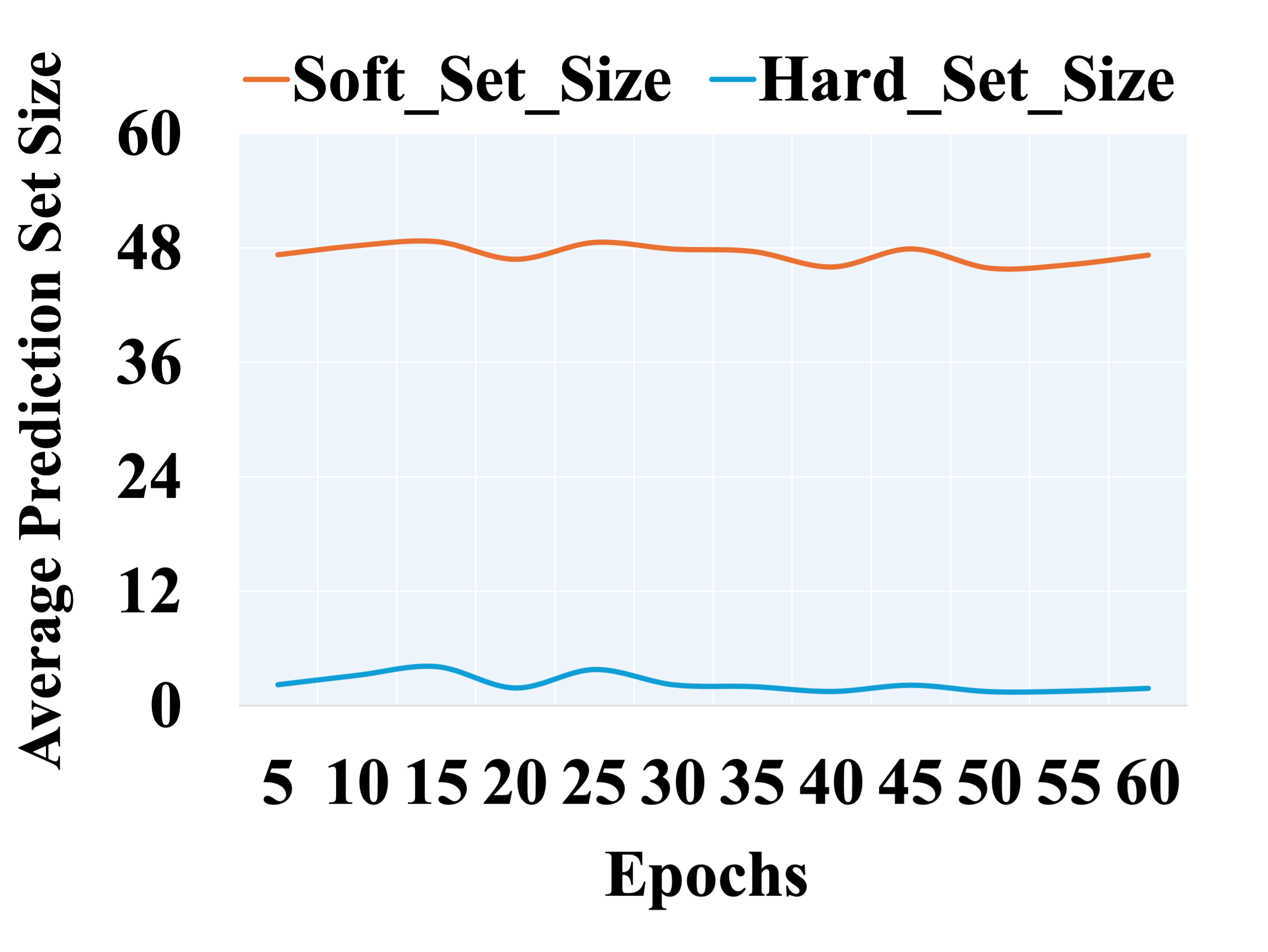}
    \end{minipage}
    \caption{
    \textbf{Large approximation error of the learning objective (soft set size) from the true objective (hard set size) during training of ConfTr \cite{stutz2021learning}} on CIFAR-100 using ResNet (a) and DenseNet (b).  
    The learning objective is the  differentiable measure of prediction set sizes by smooth approximations of the indicator function (e.g., Sigmoid), 
    while the hard set size reflects actual true objectives based on the indicator function.  
    The large and persistent gap between the two highlights the loose approximation introduced by approximated functions, 
    supporting our motivation to avoid indicator approximations and directly optimize a tight upper bound on the true prediction set size.
    }
    \label{fig:conftr_gap}
\end{figure}

Uncertainty Quantification (UQ) \cite{abdar2021review} is critical for the safe and reliable deployment of modern deep learning models, particularly in high-stakes domains such as medical diagnosis~\cite{begoli2019need,yang2021uncertainty} and autonomous driving~\cite{shafaei2018uncertainty,bachute2021autonomous}. 
In these safety-critical settings, unreliable or missing uncertainty estimates can lead to decisions made without calibrated confidence, increasing the risk of costly errors or harmful outcomes.
UQ methods aim to address this challenge by providing confidence-aware predictions that quantify uncertainty alongside predictions.  
Among UQ methods, Conformal Prediction (CP) \cite{vovk2005algorithmic,shafer2008tutorial,angelopoulos2021gentle} has emerged as a principled, distribution-free framework for constructing reliable prediction sets with formal statistical coverage guarantees.

CP is a powerful and general framework for UQ that transforms the output of a machine learning (ML) model into a prediction set with a user-specified coverage guarantee, typically $1 - \alpha$, without requiring assumptions on the data distribution or model class \cite{vovk2005algorithmic,angelopoulos2021gentle}. 
By leveraging a held-out calibration set, CP ensures that the generated prediction sets contain the true label with high probability, even in finite-sample regimes \cite{romano2020classification,angelopoulos2020uncertainty}. Owing to its broad applicability and strong theoretical guarantees, CP has found increasing use in critical applications such as fairness-aware learning and robust decision-making \cite{lu2022fair,lu2022improving}. 
However, a key limitation of standard CP methods is their reliance on post-hoc calibration where the training objective of classification model is not minimizing prediction set size, which often results in overly conservative (i.e., large) prediction sets that limit practical utility \cite{babbarijcai2022utility,straitouriicml2023improving}. 

Recent work~\cite{stutz2021learning,einbinder2022training,shi2025direct,kiyani2024length} directly incorporates the {\it conformal principles},
e.g., minimizing the size of prediction sets,
into model training to produce tighter and more informative prediction sets while preserving valid coverage. 
Rather than treating calibration as a separate post-hoc step,
these methods embed conformal objectives into the learning process (which we refer to as {\it conformal training}), 
enabling the model to learn representations that are inherently aligned with uncertainty quantification goals.

One common approach of conformal training is to replace the discrete prediction set size with a differentiable surrogate.
It is often referred to as the soft set size and constructed by smooth approximations of the indicator function, e.g., the Sigmoid \cite{stutz2021learning} or Gaussian error function \cite{kiyani2024length}. 
While these relaxations enable gradient-based optimization, 
they introduce an inherent mismatch: 
these continuous surrogates cannot precisely approximate the indicator, 
and do not provide a uniform approximation error bound. 
As a result, the training loss may deviate significantly from the true prediction set size, leading to suboptimal or unstable behavior during optimization.
Figure \ref{fig:conftr_gap} illustrates this effect on CIFAR‑100 with both ResNet and DenseNet backbones: 
the optimized soft set size (learning objective) remains dramatically separated from the actual hard set size (true objective) throughout training.
This raises a natural question: \emph{how could we develop a conformal training method that directly optimizes a tight, theoretically controlled upper bound on the true prediction‑set size and dispense with loose surrogates altogether?}

In this paper, we propose a simple yet theoretically grounded cost-sensitive conformal training algorithm that eliminates the need for surrogate approximations of the prediction set indicator. 
Instead of relying on smooth relaxations, we design a differentiable training objective that leverages the true-label rank to align more closely with the goal of conformal prediction. 
Our key insight is that the expected size of conformal prediction sets can be tightly upper bounded by the expected rank of the true label under the predictive scores of model. 
Based on this, we formulate a rank-weighted cross-entropy (RWCE) loss, where each sample is reweighted according to its label rank, yielding a simple rank-weighting scheme that directly incentivizes tighter prediction sets. 
This objective is efficient to compute, easy to implement with standard gradient-based optimization, and requires no additional smoothing parameters. 
We theoretically prove that our proposed loss upper bounds both the expected label rank and, consequently, the expected prediction set size, under mild and empirically verifiable conditions. 
Furthermore, we establish a generalization bound that ensures the empirical loss remains close to its population counterpart with high probability. 
Experiments on multiple benchmark datasets demonstrate that our method produces significantly smaller prediction sets with an average $21.38\%$ reduction while preserving valid coverage.

\noindent {\bf Contributions.} The key contributions of this paper include:
\begin{itemize}
     \item We propose a simple yet effective cost-sensitive conformal training algorithm that avoids surrogate indicator approximations. 
     Our method minimizes a RWCE loss, where the importance weight is derived from the true-label rank, providing a differentiable and practical objective for reducing prediction set size.
     \item We theoretically show that the expected prediction set size is upper bounded by the expected label rank, and that our objective tightly upper bounds this rank under mild conditions. 
     We also establish generalization bounds for RWCE loss, ensuring reliable performance in finite-sample settings.
    \item Experiments on multiple benchmarks demonstrate that our method consistently produces smaller prediction sets than existing conformal training baselines with an average $21.38\%$ reduction, while maintaining valid coverage guarantees.
\end{itemize}

\section{ Related Work }
\label{section:related_work}

\paragraph{Conformal Prediction (CP).} 
CP \cite{angelopoulos2021gentle,vovk2005algorithmic,shafer2008tutorial} is a distribution-free, post-hoc framework for uncertainty quantification that constructs prediction sets with user-defined coverage guarantees by calibrating conformity scores, which quantify how typical a new instance is relative to a labeled reference set. This framework provides rigorous, finite-sample coverage guarantees. CP can be successfully applied to a wide range of tasks, including regression \cite{romano2019conformalized,gibbs2023conformal,gibbs2021adaptive}, classification \cite{romano2020classification}, computer vision \cite{bates2021distribution}, federated learning \cite{lu2023federated,humbert2023one,plassier2023conformal,plassier2024efficient}, online learning \cite{gibbs2021adaptive,xi2025robust}, and adversarial perturbation \cite{baoenhancing,ghosh2023probabilistically}. Its performance is primarily evaluated by two critical and often competing criteria: valid coverage and predictive efficiency \cite{angelopoulos2021uncertainty}. Recently, research in CP has progressed along two major directions: (i) designing novel nonconformity scores and calibration procedures  \citep{angelopoulos2021uncertainty,huang2023conformal,fisch2021few,fisch2021efficient,guan2023localized,ding2024class,kiyani2024length,shi2024conformal,zhu2025predicate,luo2025reliable}, and (ii) developing conformal training techniques \cite{bellotti2021optimized,stutz2021learning,einbinder2022training,yan2024provably,shi2025direct}, which embed CP principles directly into the learning process to jointly optimize coverage and prediction set size. 

\paragraph{Conformal training.} 
Conformal training, initially proposed by Stutz et al. \cite{stutz2021learning}, is an emerging direction that aims to integrate the statistical guarantees of conformal prediction (CP) directly into the model training phase. Unlike standard CP approaches that apply calibration as a separate post-hoc procedure, conformal training incorporates conformity-based objectives into the training loop itself \cite{bellotti2021optimized,stutz2021learning,einbinder2022training,yan2024provably}. 
A central challenge in conformal training lies in formulating a differentiable surrogate for the prediction set size, which is inherently discrete and non-differentiable. 
To address this, existing approaches such as ConfTr~\cite{stutz2021learning}, CPL~\cite{kiyani2024length}, and DPSM \cite{shi2025direct} adopt soft approximations (e.g., Sigmoid or Gaussian error functions) to relax the indicator function that defines whether a label is included in the prediction set. 
These surrogates allow the use of gradient-based optimization but come at the cost of introducing significant approximation error.
As a result, the surrogate objective may poorly reflect the true prediction set behavior, leading to conservative prediction sets, limiting the practical benefits of conformal training. 
Consequently, despite the promise of integrating CP into end-to-end learning, current surrogate-based approaches struggle to faithfully minimize the quantity that matters most: the size of the conformal prediction set under the coverage constraint. 
These limitations motivate the need for more principled objectives that better capture the structure of conformal prediction without relying on soft approximations.

\section{Problem Setup and Motivation}
\label{section:problem_setup}

\textbf{Notations.}
Let $X \in \mathcal{X}$ denote an input instance and $Y \in \mathcal{Y} = \{1, 2, \ldots, K\}$ represent its corresponding true class label, where $K$ is the total number of classes.
Assume that each pair $(X, Y)$ is drawn i.i.d. from a joint distribution $\mathcal{P}$ over the space $\mathcal{X} \times \mathcal{Y}$. The training dataset is denoted as $\mathcal{D}_{\text{train}} = \{(X_i, Y_i)\}_{i=1}^n$ consisting of $n$ i.i.d. samples.
Additionally, we further assume access to two disjoint datasets: a calibration set $\mathcal{D}_{\text{cal}}$ and a test set $\mathcal{D}_{\text{test}}$, where $|\mathcal{D}_{\text{cal}}| = m$.
We use $\mathbf{1}[\cdot]$ to denote the indicator function.
Additionally, we define a mini-batch $\mathcal{B} = \{(X_i, Y_i) \mid i \in I_s\}$ of size $s$ by sampling indices $I_s \subset \{1, 2, \ldots, n\}$ with $|I_s| = s$.
We consider a probabilistic classifier $f \in \mathcal{F}$ that maps inputs to the probability simplex, i.e., $f(X): \mathcal{X} \rightarrow \Delta_+^K$, where $\Delta_+^K$ denotes the $(K{-}1)$-dimensional simplex.
The output $f(X)_y$ indicates the predicted confidence (e.g., Softmax score) assigned to class $y$.
Here, $\mathcal{F}$ refers to the hypothesis class from which the classifier is selected.
The label rank of $y$ on $X$ predicted by the ML model $f$ is defined by $R_f(X, y) = \sum_{l=1}^K \indicator[ f(X)_l \geq f(X)_y ]$.
We omit the subscript $f$ when unambiguous.

{\bf Conformal Prediction.}
Conformal prediction (CP) adjusts the outputs of a pre-trained model using a {\it nonconformity} scoring function.
We define a nonconformity score function as $S: \mathcal{X} \times \mathcal{Y} \rightarrow \mathbb{R}$, which quantifies how typical a candidate label is for a given input \cite{vovk2005algorithmic}.
For any training example $(X_i, Y_i) \in \mathcal{D}_{\text{cal}}$, we write its nonconformity score as $S_{f,i} = S_f(X_i, Y_i)$.
Let $S_{f,(j)}$ denote the $j$-th order statistic (i.e., the $j$-th smallest value) in the set $\{S_{f,i}\}_{i=1}^m$. A variety of scoring functions have been introduced in the literature \cite{shafer2008tutorial, angelopoulos2021uncertainty, huang2023conformal}.
In this work, we focus on several representative approaches: Homogeneous Prediction Sets (HPS) \cite{sadinle2019least}, Adaptive Prediction Sets (APS) \cite{romano2020classification}, Regularized Adaptive Prediction Sets (RAPS) \cite{angelopoulos2021uncertainty}, and Sorted Adaptive Prediction Sets (SAPS) \cite{huang2024conformal}.
We assume that the nonconformity scores are distinct (i.e., no ties), following \cite{romano2020classification}.

Given a predefined miscoverage rate $\alpha \in (0,1)$ and a test input-label pair $(X_{\text{test}}, Y_{\text{test}})$, the goal of CP is to construct a prediction set $\mathcal{C}_f: \mathcal{X} \rightarrow 2^{\mathcal{Y}}$ such that the true label is contained in the set with high probability:
\begin{align}
\label{eq:cp_coverage}
\P_{(X_\test, Y_\test)\backsim \calP} \{ Y_{\test} \in \calC_f(X_{\test}) \} 
\geq 
1- \alpha
.
\end{align}

To achieve this guarantee, the prediction set is formed by thresholding nonconformity scores using a quantile statistic estimated from a held-out calibration dataset $\mathcal{D}_{\text{cal}}$:
\begin{align*}
\widehat C_f(X) = \{ y \in \calY : S_f(X, y) \leq \widehat Q \}
\end{align*}
where $\widehat Q$ is determined by the empirical quantile computed as the $\lceil (1 - \alpha)(m + 1) \rceil$-th smallest value among $\{ S_{f,i} \}_{i=1}^m$, with $m = |\mathcal{D}_{\text{cal}}|$.


\textbf{Conformal training.}
Conformal training embeds the desiderata of conformal prediction directly into the learning objective of a classifier \(f\), so that the model is optimized not only for point-wise accuracy but also for the quality of the prediction sets it will output~\cite{bellotti2021optimized,stutz2021learning,einbinder2022training,shi2025direct}. 
Among several proposed conformal losses (see Appendix~\ref{appendix:section:additional_details}), we focus on the particularly natural objective that penalizes the expected size of the conformal prediction set:
\begin{align}
\label{eq:conformal_training_conftr_population}
\calL_c(f) 
\! = \! \E_X[|\calC_f(X)|] 
\! = \! \E_X \Big [ \sum_{y \in \calY} \indicator \big [ S_f(X, y) \! \leq Q \! \big ] \Big ]
.
\end{align}

As the indicator in~\eqref{eq:conformal_training_conftr_population} is non-differentiable, existing methods, such as ConfTr \cite{stutz2021learning} and CPL \cite{kiyani2024length}, replace it with a smooth surrogate, e.g., a Sigmoid function.
Then the effective conformal loss $\widetilde \calL_c$ is built by 
\begin{align}
\label{eq:conformal_training_conftr_SA}
\widetilde \calL_c(f) 
= &
\E_{X} \Big [ \sum_{y \in \calY } \tilde \indicator \big [ S_f(X, y) \leq Q \big ] \Big ]
,
\end{align}
where $\tilde \indicator[\cdot]$ is a smoothed estimator for the indicator function $\indicator[\cdot]$ and defined by a Sigmoid function \cite{stutz2021learning}, i.e.,
$\tilde \indicator[ S \leq q ] = 1/(1+\exp(-(q-S)/\tau))$, or Gaussian error function \cite{kiyani2024length}, i.e.,
$\tilde \indicator[ S \leq q ] = \frac{1}{2} \big (1 + \text{erf} ((q-S)/\sqrt{2} \sigma) \big )$, where $\text{erf} (a) = 2/\sqrt{\pi} \int^a_0 e^{- t^2}$ and $\sigma$ controls the smoothness.

Although smooth relaxations make the prediction set size differentiable, they fail to capture the binary decision structure inherent to CP. 
Specifically, prediction sets are defined by a hard thresholding rule: a label is either included or excluded based on whether its conformity score falls below a calibrated quantile. 
Smooth surrogates, such as the Sigmoid or Gaussian error function, can only approximate this decision boundary continuously, and inevitably introduce approximation error near the threshold. 
Since these surrogates lack uniform approximation guarantees, the training objective may diverge from the true prediction set size, resulting in misaligned gradients and suboptimal performance.
Recall that Figure~\ref{fig:conftr_gap}, this misalignment manifests empirically on CIFAR-100 with both ResNet and DenseNet backbones: the learning objective systematically overestimates the true objective across training epochs, resulting in a persistent optimization gap and an unreliable training signal.

\emph{Therefore, our goal is to propose a conformal training method which directly optimize a tight, differentiable, and controllable bound on the true prediction set size.}


\section{ Cost-Sensitive Conformal Training }
\label{section:method}

In this section, we introduce our proposed framework, \emph{Rank Weighted Cross-Entropy (RWCE)}, which incorporates label-rank information into the training objective to align with conformal prediction goals. We then provide a theoretical analysis of RWCE, establishing learning bounds that justify its ability to tightly control prediction set size while ensuring reliable generalization.

\subsection{Importance Weighting with Label Rank}
\label{subsection:iw_with_label_rank}

Our approach avoids loose indicator approximations by directly leveraging the rank of the true label, a discrete statistic that reflects how prominently the correct class is scored by the model. 
Intuitively, a model that consistently ranks the true label near the top is more likely to produce smaller prediction sets after calibration, as fewer incorrect classes will score above the threshold. 
Therefore, the expected rank naturally serves as a proxy for prediction set size.
The key insight of our work is to formalize and exploit this connection: by minimizing a rank-weighted cross-entropy loss, our method provides a differentiable and practical objective that aligns training directly with the efficiency goal of conformal prediction without relying on any surrogate relaxation.

To incorporate this idea into a differentiable training loss, we introduce an importance weighting scheme via the true-label rank. 
We rewrite the true label rank as $R_f(X, Y)$ to explicitly show the impact of model $f$, where it denotes the position of the true label \(Y\) in the sorted list of class scores.
Crucially, $R_f(X, Y)$ is treated as a fixed, non-differentiable weight: it serves purely as a cost signal, and we do not backpropagate through the rank itself.
Harder examples (those ranked higher) incur a larger penalty, while easier examples (those ranked lower) are rewarded. 
By scaling the standard cross-entropy loss with the rank, we obtain a cost-sensitive loss function that emphasizes reducing the frequency and magnitude of low-rank predictions. 
This weighting acts as a soft prioritization mechanism: samples with high confidence in the correct class are minimally penalized, while uncertain samples contribute more to the overall loss. 
Compared to other soft-set-size surrogates that apply uniform smoothing to all samples, our approach focuses optimization effort on correcting misranked examples in a way that is tightly coupled to conformal prediction efficiency.
We define the \emph{population loss} for the rank-weighted cross-entropy (RWCE) as:
\begin{align}
\label{eq:objective_populaton}
\calL (f)\triangleq \E_{(X,Y) \sim \calD} \left[ R_f(X, Y) \cdot \ell_f(X, Y) \right],
\end{align}
where \(\ell_f(X,Y)\) is the standard cross-entropy loss.

The corresponding empirical objective is:
\begin{align}
\label{eq:objective_empirical}
\widehat \calL (f)\triangleq \frac{1}{n} \sum_{i=1}^n R_f(X_i, Y_i) \cdot \ell_f(X_i, Y_i).
\end{align}

This loss is differentiable and avoids the surrogate approximations (e.g., Sigmoid function) employed in prior work for conformal training. As we show in the next subsection, this rank-weighted loss not only aligns closely with the conformal objective but also admits provable guarantees.

We optimize the objective in Eq.~\eqref{eq:objective_empirical} using stochastic gradient descent. 
The optimization algorithm is summarized in Algorithm~\ref{alg:CTLR}.
At each iteration $t$, 
we first sample a mini-batch $\calB_t$ and compute the softmax scores $f_t(X_j)$ for each sample in this batch, using the current model $f_t$ (see Line \ref{alg:line:sample_batch} and \ref{alg:line:compute_softmax}). 
Then, we compute the true label rank $R_{f_t}(X_j, Y_j)$ and cross-entropy loss $\ell_{f_t}(X_j, Y_j)$, respectively (see Line \ref{alg:line:compute_rank} and \ref{alg:line:compute_ce}).  
Next, we compute the RWCE loss $\widehat{\calL}(f_t)$ (see Line \ref{alg:line:compute_RWCE}).  
Finally, we update the model parameters $f_{t+1}$ (see Line \ref{alg:line:update_model}). 
Once training is finished, we output the trained classification model $f_T$ (Line \ref{alg:line:output}).
This procedure is simple, efficient (with only \(\mathcal{O}(K)\) cost to compute ranks), and introduces no additional hyperparameters such as temperature or smoothing terms in surrogate-based objectives.

\begin{algorithm}[t]
\caption{Conformal Training with Label Rank}
\label{alg:CTLR}
\begin{algorithmic}[1]

\STATE \textbf{Input}: Training dataset $\calD_\tr$, learning-rate $\eta>0$, batch size $s$

\STATE Randomly initialize the deep neural network $f_0$
\label{alg:line:initialize}

    \FOR{$t \leftarrow 0 : T-1$}

        \STATE Randomly sample a batch $\calB_t \subset \calD_\tr$
        \label{alg:line:sample_batch}

        \STATE Compute softmax scores $f_t(X_j)$ for all $X_j \in \calB_t$
        \label{alg:line:compute_softmax}

        \STATE Compute true label ranks $R_{f_t}(X_j, Y_j)$ for each $(X_j, Y_j)$
        \label{alg:line:compute_rank}

        \STATE Compute cross-entropy loss $\ell_{f_t}(X_j, Y_j)$ for each $(X_j, Y_j)$
        \label{alg:line:compute_ce}
        
       \STATE Compute RWCE loss:
    $
    \widehat{\calL}(f_t) \leftarrow \frac{1}{s} \sum_{j=1}^s R_{f_t}(X_j, Y_j) \cdot \ell_{f_t}(X_j, Y_j)
    $ on batch $\calB_t$
    \label{alg:line:compute_RWCE}
    
    \STATE Update model: $f_{t+1} \leftarrow f_t - \eta \cdot \nabla_f \widehat{\calL}(f_t)$
    \label{alg:line:update_model}
    
    \ENDFOR

\STATE \textbf{Output}: the classification model $f_T$
\label{alg:line:output}

\end{algorithmic}
\end{algorithm}

\subsection{Learning Bounds of Proposed Objective}
\label{subsection:learning_bounds}

We now provide theoretical justifications for using the RWCE loss. 
Our analysis proceeds in three parts: 
(i) we show that the expected rank upper bounds the expected size of conformal prediction sets, 
(ii) we show that our proposed objective upper bounds the expected rank under mild conditions, 
and (iii) we show that our empirical loss generalizes well with standard concentration bounds.


We begin by formalizing the connection between the true-label rank and the prediction set size. 
The following theorem shows that, under any conformity score and calibration procedure, the expected size of the conformal prediction set is upper bounded by the expected rank of the true label, up to a small calibration slack:
\begin{theorem}
\label{theorem:expected_rank_upper_bounds_expected_size}
For any nonconformity score $S$ and coverage level \( (1-\alpha) \in (0, 1) \), the following holds:
\begin{align}
\label{eq:theorem:expected_rank_upper_bounds_expected_size}
\E [ | \widehat C(X) | ]
\leq 
\E [ R(X, Y) ]
+ K \bigg( 1 - \alpha + \frac{1}{n+1} \bigg)
.
\end{align}
\end{theorem}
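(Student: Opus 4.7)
My plan is to prove the inequality pointwise via a case split on whether the true label $Y$ is contained in $\widehat C(X)$, and then to take expectations using the standard upper coverage bound from conformal prediction.

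I would begin by decomposing
\begin{align*}
|\widehat C(X)|
= |\widehat C(X)|\, \indicator[Y \in \widehat C(X)]
+ |\widehat C(X)|\, \indicator[Y \notin \widehat C(X)] .
\end{align*}
For the first summand I would use the trivial bound $|\widehat C(X)| \leq K$. For the second summand I would argue $|\widehat C(X)| \leq R(X,Y)$. Indeed, $Y \notin \widehat C(X)$ forces $S(X,Y) > \widehat Q$, so every $l \in \widehat C(X)$ must satisfy $S(X,l) \leq \widehat Q < S(X,Y)$. For the nonconformity scores considered in the paper (HPS, APS, RAPS, SAPS), smaller $S$-values correspond to larger softmax scores, so $S(X,l) < S(X,Y)$ implies $f(X)_l > f(X)_Y$, and hence $R(X,l) < R(X,Y)$. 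Thus $\widehat C(X) \subseteq \{l : R(X,l) < R(X,Y)\}$, a set with exactly $R(X,Y) - 1$ elements (by uniqueness of ranks under the no-ties assumption), giving $|\widehat C(X)| \leq R(X,Y) - 1 \leq R(X,Y)$.

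Combining the two cases and noting $R(X,Y)\,\indicator[Y \notin \widehat C(X)] \leq R(X,Y)$ yields the pointwise inequality
\begin{align*}
|\widehat C(X)| \leq K\, \indicator[Y \in \widehat C(X)] + R(X,Y).
\end{align*}
Taking expectations and invoking the standard two-sided CP coverage upper bound $\P(Y \in \widehat C(X)) \leq 1 - \alpha + 1/(n+1)$, which holds under exchangeability of the calibration and test points, I obtain
\begin{align*}
\E[|\widehat C(X)|]
\leq K \Big(1 - \alpha + \tfrac{1}{n+1}\Big) + \E[R(X,Y)],
\end{align*}
as claimed.

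\textbf{Main obstacle.} The decisive step is the bound $|\widehat C(X)| \leq R(X,Y)$ in the non-coverage case, which relies on the nonconformity score being rank-monotone with respect to $f$. This is immediate for HPS where $S(X,y) = 1 - f(X)_y$, and is essentially built into APS/RAPS by their cumulative-probability construction; SAPS needs slightly more care because of its sort-and-rescale step, but its ordering of labels still agrees with that of $f$. The theorem's ``for any nonconformity score'' should therefore be read as covering the score families actually treated in the paper: for a truly arbitrary $S$ the non-coverage case would revert to the trivial $|\widehat C(X)| \leq K$ and the stated bound would no longer be available without an extra monotonicity assumption.
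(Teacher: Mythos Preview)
Your proof is correct and arrives at the same destination as the paper, but via a noticeably cleaner route. The paper decomposes $|\widehat C(X)| = \sum_y \indicator[S(X,y)\le\widehat Q]$ label by label, splitting each summand according to $\indicator[R(X,Y)\ge R(X,y)]$ versus $\indicator[R(X,Y)<R(X,y)]$, and then further by $\indicator[S(X,Y)\le\widehat Q]$ versus $\indicator[S(X,Y)>\widehat Q]$; four technical lemmas shuffle these indicator products until they collapse to $K\cdot\P(Y\in\widehat C(X)) + \E[R(X,Y)]$. You instead split once on the coverage event $\{Y\in\widehat C(X)\}$ and handle each branch with a single observation: the trivial bound $K$ on the coverage side, and the containment $\widehat C(X)\subseteq\{l:R(X,l)<R(X,Y)\}$ on the non-coverage side. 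Both arguments hinge on the identical rank-monotonicity fact (the paper's Lemma~\ref{lemma:useful_inequalities_in_cp_hps_aps}, part (2), is exactly your non-coverage inclusion restated as an indicator identity), and both finish with the same upper coverage bound. What your approach buys is brevity and transparency; what the paper's buys is perhaps a clearer audit trail of which indicator manipulations require the score-monotonicity assumption. Your caveat about ``any nonconformity score'' is also well taken and matches the paper's own restriction in its lemmas to HPS/APS-type scores.
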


\begin{remark}
Theorem~\ref{theorem:expected_rank_upper_bounds_expected_size} implies that reducing the average
rank directly tightens the prediction set, with an
\(\mathcal{O}(K/n)\) calibration slack, which is a model-independent constant.
Although this slack may be numerically large, it does not
depend on the classifier and cannot be reduced through training.
Thus, the only optimizable part in the bound, expected rank, is a \emph{sound surrogate} for expected set size.
\end{remark}

However, directly minimizing the expected rank is still not ideal, as it is a discrete, non-differentiable quantity. 
Instead, we minimize our proposed objective: a rank-weighted version of the cross-entropy loss. 
To show this objective is sound, we now prove that, under a mild condition, it upper bounds the expected rank (up to a constant shift that does not affect optimization). 
This bridges the gap between the desired rank minimization and the tractable training loss.

\begin{theorem}
\label{theorem:new_objective_upper_bounds_expected_rank}
Suppose that $\E[(R(X, Y) - 1)(\ell(X, Y) - 1)] \geq -\E[\ell(X, Y)]$ or $\ell$ is the cross-entropy loss.
Then the following inequality holds:
\begin{align*}
\E[R(X, Y)-1]
\leq 
\E[ R(X, Y) \cdot \ell(X, Y)]
=
\calL(f)
.
\end{align*}
\end{theorem}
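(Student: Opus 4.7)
My plan is to reduce the target inequality to a tractable form and then handle the two alternative hypotheses separately. Writing $R = R(X,Y)$ and $\ell = \ell(X,Y)$ and expanding the product, we have $\E[(R-1)(\ell-1)] = \E[R\ell] - \E[R] - \E[\ell] + 1$, so the target $\E[R-1] \leq \E[R\ell]$ is algebraically equivalent to $\E[(R-1)(\ell-1)] \geq -\E[\ell]$. Under the first stated hypothesis this is literally the assumption, so that case is immediate.

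For the cross-entropy case, I would aim for the stronger pointwise inequality $R \cdot \ell \geq R - 1$, from which the desired expectation bound follows at once (and which also recovers the first hypothesis, since it gives $R(\ell-1) \geq -1$ and hence $(R-1)(\ell-1) \geq -\ell$). The key observation is a simplex-budget argument: since $\ell = -\ln f(X)_Y$ and the $K$ softmax scores sum to one, at most $\lfloor 1/f(X)_Y \rfloor$ classes can have score at least $f(X)_Y$. Therefore $R \leq 1/f(X)_Y = e^\ell$, equivalently $\ell \geq \ln R$.

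I would then combine this with the elementary calculus inequality $\ln R \geq 1 - 1/R$ for $R \geq 1$, which is verified by observing that $h(R) := \ln R - 1 + 1/R$ satisfies $h(1)=0$ and $h'(R) = (R-1)/R^2 \geq 0$. Chaining the two gives $\ell \geq 1 - 1/R$, and multiplying by $R \geq 1$ yields the pointwise bound $R \ell \geq R - 1$. Taking expectations completes the proof in the cross-entropy case.

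I do not expect any technical obstacle beyond spotting the budget inequality $R \leq e^\ell$, which ties the integer label rank to the continuous loss through the probability-simplex constraint; once this is in hand, both branches of the theorem reduce to one- or two-line arguments. One minor care point is to note that $R \geq 1$ always, so the multiplication step preserves the inequality direction even when $\ell < 1$, where the raw loss-level inequality $\ell \geq 1-1/R$ could otherwise be misapplied.
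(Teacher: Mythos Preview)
Your proof is correct and follows essentially the same route as the paper: Case~(i) is the identical algebraic unpacking, and Case~(ii) rests on the same simplex-budget inequality $R\,f(X)_Y\le 1$ combined with the standard $\ln u \le u-1$ bound. The only cosmetic difference is that you apply the log inequality at $u=1/R$ to obtain the pointwise bound $R\ell\ge R-1$ directly, whereas the paper applies it at $u=f(X)_Y$ and phrases the chain in expectation; both arguments are equivalent and equally short.
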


\begin{remark}
\label{remark:new_objective_bounds_rank}
This result shows that our objective \(\mathcal{L}(f)\) serves as an upper bound on the expected rank (up to a constant $1$), meaning that optimizing the RWCE loss leads to models with low expected rank and therefore small prediction sets by Theorem~\ref{theorem:expected_rank_upper_bounds_expected_size}. 
Importantly, the additive constant $1$ does not influence learning because it is independent of model parameters. 
Thus, our method can be viewed as optimizing a differentiable, rank-weighted variant of the cross-entropy loss. 
The covariance condition is mild and usually holds in practice.
It says that the rank prediction is positively correlated to the CE loss.
We validate this empirically in 
Figure~\ref{fig:rank_ce_alignment}.
\end{remark}

Theorem~\ref{theorem:expected_rank_upper_bounds_expected_size} establishes a fundamental connection between the expected prediction set size and the expected rank of the true label. 
This result formalizes the intuition that the more confidently a model scores the correct class (i.e., assigns it a high rank), the fewer classes are needed to meet the conformal coverage guarantee. 
In practice, this means that by designing models that minimize expected rank, we are implicitly driving down the prediction set size to achieve better predictive efficiency.

Theorem 2 strengthens this insight by showing that RWCE loss function directly upper bounds the expected rank under a mild and realistic correlation condition. 
This result provides strong theoretical justification for our proposed training objective: not only does it approximate the desired metric (expected rank), it does so in a provably tight and optimizable way. 


Finally, to ensure that our proposed training objective is learnable from finite data, we establish a standard generalization bound. This shows that the empirical version of our loss concentrates tightly around its population counterpart:
\begin{corollary}
\label{corrolary}
(Generalization error bound, \cite{mohri2018foundations})
\begin{align*}
\sup_{f \in \calH} ~ \big (\calL(f) - \widehat \calL(f) \big )
\leq 
\sqrt{ \frac{ \log(1/\delta) }{ n } }
.
\end{align*}
\end{corollary}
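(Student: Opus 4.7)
The plan is to derive the bound via a standard uniform-convergence argument based on McDiarmid's inequality combined with symmetrization, treating the per-sample quantity $Z_i(f) := R_f(X_i,Y_i)\cdot \ell_f(X_i,Y_i)$ as the base random variable. First I would verify the bounded-differences property: since the true-label rank satisfies $R_f(X,Y)\in\{1,\dots,K\}$ and the cross-entropy loss $\ell_f$ is assumed bounded (say by some $B$, either by construction, e.g., clipped softmax, or by an implicit assumption on $\mathcal{H}$), replacing a single $(X_i,Y_i)$ in the training sample changes $\sup_{f\in\mathcal{H}}(\mathcal{L}(f)-\widehat{\mathcal{L}}(f))$ by at most $KB/n$. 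McDiarmid's inequality then yields, with probability at least $1-\delta$,
\begin{equation*}
\sup_{f\in\mathcal{H}}\bigl(\mathcal{L}(f)-\widehat{\mathcal{L}}(f)\bigr)
\;\le\;
\mathbb{E}\!\left[\sup_{f\in\mathcal{H}}\bigl(\mathcal{L}(f)-\widehat{\mathcal{L}}(f)\bigr)\right] + KB\sqrt{\frac{\log(1/\delta)}{2n}},
\end{equation*}
so after absorbing the constant $KB$ into the stated $\sqrt{\log(1/\delta)/n}$ (as is customary in the Mohri-style phrasing), the remaining task is to control the expected supremum.

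Next I would bound the expected supremum by a Rademacher complexity of the composite loss class $\mathcal{G}=\{(x,y)\mapsto R_f(x,y)\cdot\ell_f(x,y):f\in\mathcal{H}\}$ via the standard symmetrization argument, obtaining a $2\mathfrak{R}_n(\mathcal{G})$ term. To relate $\mathfrak{R}_n(\mathcal{G})$ back to the intrinsic complexity of $\mathcal{H}$, I would use a contraction / Lipschitz argument: conditional on the class-score vector $f(X)$, the map from scores to $R_f\cdot\ell_f$ is Lipschitz in the softmax outputs (up to the rank factor, which is uniformly bounded by $K$), so Talagrand's contraction lemma gives $\mathfrak{R}_n(\mathcal{G})\le KB\cdot \mathfrak{R}_n(\mathcal{H})$. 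For a well-behaved hypothesis class (e.g., bounded-norm networks) this scales as $O(1/\sqrt{n})$, which combines with the McDiarmid tail to give the displayed $O(\sqrt{\log(1/\delta)/n})$ rate.

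The main obstacle is the non-smoothness introduced by the label-rank factor $R_f(X,Y)$: it is a piecewise-constant functional of the scores and jumps whenever two score coordinates cross, so it is not Lipschitz in $f$ and the contraction step above must be handled carefully. I would address this by exploiting the fact that $R_f$ is integer-valued and uniformly bounded by $K$, so rather than Lipschitzifying it, I would peel the product by writing $R_f\cdot\ell_f=\sum_{k=1}^{K}\mathbf{1}[R_f\ge k]\cdot\ell_f$ and bound each Rademacher average separately, using that $\{(x,y)\mapsto\mathbf{1}[R_f(x,y)\ge k]\}$ has controlled combinatorial (e.g., VC or Natarajan) complexity inherited from the pairwise-comparison structure of the score function. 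Summing the $K$ contributions and invoking the contraction lemma only on the Lipschitz cross-entropy factor $\ell_f$ then yields the desired $\tilde{O}(1/\sqrt{n})$ Rademacher bound, completing the argument.
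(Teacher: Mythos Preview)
The paper does not actually prove this corollary: it is stated with the parenthetical ``(Generalization error bound, \cite{mohri2018foundations})'' and is simply invoked as a standard textbook result, with no argument given in either the main text or the appendix (only Theorems~\ref{theorem:expected_rank_upper_bounds_expected_size} and~\ref{theorem:new_objective_upper_bounds_expected_rank} and their supporting lemmas receive proofs). So there is nothing in the paper to compare your proposal against; you have written substantially more than the authors did.

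That said, your sketch is the right template for what such a proof would look like, and your identification of the main obstacle---the non-Lipschitz, piecewise-constant rank factor $R_f$---is exactly the place where a careful argument is needed and where the paper is silent. Two cautions. First, the bound as written in the paper, $\sqrt{\log(1/\delta)/n}$, is clearly informal: it omits the Rademacher/complexity term entirely and suppresses the $K$- and $B$-dependent constants you correctly track; your McDiarmid step alone already produces $KB\sqrt{\log(1/\delta)/(2n)}$, so the displayed inequality should be read as a rate statement rather than a literal bound. Second, in your peeling step $R_f\cdot\ell_f=\sum_{k=1}^K \mathbf{1}[R_f\ge k]\cdot\ell_f$, bounding the Rademacher complexity of each product $\mathbf{1}[R_f\ge k]\cdot\ell_f$ is not automatic: Rademacher complexities of products of function classes do not in general factor, and Talagrand contraction applies to Lipschitz post-compositions, not to multiplication by a data-dependent indicator from a separate class. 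You would need either a covering-number argument for the product class or a structural lemma (e.g., bounding the product's complexity via the sum of the factors' complexities under boundedness, as in results for VC-subgraph products). This is fixable but is more than a one-line invocation of contraction.
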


\begin{remark}
This result ensures that minimizing the empirical loss \(\widehat{\mathcal{L}}(f)\) effectively minimizes the population loss \(\mathcal{L}(f)\) with high probability, even in finite-sample settings. Together with Theorems~\ref{theorem:expected_rank_upper_bounds_expected_size} and~\ref{theorem:new_objective_upper_bounds_expected_rank}, this completes our theoretical justification for using RWCE for conformal training.
\end{remark}

\begin{table*}[!t]
\centering
\resizebox{\textwidth}{!}{
\begin{NiceTabular}{@{}c|cccc|cccc @{}}
\toprule
\multirow{2}{*}{Model} & \multicolumn{4}{c|}{HPS} & \multicolumn{4}{c}{APS} \\ 
\cmidrule(lr){2-5} \cmidrule(lr){6-9}
& \texttt{CE} & \texttt{CUT} & \texttt{ConfTr} & \texttt{RWCE} & \texttt{CE} & \texttt{CUT} & \texttt{ConfTr} & \texttt{RWCE}  \\ 
\midrule
\Block{1-*}{Caltech-101}
\\
\midrule
ResNet   
& 1.52 $\pm$  0.045 & 1.55 $\pm$ 0.039 & 1.26 $\pm$ 0.03 & \textbf{0.96 $\pm$ 0.008} ($\downarrow$ 23.81\%) 
& 4.96 $\pm$ 0.094 & 4.89 $\pm$ 0.095 & 4.25 $\pm$ 0.081 & \textbf{1.33 $\pm$ 0.017} ($\downarrow$ 68.71\%) 
\\ 
DenseNet 
& 3.51 $\pm$ 0.10  & 1.66 $\pm$ 0.038 & 3.13 $\pm$ 0.07  & \textbf{0.94 $\pm$ 0.005} ($\downarrow$ 43.37\%) 
& 8.92  $\pm$ 0.18 & 4.60 $\pm$ 0.078  & 9.69 $\pm$ 0.22  & \textbf{1.27 $\pm$ 0.008} ($\downarrow$ 72.39\%) 
\\ 

\midrule
\Block{1-*}{CIFAR-100}
\\
\midrule
ResNet   
& 3.39 $\pm$ 0.10  & 2.91 $\pm$ 0.08  & 3.98 $\pm$ 0.077  & \textbf{2.68 $\pm$ 0.083}  ($\downarrow$ 7.90\%) 
& 3.98 $\pm$ 0.13 & 3.49 $\pm$ 0.104 & 5.13 $\pm$ 0.117  & \textbf{3.08 $\pm$ 0.078} ($\downarrow$ 11.75\%)
\\ 
DenseNet 
& 2.59 $\pm$ 0.053 & 2.07 $\pm$ 0.06  & 2.19 $\pm$ 0.034  & \textbf{2.04 $\pm$0.051}  ($\downarrow$ 1.45\%) 
& 3.38 $\pm$ 0.12 & \textbf{2.19 $\pm$ 0.060} & 3.04 $\pm$ 0.069   &  2.335 $\pm$ 0.076 ($\uparrow$ 6.62\%)
\\ 

\midrule
\Block{1-*}{iNaturalist}
\\
\midrule
ResNet   
& 98.69 $\pm$ 8.86 & 73.19 $\pm$ 3.13 & 76.30 $\pm$ 3.12 & \textbf{69.05 $\pm$ 2.56} ($\downarrow$ 5.66\%) 
& 95.18 $\pm$ 3.50 & 79.58 $\pm$ 2.87 & 87.80 $\pm$ 1.97   & \textbf{73.39 $\pm$2.32} ($\downarrow$  7.78\%) 
\\ 
DenseNet 
& 93.18 $\pm$ 2.92 & 74.11 $\pm$ 2.35 & 72.25 $\pm$ 2.07 & \textbf{67.17 $\pm$ 2.07} ($\downarrow$ 7.03\%) 
& 101.55 $\pm$ 2.97 & 87.27 $\pm$ 2.30 & 92.88 $\pm$ 2.89 & \textbf{75.65 $\pm$ 2.44} ($\downarrow$ 13.31\%)
\\ 

\bottomrule
\end{NiceTabular}
}
\caption{ \textbf{Overall comparison on three datasets and calibrated by HPS and APS on DenseNet and ResNet} with $\alpha = 0.1$.
All models are evaluated under both HPS and APS calibration strategies to assess robustness across scoring functions.
We report the mean and standard deviation of the reported APSS values over 10 independent runs.
We benchmark four methods: standard CE, CUT, ConfTr, and  RWCE.
Arrows $\downarrow$ and $\uparrow$ indicate improvements or degradations in predictive efficiency relative to the best baseline.
Overall, RWCE consistently produces the smallest prediction sets across all datasets and evaluation metrics, demonstrating a relative improvement of $21.38\%$ in average prediction set size—highlighting its superior calibration quality and generalization capability.
}
\label{tab:cvg_set_hps_train_main_part}
\end{table*}

\section{ Experiments }
\label{section:experiments}

\subsection{Experiment Setup}
\label{subsection:experiment_setup}

\textbf{Datasets.} 
We firstly evaluate our method on three widely used image classification benchmarks with varying levels of label granularity and class imbalance: CIFAR-100~\cite{krizhevsky2009learning}, Caltech-101~\cite{fei2006one}, and iNaturalist \cite{van2018iNaturalist}. 
Each dataset is randomly partitioned into training, calibration, and testing subsets following standard conformal prediction procedures.  Table~\ref{tab:Data_stat} in Appendix summarizes the key statistics, including the number of classes, sample counts, and imbalance ratios. 
We also conduct experiments on SST-5 \cite{socher2013recursive}, a text classification benchmarks. 

\textbf{Backbone Models.} 
We adopt two widely used convolutional architectures, ResNet~\cite{he2016deep} and DenseNet~\cite{huang2017densely}, as deep models in our image classification experiments. 
Detailed hyperparameters for each configuration are provided in Appendix Table~\ref{tab:finetune_hyper_params}. 
For SST-5, we employed the BERT \cite{devlin2019bert} model.


\textbf{Nonconformity Scores.} 
We adopt four widely used nonconformity scoring functions in conformal prediction: HPS~\cite{vovk2005algorithmic, lei2013distribution}, APS~\cite{romano2020classification}, RAPS \cite{angelopoulosraps}, and SAPS \cite{huang2023uncertainty}. 
After training, each model is evaluated using HPS, APS, RAPS and SAPS based on the average prediction set size (APSS) at a fixed marginal coverage of 90\%.  
This multi-score evaluation framework provides a robust assessment of how well the model generalizes under different conformity measures.  
A detailed overview of these scoring functions is provided in Appendix~\ref{appendix:section:additional_details}.


\textbf{Baseline Methods.} 
We compare our approach against three representative baseline training methods:  
(i) CE, which performs standard empirical risk minimization using the cross-entropy loss without any uncertainty-aware regularization; 
(ii) CUT~\cite{einbinder2022training}, which penalizes the discrepancy between the empirical cumulative distribution function (CDF) of nonconformity scores and the uniform distribution to mitigate model overconfidence; and 
(iii) ConfTr~\cite{stutz2021learning}, which directly trains models to produce calibrated prediction sets by minimizing a differentiable approximation of the average set size.
Specifically, it employs a Sigmoid approximation of the indicator function to define a differentiable loss that jointly penalizes large prediction sets and miscoverage. 
For CUT and ConfTr, we use the HPS score for training.
The details of CUT and ConfTr are in Appendix \ref{appendix:section:additional_details}.


\textbf{Evaluation Metrics.} 
Our first evaluation metric is the average prediction set size (APSS), defined as $\text{APSS} = \frac{1}{|\calD_\test|}\sum_{i \in \calD_\test} |C_f({X}_{i})|$. 
In addition to efficiency, coverage is a fundamental metric in conformal prediction.
Accordingly, we report marginal coverage (Marg-Cov) as our second evaluation criterion, defined as $\text{Marg-Cov} = \frac{1}{|\calD_\test|}\sum_{i \in \calD_\test} \indicator[Y_{i} \in C_f({X}_{i})]$.

\subsection{Results and Discussion}

\begin{figure*}[!t]
    \centering
    \begin{minipage}[t]{0.32\linewidth}
    \centering
    \textbf{(a)}  Loss Convergence
    \includegraphics[width = \linewidth]{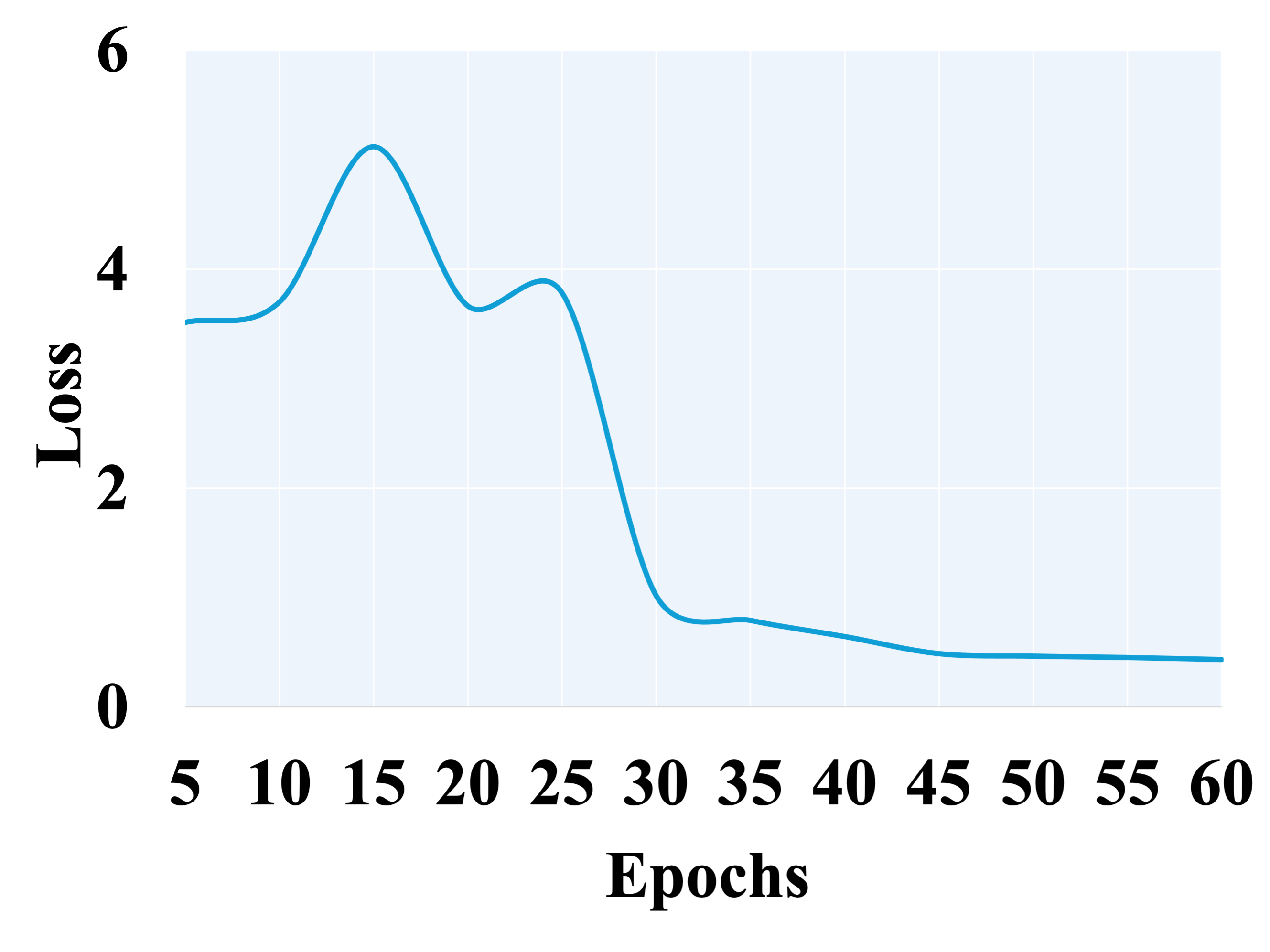}
    \\
    \includegraphics[width = \linewidth]{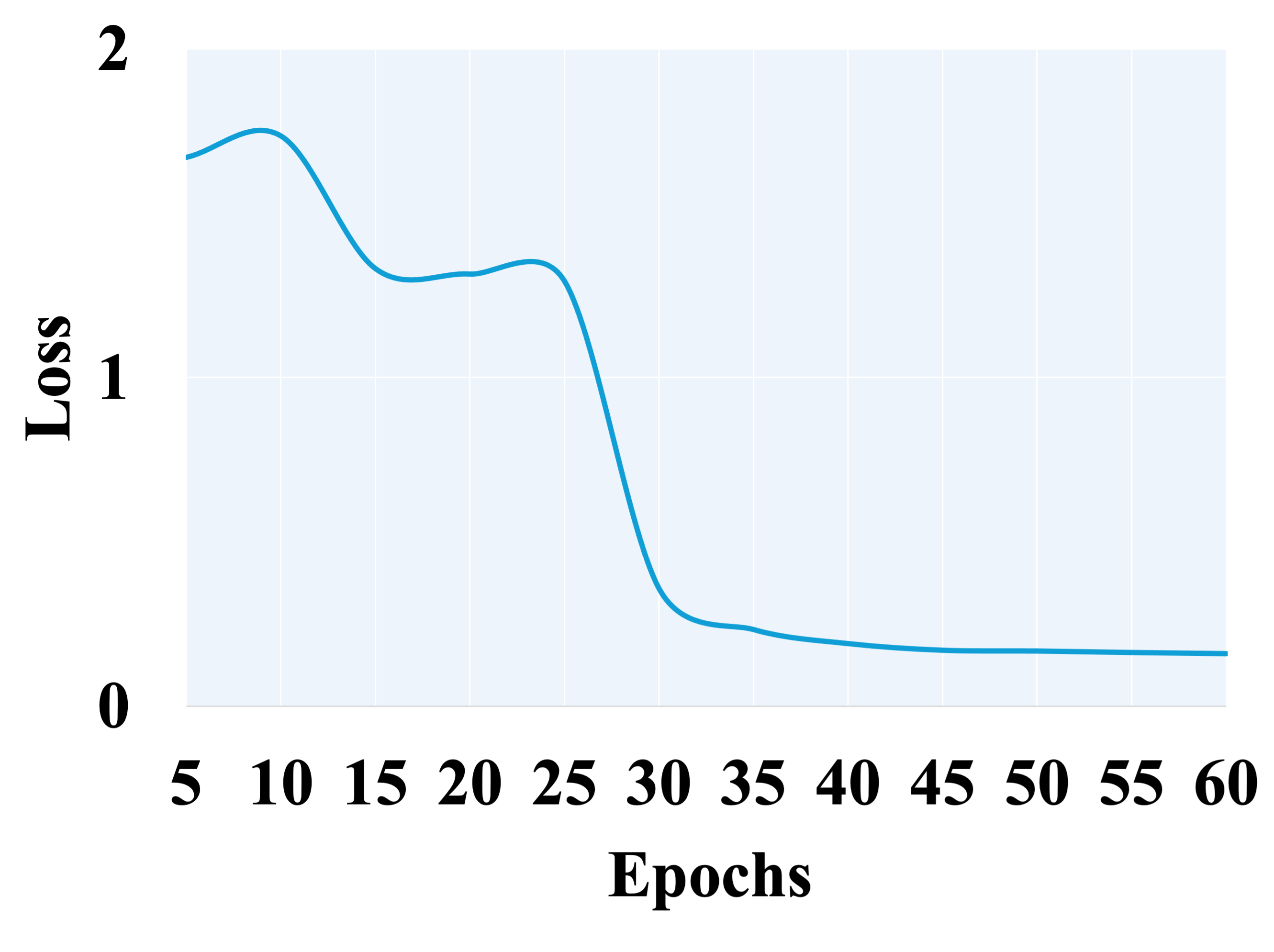}
    \end{minipage} 
    \begin{minipage}[t]{0.32\linewidth}
    
    \centering    
    \textbf{(b)} Loss vs APSS
    \includegraphics[width=\linewidth]{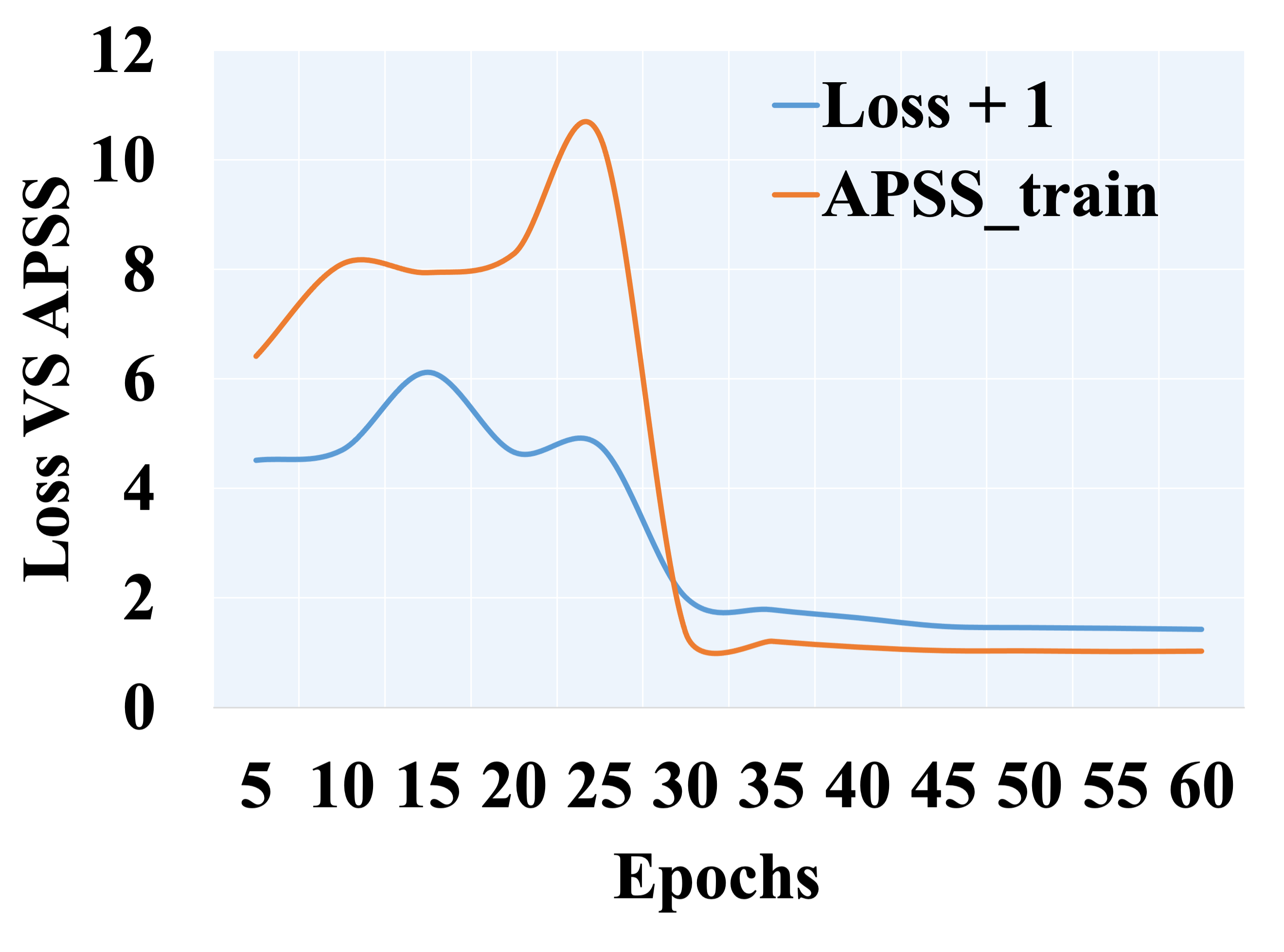}
    \\
    \includegraphics[width=\linewidth]{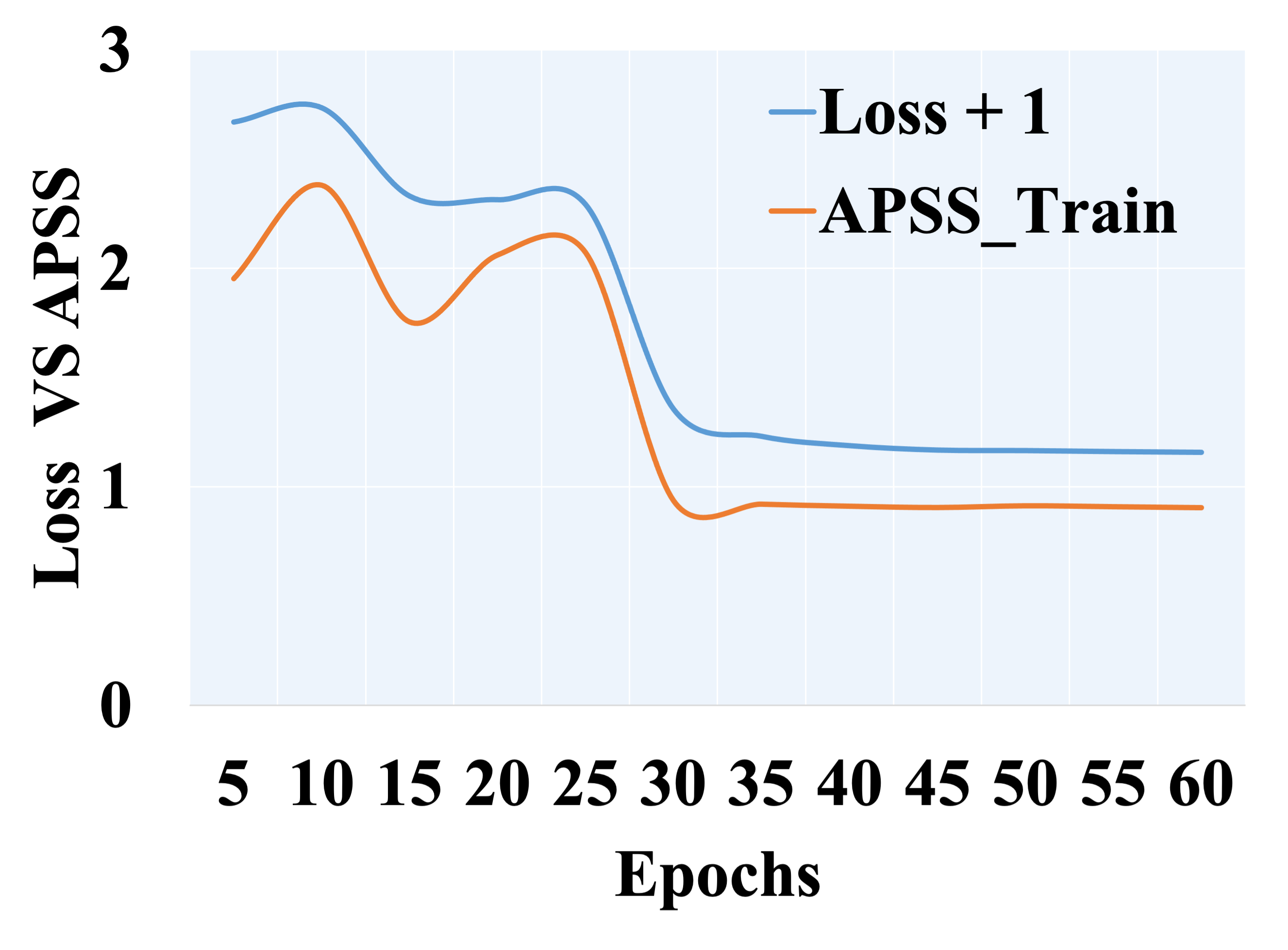}
    \end{minipage} 
    \begin{minipage}[t]{0.32\linewidth}
    \centering
    \textbf{(c)}    APSS Comparison 
     \includegraphics[width=\linewidth]{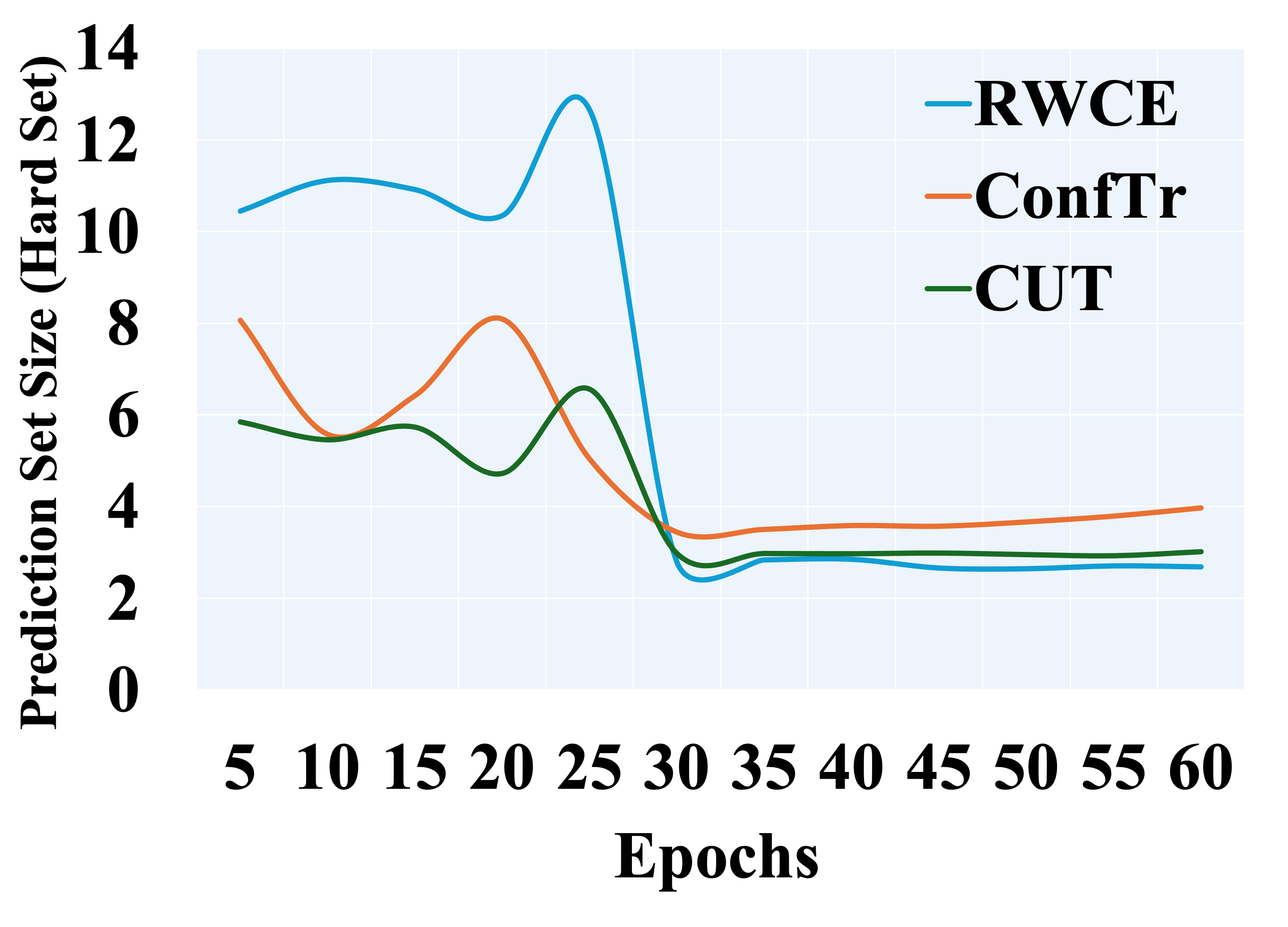}
    \\   
    \includegraphics[width=\linewidth]{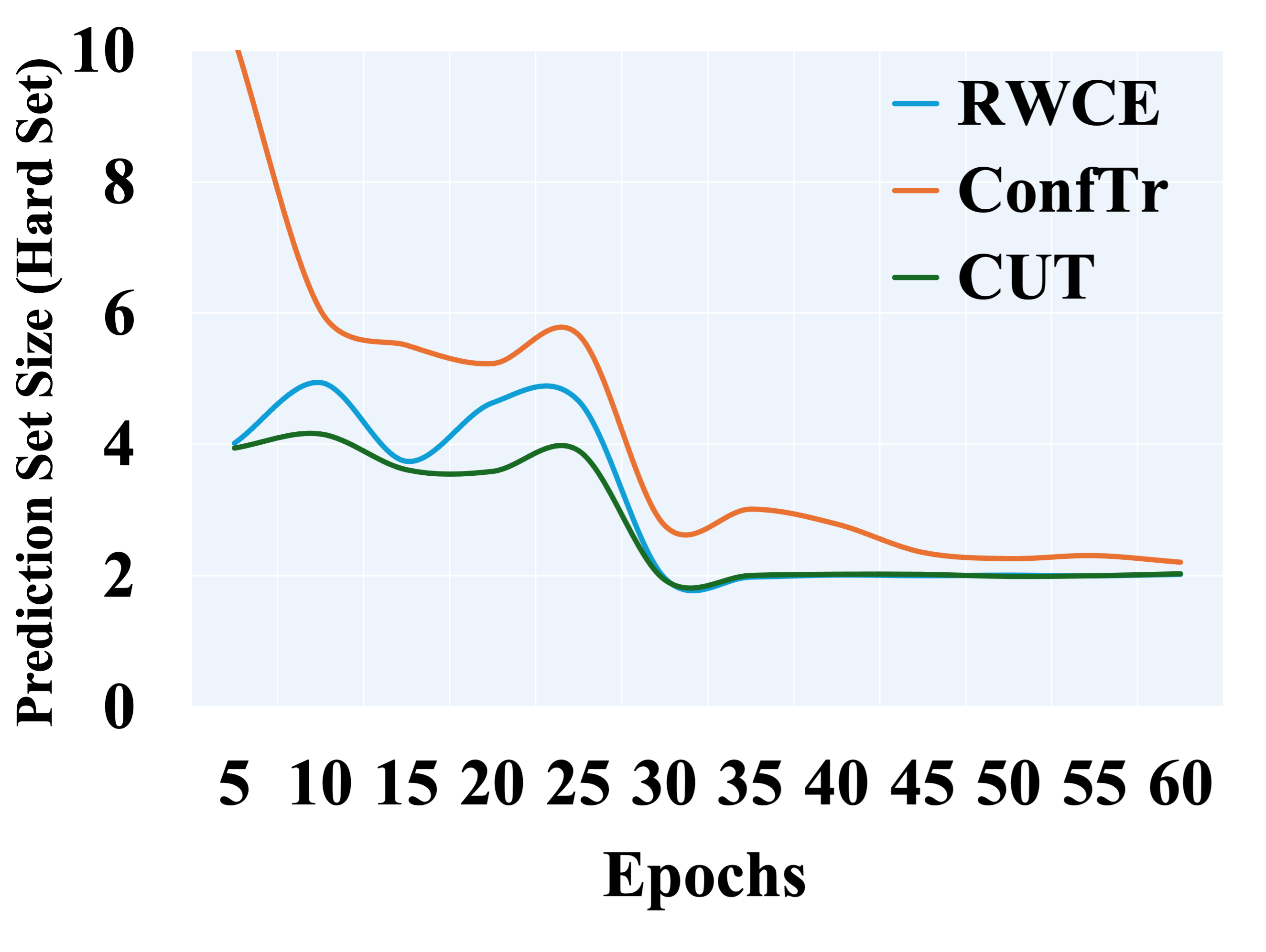}
    \end{minipage} 
    \caption{
    \textbf{Justification experiments using ResNet (Top row) and DenseNet (Bottom row)} on CIFAR-100.
    \textbf{(a)} the training loss convergence of RWCE. The results demonstrate that RWCE converges smoothly and stably on both architectures.  
    \textbf{(b)} the RWCE training loss inflated by $1$ (according to remark~\ref{remark:new_objective_bounds_rank}) with the actual APSS on the validation set. 
    The loss closely upper bounds APSS with a small and stable gap, indicating that RWCE effectively approximates and directly minimizes the true set size objective.  
    \textbf{(c)} the APSS of RWCE, ConfTr, and CUT calibrated by HPS score. RWCE consistently achieves smaller prediction sets on both architectures, confirming its superior efficiency in directly minimizing set size.
    }
    \label{fig:results_overall}
\end{figure*}

\textbf{RWCE generates smaller prediction sets.} Table~\ref{tab:cvg_set_hps_train_main_part} reports the APSS of all methods across three datasets under a fixed coverage level of $\alpha = 0.1$. 
All models are evaluated using both HPS and APS scoring functions during the evaluation phase, with details provided in Appendix~\ref{subsec:train_test_strategies}.
Our method, RWCE, demonstrates consistently strong performance across all datasets and evaluation settings by producing smaller and more efficient prediction sets than existing baselines.
Averaging the relative improvements across all configurations, RWCE achieves a 21.38\% reduction in prediction set size compared to the strongest baseline in each case.
Under HPS evaluation, RWCE consistently achieves the smallest APSS across all dataset--model combinations.  
The gains are even more pronounced under APS evaluation, further reinforcing RWCE’s effectiveness in producing compact and well-calibrated prediction sets---except for a slight degradation on CIFAR-100 with DenseNet, where RWCE still remains competitive.
For completeness, coverage statistics and additional experiments using RAPS and SAPS scoring functions are reported in Appendix \ref{appendix:subsec:additional_exps}.  
Additional experiments on SST-5 with the BERT model are also reported in Appendix \ref{appendix:subsec:additional_exps}.
We further visualize the APSS comparison of the three methods on CIFAR-100 in Figure~\ref{fig:results_overall}~(c), using both ResNet and DenseNet evaluated by the HPS score. The figure clearly demonstrates that RWCE yields significantly smaller prediction sets after each method converges.

\textbf{RWCE Converges Stably During Training.}  
To assess the optimization dynamics of RWCE, we visualize its training loss (according to remark~\ref{remark:new_objective_bounds_rank}) over 60 fine-tuning epochs on CIFAR-100 using both ResNet and DenseNet architectures, as shown in Figure~\ref{fig:results_overall}(a). The training curves exhibit a consistent convergence pattern across both models. Specifically, RWCE experiences some oscillations in the early epochs—particularly visible in ResNet where multiple sharp peaks occur within the first 25 epochs—followed by a sharp decline between epoch 25 and 30. 
After this point, the loss stabilizes rapidly and converges smoothly by around epoch 40.
This loss corresponds to a rank-weighted objective based on the rank of the true label, which we theoretically show to be a tight upper bound on the expected prediction set size. 
The observed convergence behavior thus not only indicates optimization stability, but also reflects the ability of RWCE to directly approximate the set size minimization objective without relying on relaxed approximations of the indicator function.


\textbf{RWCE Directly Minimizes Prediction Set Size.}  
To examine whether RWCE effectively minimizes the true prediction set size, we compare its training loss with the actual APSS on the training set, as shown in Figure~\ref{fig:results_overall}(b). For both ResNet and DenseNet, the training loss inflated with $1$ (according to remark~\ref{remark:new_objective_bounds_rank}) closely upper bounds the APSS throughout training, with the two curves converging to similar values after epoch 40. This tight alignment empirically validates our theoretical claim that the RWCE objective serves as a provable upper bound on the expected set size.
Furthermore, we track the test-time APSS of RWCE, ConfTr, and CUT under HPS calibration during training in Figure~\ref{fig:results_overall}(c). 
While RWCE exhibits some fluctuation in the early stages of training—especially on the ResNet backbone—it rapidly stabilizes and consistently achieves the smallest prediction sets in the later epochs across both model architectures. This highlights its superior efficiency in directly minimizing prediction set size. Although ConfTr converges more smoothly, its final prediction sets remain larger than those of RWCE, which may reflect the limitations of optimizing a soft relaxation of the true objective. Together, these results suggest that RWCE not only aligns well with the evaluation metric but also enables the construction of compact and calibrated prediction sets without relying on soft indicators.

\begin{figure}[!t]
    \centering
    \begin{minipage}[t]{0.49\linewidth}
    \centering
    \textbf{(a)} Alignment Inequality on ResNet
    \end{minipage} 
    \begin{minipage}[t]{0.49\linewidth}
    \centering
    \textbf{(b)} Alignment Inequality on DenseNet
    \end{minipage} 
    \hfill
    \begin{minipage}[t]{0.49\linewidth}
     \centering   
     \includegraphics[width = \linewidth]{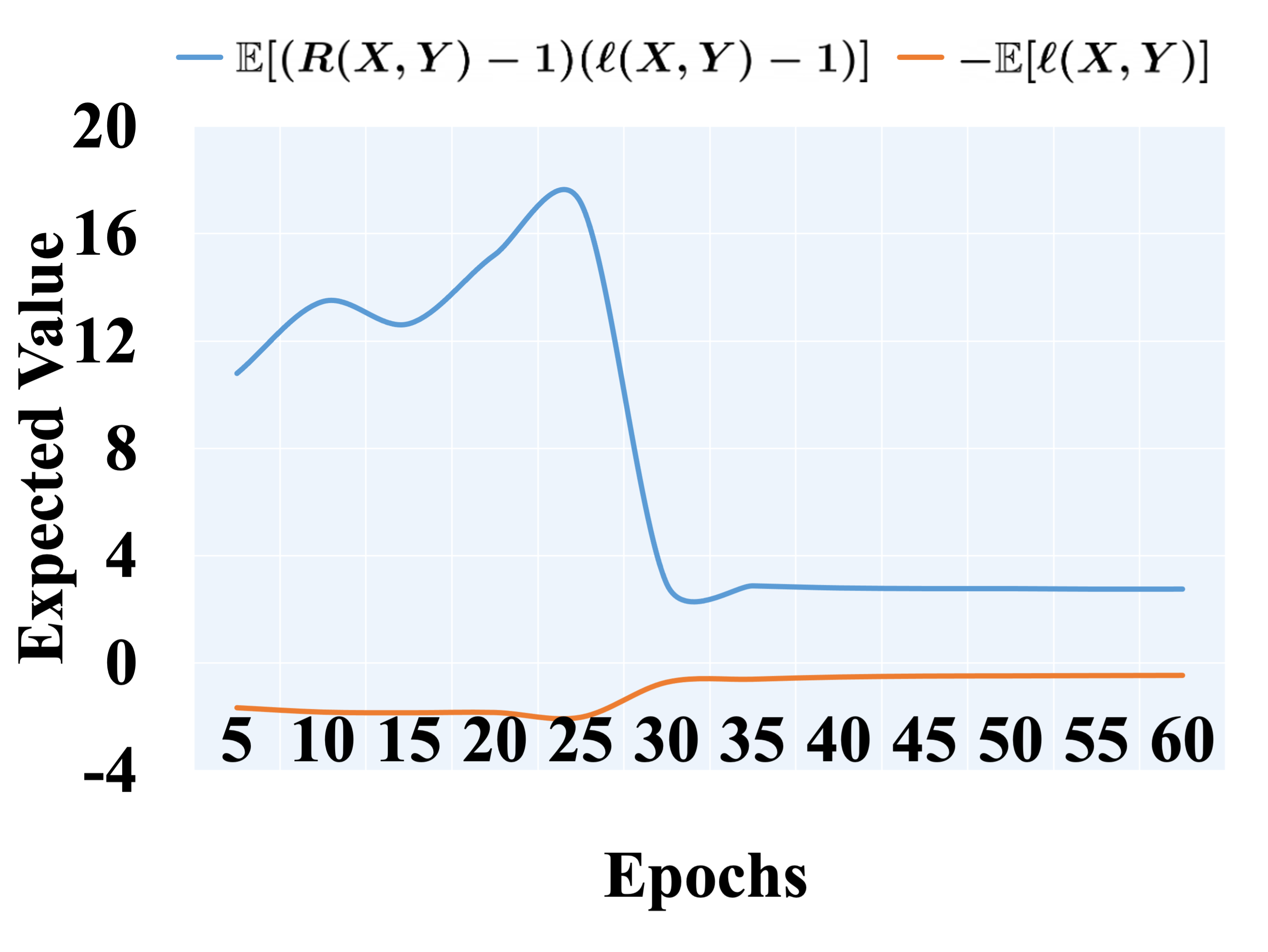}
     \end{minipage}
    \begin{minipage}[t]{0.49\linewidth}
    \centering
    \includegraphics[width=\linewidth]{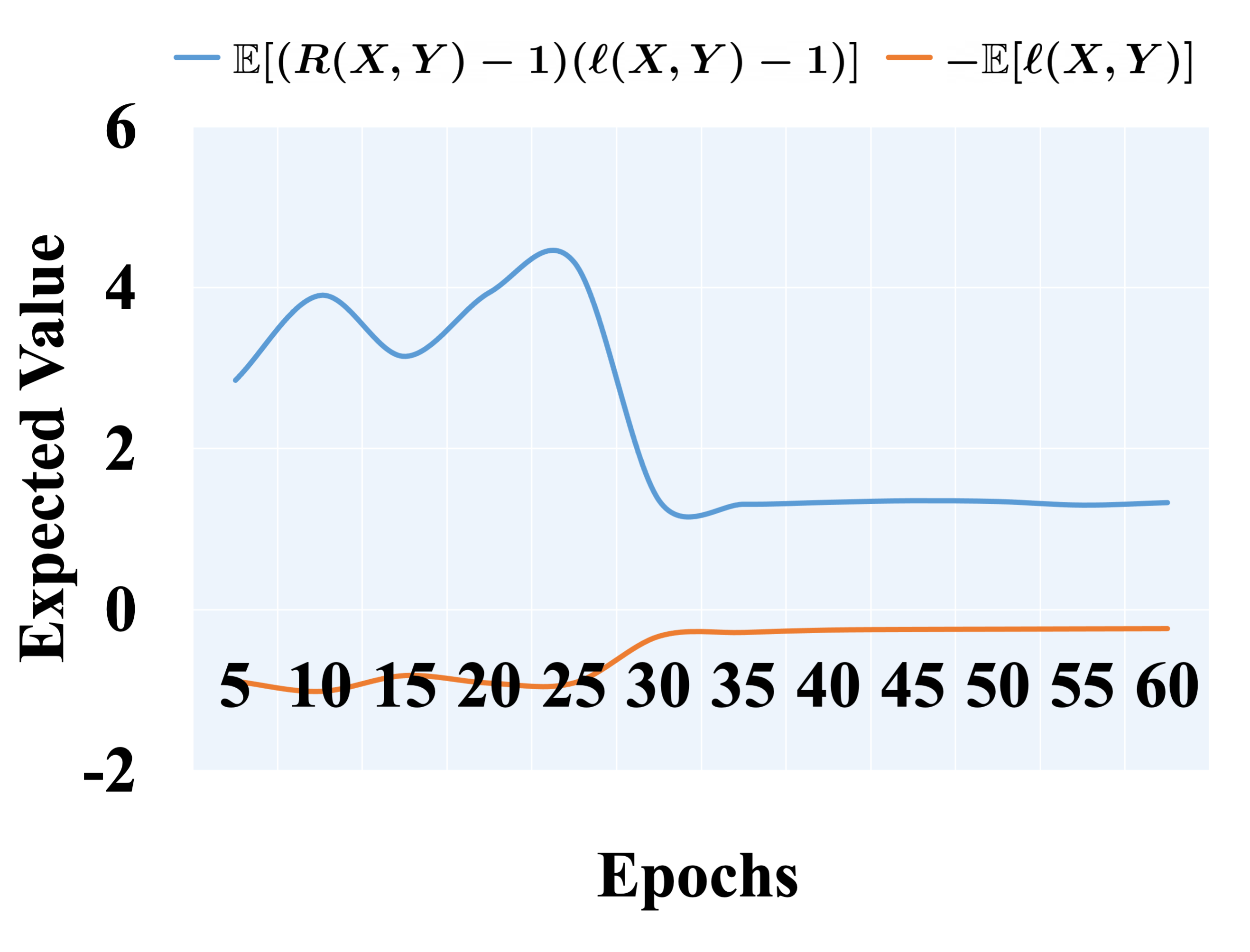}
    \end{minipage}
    \caption{
    \textbf{
    Empirical validation of the alignment assumption in Theorem~\ref{theorem:new_objective_upper_bounds_expected_rank}, which states that $\mathbb{E}[(R(X,Y) - 1)(\ell(X,Y) - 1)] \geq -\mathbb{E}[\ell(X,Y)]$} during training on CIFAR-100 using ResNet (a) and DenseNet (b).  
    We plot the left-hand side (expected rank–loss interaction term, blue) and right-hand side (negative expected cross-entropy loss, orange). 
    In both cases, the blue curve remains significantly above the orange curve throughout training, confirming that the inequality holds in practice.
    }
    \label{fig:rank_ce_alignment}
    \vspace{-4.0ex}
\end{figure}


\textbf{Assumption in Theorem~\ref{theorem:new_objective_upper_bounds_expected_rank} is empirically valid.}
To verify the practical validity of the alignment assumption required by Theorem~\ref{theorem:new_objective_upper_bounds_expected_rank}, we compute and track both sides of the inequality throughout training. 
Specifically, we compare the left-hand side $\mathbb{E}[(R(X,Y) - 1)(\ell(X,Y) - 1)] $ and the right-hand side $-\mathbb{E}[\ell(X,Y)]$. 
Figure~\ref{fig:rank_ce_alignment} shows the two curves on the CIFAR-100 training set using both ResNet and DenseNet architectures across all epochs. 
In both models, we observe that the expected rank–loss interaction term (blue curve, i.e., $\mathbb{E}[(R(X,Y) - 1)(\ell(X,Y) - 1)] $) consistently exceeds the magnitude of the average negative cross-entropy loss (orange curve, i.e., $-\mathbb{E}[\ell(X,Y)]$) throughout training. 
This confirms that the inequality assumed in Theorem~\ref{theorem:new_objective_upper_bounds_expected_rank} holds empirically. These results provide strong support for the theoretical justification of our objective, demonstrating that the rank-weighted loss meaningfully upper bounds the expected rank in real-world training scenarios.

\section{Conclusion}
In this paper, we proposed RWCE, a simple and theoretically grounded conformal training method that avoids the use of surrogate indicator approximations. Instead, RWCE minimizes a rank-weighted cross-entropy loss, where each training sample is reweighted according to the rank of its true label. 
We provide rigorous theoretical analysis showing that this objective tightly upper bounds the expected prediction set size and admits favorable generalization guarantees under mild conditions. 
Empirical results on benchmark datasets confirm the effectiveness of RWCE: it consistently reduces prediction set size compared to prior conformal training methods with an average $21.38\%$ reduction, while preserving valid coverage. 
Our approach demonstrates a principled and practical alternative for conformal prediction, with implications for building adaptive and efficient uncertainty quantification methods.


\section*{Acknowledgements}

The authors gratefully acknowledge the in part support by the USDA-NIFA funded AgAID Institute
award 2021-67021-35344, and the NSF grant CNS-2312125, IIS-2443828, DUE-2519063. 
The views expressed are those of
the authors and do not reflect the official policy or position of the USDA-NIFA and NSF.



\bibliography{aaai2026}

@article{angelopoulos2021gentle,
  title={A gentle introduction to conformal prediction and distribution-free uncertainty quantification},
  author={Angelopoulos, Anastasios N and Bates, Stephen},
  journal={arXiv preprint arXiv:2107.07511},
  year={2021}
}

@book{vovk2005algorithmic,
  title={Algorithmic learning in a random world},
  author={Vovk, Vladimir and Gammerman, Alexander and Shafer, Glenn},
  year={2005},
  publisher={Springer Science \& Business Media}
}

@article{shafer2008tutorial,
  title={A Tutorial on Conformal Prediction.},
  author={Shafer, Glenn and Vovk, Vladimir},
  journal={Journal of Machine Learning Research},
  volume={9},
  number={3},
  year={2008}
}

@article{bellotti2021optimized,
  title={Optimized conformal classification using gradient descent approximation},
  author={Bellotti, Anthony},
  journal={arXiv preprint arXiv:2105.11255},
  year={2021}
}

@article{stutz2021learning,
  title={Learning optimal conformal classifiers},
  author={Stutz, David and Cemgil, Ali Taylan and Doucet, Arnaud and others},
  journal={arXiv preprint arXiv:2110.09192},
  year={2021}
}

@article{einbinder2022training,
  title={Training uncertainty-aware classifiers with conformalized deep learning},
  author={Einbinder, Bat-Sheva and Romano, Yaniv and Sesia, Matteo and Zhou, Yanfei},
  journal={Advances in Neural Information Processing Systems},
  volume={35},
  pages={22380--22395},
  year={2022}
}

@misc{yan2024provably,
      title={Provably Robust Conformal Prediction with Improved Efficiency}, 
      author={Ge Yan and Yaniv Romano and Tsui-Wei Weng},
      year={2024},
      eprint={2404.19651},
      archivePrefix={arXiv},
      primaryClass={cs.LG},
      url={https://arxiv.org/abs/2404.19651}, 
}

@article{romano2019conformalized,
  title={Conformalized quantile regression},
  author={Romano, Yaniv and Patterson, Evan and Candes, Emmanuel},
  journal={Advances in neural information processing systems},
  volume={32},
  year={2019}
}

@article{gibbs2023conformal,
  title={Conformal Prediction With Conditional Guarantees},
  author={Gibbs, Isaac and Cherian, John J and Cand{\`e}s, Emmanuel J},
  journal={arXiv preprint arXiv:2305.12616},
  year={2023}
}

@article{gibbs2021adaptive,
  title={Adaptive conformal inference under distribution shift},
  author={Gibbs, Isaac and Candes, Emmanuel},
  journal={Advances in Neural Information Processing Systems},
  volume={34},
  pages={1660--1672},
  year={2021}
}

@article{romano2020classification,
  title={Classification with valid and adaptive coverage},
  author={Romano, Yaniv and Sesia, Matteo and Candes, Emmanuel},
  journal={Advances in Neural Information Processing Systems},
  volume={33},
  pages={3581--3591},
  year={2020}
}

@article{bates2021distribution,
  title={Distribution-free, risk-controlling prediction sets},
  author={Bates, Stephen and Angelopoulos, Anastasios and Lei, Lihua and Malik, Jitendra and Jordan, Michael},
  journal={Journal of the ACM (JACM)},
  volume={68},
  number={6},
  pages={1--34},
  year={2021},
  publisher={ACM New York, NY}
}

@inproceedings{lu2023federated,
  title={Federated conformal predictors for distributed uncertainty quantification},
  author={Lu, Charles and Yu, Yaodong and Karimireddy, Sai Praneeth and Jordan, Michael and Raskar, Ramesh},
  booktitle={International Conference on Machine Learning},
  pages={22942--22964},
  year={2023},
  organization={PMLR}
}

@inproceedings{humbert2023one,
  title={One-shot federated conformal prediction},
  author={Humbert, Pierre and Le Bars, Batiste and Bellet, Aur{\'e}lien and Arlot, Sylvain},
  booktitle={International Conference on Machine Learning},
  pages={14153--14177},
  year={2023},
  organization={PMLR}
}

@inproceedings{plassier2023conformal,
  title={Conformal prediction for federated uncertainty quantification under label shift},
  author={Plassier, Vincent and Makni, Mehdi and Rubashevskii, Aleksandr and Moulines, Eric and Panov, Maxim},
  booktitle={International Conference on Machine Learning},
  pages={27907--27947},
  year={2023},
  organization={PMLR}
}

@inproceedings{plassier2024efficient,
  title={Efficient conformal prediction under data heterogeneity},
  author={Plassier, Vincent and Kotelevskii, Nikita and Rubashevskii, Aleksandr and Noskov, Fedor and Velikanov, Maksim and Fishkov, Alexander and Horvath, Samuel and Takac, Martin and Moulines, Eric and Panov, Maxim},
  booktitle={International Conference on Artificial Intelligence and Statistics},
  pages={4879--4887},
  year={2024},
  organization={PMLR}
}

@inproceedings{
angelopoulos2021uncertainty,
title={Uncertainty Sets for Image Classifiers using Conformal Prediction},
author={Anastasios Nikolas Angelopoulos and Stephen Bates and Michael Jordan and Jitendra Malik},
booktitle={International Conference on Learning Representations},
year={2021},
}

@article{huang2023conformal,
  title={Conformal Prediction for Deep Classifier via Label Ranking},
  author={Huang, Jianguo and Xi, Huajun and Zhang, Linjun and Yao, Huaxiu and Qiu, Yue and Wei, Hongxin},
  journal={arXiv preprint arXiv:2310.06430},
  year={2023}
}

@inproceedings{fisch2021few,
  title={Few-shot conformal prediction with auxiliary tasks},
  author={Fisch, Adam and Schuster, Tal and Jaakkola, Tommi and Barzilay, Regina},
  booktitle={International Conference on Machine Learning},
  pages={3329--3339},
  year={2021},
  organization={PMLR}
}

@inproceedings{fisch2021efficient,
    title={Efficient Conformal Prediction via Cascaded Inference with Expanded Admission},
    author={Adam Fisch and Tal Schuster and Tommi Jaakkola and Regina Barzilay},
    booktitle={Proceedings of The Tenth International Conference on Learning Representations},
    year={2021},
}

@article{guan2023localized,
  title={Localized conformal prediction: A generalized inference framework for conformal prediction},
  author={Guan, Leying},
  journal={Biometrika},
  volume={110},
  number={1},
  pages={33--50},
  year={2023},
  publisher={Oxford University Press}
}

@article{ding2024class,
  title={Class-conditional conformal prediction with many classes},
  author={Ding, Tiffany and Angelopoulos, Anastasios and Bates, Stephen and Jordan, Michael and Tibshirani, Ryan J},
  journal={Advances in Neural Information Processing Systems},
  volume={36},
  year={2024}
}

@inproceedings{
kiyani2024length,
title={Length Optimization in Conformal Prediction},
author={Shayan Kiyani and George J. Pappas and Hamed Hassani},
booktitle={The Thirty-eighth Annual Conference on Neural Information Processing Systems},
year={2024}
}

@article{
shi2024conformal,
title={Conformal Prediction for Class-wise Coverage via Augmented Label Rank Calibration},
author={Yuanjie Shi and SUBHANKAR GHOSH and Taha Belkhouja and Jana Doppa and Yan Yan},
booktitle={The Thirty-eighth Annual Conference on Neural Information Processing Systems},
year={2024}
}

@article{zhu2025predicate,
  title={Predicate-Conditional Conformalized Answer Sets for Knowledge Graph Embeddings},
  author={Zhu, Yuqicheng and Hern{\'a}ndez, Daniel and He, Yuan and Ding, Zifeng and Xiong, Bo and Kharlamov, Evgeny and Staab, Steffen},
  journal={arXiv preprint arXiv:2505.16877},
  year={2025}
}

@article{krizhevsky2009learning,
  title={Learning multiple layers of features from tiny images},
  author={Krizhevsky, Alex and Hinton, Geoffrey and others},
  year={2009},
  publisher={Toronto, ON, Canada}
}

@article{fei2006one,
  title={One-shot learning of object categories},
  author={Fei-Fei, Li and Fergus, Robert and Perona, Pietro},
  journal={IEEE transactions on pattern analysis and machine intelligence},
  volume={28},
  number={4},
  pages={594--611},
  year={2006},
  publisher={IEEE}
}

@article{FeiFei2004LearningGV,
  title={Learning Generative Visual Models from Few Training Examples: An Incremental Bayesian Approach Tested on 101 Object Categories},
  author={Li Fei-Fei and Rob Fergus and Pietro Perona},
  journal={Computer Vision and Pattern Recognition Workshop},
  year={2004},
}

@inproceedings{van2018inaturalist,
  title={The inaturalist species classification and detection dataset},
  author={Van Horn, Grant and Mac Aodha, Oisin and Song, Yang and Cui, Yin and Sun, Chen and Shepard, Alex and Adam, Hartwig and Perona, Pietro and Belongie, Serge},
  booktitle={Proceedings of the IEEE conference on computer vision and pattern recognition},
  pages={8769--8778},
  year={2018}
}

@inproceedings{he2016deep,
  title={Deep residual learning for image recognition},
  author={He, Kaiming and Zhang, Xiangyu and Ren, Shaoqing and Sun, Jian},
  booktitle={Proceedings of the IEEE conference on computer vision and pattern recognition},
  pages={770--778},
  year={2016}
}

@inproceedings{huang2017densely,
  title={Densely connected convolutional networks},
  author={Huang, Gao and Liu, Zhuang and Van Der Maaten, Laurens and Weinberger, Kilian Q},
  booktitle={Proceedings of the IEEE conference on computer vision and pattern recognition},
  pages={4700--4708},
  year={2017}
}

@article{angelopoulosraps,
  title={Uncertainty Sets for Image Classifiers Using Conformal Prediction},
  author={Angelopoulos, Anastasios Nikolas and Bates, Stephen and Malik, Jitendra and Jordan, Michael I},
  journal={arXiv preprint arXiv:2009.14193},
  year={2020}
}

@article{lei2013distribution,
  title={Distribution-free prediction sets},
  author={Lei, Jing and Robins, James and Wasserman, Larry},
  journal={Journal of the American Statistical Association},
  volume={108},
  number={501},
  pages={278--287},
  year={2013},
  publisher={Taylor \& Francis}
}

@inproceedings{ghosh2023probabilistically,
  title={Probabilistically robust conformal prediction},
  author={Ghosh, Subhankar and Shi, Yuanjie and Belkhouja, Taha and Yan, Yan and Doppa, Jana and Jones, Brian},
  booktitle={Uncertainty in Artificial Intelligence},
  pages={681--690},
  year={2023},
  organization={PMLR}
}

@inproceedings{straitouriicml2023improving,
  title={Improving Expert Predictions with Conformal Prediction},
  author={Straitouri, Eleni and Wang, Lequn and Okati, Nastaran and Rodriguez, Manuel Gomez},
  booktitle={International Conference on Machine Learning (ICML)},
  year={2023}
}

@inproceedings{babbarijcai2022utility,
  title={On the utility of prediction sets in human-ai teams},
  author={Babbar, Varun and Bhatt, Umang and Weller, Adrian},
  booktitle={Proceedings of the Thirty-First International Joint Conference on Artificial Intelligence (IJCAI-22)},
  year={2022}
}

@article{sadinle2019least,
  title={Least ambiguous set-valued classifiers with bounded error levels},
  author={Sadinle, Mauricio and Lei, Jing and Wasserman, Larry},
  journal={Journal of the American Statistical Association},
  volume={114},
  number={525},
  pages={223--234},
  year={2019},
  publisher={Taylor \& Francis}
}

@article{angelopoulos2020uncertainty,
  title={Uncertainty sets for image classifiers using conformal prediction},
  author={Angelopoulos, Anastasios and Bates, Stephen and Malik, Jitendra and Jordan, Michael I},
  journal={arXiv preprint arXiv:2009.14193},
  year={2020}
}

@article{abdar2021review,
  title={A review of uncertainty quantification in deep learning: Techniques, applications and challenges},
  author={Abdar, Moloud and Pourpanah, Farhad and Hussain, Sadiq and Rezazadegan, Dana and Liu, Li and Ghavamzadeh, Mohammad and Fieguth, Paul and Cao, Xiaochun and Khosravi, Abbas and Acharya, U Rajendra and others},
  journal={Information Fusion},
  volume={76},
  pages={243--297},
  year={2021},
  publisher={Elsevier}
}

@inproceedings{lu2022fair,
  title={Fair conformal predictors for applications in medical imaging},
  author={Lu, Charles and Lemay, Andr{\'e}anne and Chang, Ken and H{\"o}bel, Katharina and Kalpathy-Cramer, Jayashree},
  booktitle={Proceedings of the AAAI Conference on Artificial Intelligence},
  volume={36},
  number={11},
  pages={12008--12016},
  year={2022}
}

@inproceedings{lu2022improving,
  title={Improving trustworthiness of ai disease severity rating in medical imaging with ordinal conformal prediction sets},
  author={Lu, Charles and Angelopoulos, Anastasios N and Pomerantz, Stuart},
  booktitle={International Conference on Medical Image Computing and Computer-Assisted Intervention},
  pages={545--554},
  year={2022},
  organization={Springer}
}

@inproceedings{huang2023uncertainty,
  author = {Huang, Kexin and Jin, Ying and Candès, Emmanuel and Leskovec, Jure},
  title = {Uncertainty Quantification over Graph with Conformalized Graph Neural Networks},
  booktitle = {NeurIPS 2023},
  date = {2023},
  annotation = {2023e},
  eprint = {2305.14535},
  eprintclass = {cs.LG},
  eprinttype = {arXiv},
  pubstate = {Forthcoming},
}

@article{begoli2019need,
  title={The need for uncertainty quantification in machine-assisted medical decision making},
  author={Begoli, Edmon and Bhattacharya, Tanmoy and Kusnezov, Dimitri},
  journal={Nature Machine Intelligence},
  volume={1},
  number={1},
  pages={20--23},
  year={2019},
  publisher={Nature Publishing Group UK London}
}

@inproceedings{yang2021uncertainty,
  title={Uncertainty quantification and estimation in medical image classification},
  author={Yang, Sidi and Fevens, Thomas},
  booktitle={Artificial Neural Networks and Machine Learning--ICANN 2021: 30th International Conference on Artificial Neural Networks, Bratislava, Slovakia, September 14--17, 2021, Proceedings, Part III 30},
  pages={671--683},
  year={2021},
  organization={Springer}
}

@inproceedings{shafaei2018uncertainty,
  title={Uncertainty in machine learning: A safety perspective on autonomous driving},
  author={Shafaei, Sina and Kugele, Stefan and Osman, Mohd Hafeez and Knoll, Alois},
  booktitle={Computer Safety, Reliability, and Security: SAFECOMP 2018 Workshops, ASSURE, DECSoS, SASSUR, STRIVE, and WAISE, V{\"a}ster{\aa}s, Sweden, September 18, 2018, Proceedings 37},
  pages={458--464},
  year={2018},
  organization={Springer}
}

@article{bachute2021autonomous,
  title={Autonomous driving architectures: insights of machine learning and deep learning algorithms},
  author={Bachute, Mrinal R and Subhedar, Javed M},
  journal={Machine Learning with Applications},
  volume={6},
  pages={100164},
  year={2021},
  publisher={Elsevier}
}

@book{mohri2018foundations,
  title={Foundations of machine learning},
  author={Mohri, Mehryar and Rostamizadeh, Afshin and Talwalkar, Ameet},
  year={2018},
  publisher={MIT press}
}

@article{shi2025direct,
  title={Direct Prediction Set Minimization via Bilevel Conformal Classifier Training},
  author={Shi, Yuanjie and Shahrokhi, Hooman and Jia, Xuesong and Chen, Xiongzhi and Doppa, Janardhan Rao and Yan, Yan},
  journal={arXiv preprint arXiv:2506.06599},
  year={2025}
}

@inproceedings{socher2013recursive,
  title={Recursive deep models for semantic compositionality over a sentiment treebank},
  author={Socher, Richard and Perelygin, Alex and Wu, Jean and Chuang, Jason and Manning, Christopher D and Ng, Andrew Y and Potts, Christopher},
  booktitle={Proceedings of the 2013 conference on empirical methods in natural language processing},
  pages={1631--1642},
  year={2013}
}

@inproceedings{huang2024conformal,
  title={Conformal Prediction for Deep Classifier via Label Ranking},
  author={Huang, Jianguo and Xi, Huajun and Zhang, Linjun and Yao, Huaxiu and Qiu, Yue and Wei, Hongxin},
  booktitle={International Conference on Machine Learning},
  pages={20331--20347},
  year={2024},
  organization={PMLR}
}

@inproceedings{devlin2019bert,
  title={Bert: Pre-training of deep bidirectional transformers for language understanding},
  author={Devlin, Jacob and Chang, Ming-Wei and Lee, Kenton and Toutanova, Kristina},
  booktitle={Proceedings of the 2019 conference of the North American chapter of the association for computational linguistics: human language technologies, volume 1 (long and short papers)},
  pages={4171--4186},
  year={2019}
}

@article{luo2025reliable,
  title={Reliable classification through rank-based conformal prediction sets},
  author={Luo, Rui and Zhou, Zhixin},
  journal={Pattern Recognition},
  pages={112330},
  year={2025},
  publisher={Elsevier}
}

@article{xi2025robust,
  title={Robust Online Conformal Prediction under Uniform Label Noise},
  author={Xi, Huajun and Liu, Kangdao and Zeng, Hao and Sun, Wenguang and Wei, Hongxin},
  journal={CoRR},
  year={2025}
}

@inproceedings{baoenhancing,
  title={Enhancing Adversarial Robustness with Conformal Prediction: A Framework for Guaranteed Model Reliability},
  author={Bao, Jie and Dang, Chuangyin and Luo, Rui and Zhang, Hanwei and Zhou, Zhixin},
  booktitle={Forty-second International Conference on Machine Learning},
  year={2025},
  organization={PMLR}
}

\cleardoublepage

\newpage

\appendix
 
\onecolumn

\section{ Technical Proofs for Main Resuts }
\label{section:appendix:proofs_main_results}

\subsection{ Proof for Theorem \ref{theorem:expected_rank_upper_bounds_expected_size} }
\label{subsection:appendix:proof:theorem:exptected_rank_upper_bounds_expected_size}

\begin{theoreminappendix}
\label{theorem:appendix:expected_rank_upper_bounds_expected_size}
(Theorem \ref{theorem:expected_rank_upper_bounds_expected_size} restated)
\begin{align*}
\E [ | \widehat C(X) | ]
\leq 
\E [ R(X, Y) ]
+ K \bigg( 1 - \alpha + \frac{1}{n+1} \bigg)
.
\end{align*}
\end{theoreminappendix}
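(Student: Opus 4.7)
The plan is to split the expected prediction set size according to whether the true label is captured and bound the two contributions by separate arguments:
\begin{align*}
\E[|\widehat C(X)|]
=
\E\big[|\widehat C(X)|\,\indicator[Y\in\widehat C(X)]\big]
+
\E\big[|\widehat C(X)|\,\indicator[Y\notin\widehat C(X)]\big].
\end{align*}
For the covered term, I would use the deterministic bound $|\widehat C(X)|\leq K$ and then invoke the standard split-CP upper coverage bound $\P[Y\in\widehat C(X)]\leq 1-\alpha+1/(n+1)$, which is valid under the distinct-score assumption already stated in the setup and follows from exchangeability of the calibration-plus-test sequence. This directly contributes the $K(1-\alpha+1/(n+1))$ summand in the theorem statement.

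For the miscovered term I would show that $|\widehat C(X)|\leq R(X,Y)-1$ on the event $\{Y\notin\widehat C(X)\}$. The reasoning is that the nonconformity scores considered in the paper (HPS, APS, RAPS, SAPS) are constructed so that $S(X,y)\leq\widehat Q<S(X,Y)$ forces $f(X)_y>f(X)_Y$, so every label included in $\widehat C(X)$ ranks strictly above $Y$ in the confidence order and therefore has rank smaller than $R(X,Y)$. Consequently,
\begin{align*}
\E\big[|\widehat C(X)|\,\indicator[Y\notin\widehat C(X)]\big]
\leq
\E\big[(R(X,Y)-1)\,\indicator[Y\notin\widehat C(X)]\big]
\leq
\E[R(X,Y)],
\end{align*}
and adding the two bounds yields the claim.

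The main obstacle is justifying the score--rank alignment used in the miscovered piece. For HPS it is immediate since $S(X,y)=1-f(X)_y$ is strictly decreasing in $f(X)_y$; for APS/RAPS/SAPS one has to unfold the cumulative-sum ``top-$k$ plus tie-breaker'' construction, which needs a short case-by-case check but ends in the same monotonicity statement. If one insists on arbitrary $S$ unrelated to $f$, the tight constant $K(1-\alpha+1/(n+1))$ cannot be recovered and would have to be loosened to $K$, so I would either treat the alignment as an implicit structural assumption on $S$ (satisfied by all scores the paper studies) or supply a one-line per-score verification inside the proof.
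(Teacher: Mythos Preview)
Your proposal is correct and takes a genuinely different, more direct route than the paper. The paper decomposes the inner sum $\sum_y \indicator[S(X,y)\leq\widehat Q]$ according to the rank comparison $\indicator[R(X,Y)\geq R(X,y)]$ versus $\indicator[R(X,Y)<R(X,y)]$, and then needs three technical lemmas (Lemmas~\ref{lemma:upper_bound_A}--\ref{lemma:useful_inequalities_in_cp_hps_aps}) to reduce the resulting pieces, in each case further splitting on $\indicator[S(X,Y)\leq\widehat Q]$ versus $\indicator[S(X,Y)>\widehat Q]$. You instead split directly on the coverage event $\{Y\in\widehat C(X)\}$, which makes the two ingredients apply without any intermediate bookkeeping: on the covered event the trivial bound $|\widehat C(X)|\leq K$ plus the standard upper coverage bound gives the $K(1-\alpha+1/(n+1))$ term, and on the miscovered event the score--rank monotonicity immediately gives $|\widehat C(X)|\leq R(X,Y)-1$. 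Both arguments ultimately rest on exactly the same two facts (the split-CP upper coverage bound and the monotone dependence of $S$ on rank for HPS/APS-type scores), so neither is more general; your path is simply shorter, while the paper's lemma-based organization makes the score--rank alignment property and where it is used more explicitly visible. Your caveat at the end is also on point: the bound as stated does not hold for an arbitrary $S$ unrelated to $f$, and the paper handles this exactly as you suggest, by verifying the monotonicity per score (Lemma~\ref{lemma:useful_inequalities_in_cp_hps_aps}).
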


\begin{proof}
(of Theorem \ref{theorem:expected_rank_upper_bounds_expected_size})

Proving Theorem \ref{theorem:expected_rank_upper_bounds_expected_size} requires the following four technical lemmas:

\begin{lemma}
\label{lemma:upper_bound_A}
The following inequality holds:
\begin{align}\label{eq:upper_bound_A}
&
\sum_{ y = 1 }^K \indicator [ S(X, y) \leq \widehat Q ] \cdot \indicator[ R(X, Y) \geq R(X, y) ]
\nonumber\\
\leq &
\sum_{ y = 1 }^K \indicator[ R(X, Y) \geq R(X, y) ] \cdot \indicator[ S(X, Y) \leq \widehat Q ] 
+ \sum_{ y = 1 }^K \indicator[ R(X, Y) \geq R(X, y) ] \cdot \indicator[ S(X, y) < S(X, Y) ]
.
\end{align}
\end{lemma}

\begin{lemma}
\label{lemma:upper_bound_B}
If the nonconformity score function $S$ is HPS, APS, or their variants,
then the following inequality holds:
\begin{align}\label{eq:upper_bound_B}
\sum_{ y = 1 }^K \indicator [ S(X, y) \leq \widehat Q ] \cdot \indicator[ R(X, Y) < R(X, y) ]
\leq 
\sum_{ y = 1 }^K \indicator[ R(X, Y) < R(X, y) ] \cdot \indicator[ S(X, Y) \leq \widehat Q ] 
.
\end{align}
\end{lemma}

\begin{lemma}
\label{lemma:useful_inequalities_in_cp_hps_aps}
If the nonconformity score function $S$ is HPS, APS, or their variants,
then we have the following inequalities:
\begin{align}\label{eq:useful_inequalities_in_cp_hps_aps_1}
(1) ~&~
\indicator[ R(X, Y) \geq R(X, y) ] \cdot \indicator[ S(X, y) < S(X, Y) ]
\leq \indicator[ R(X, Y) \geq R(X, y) ]
,
\\
\label{eq:useful_inequalities_in_cp_hps_aps_2}
(2) ~&~
\indicator[ S(X, y) \leq \widehat Q ] \cdot \indicator[ R(X, Y) < R(X, y) ] \cdot \indicator[ S(X, Y) > \widehat Q ] = 0
.
\end{align}
\end{lemma}

\begin{lemma}
\label{lemma:cp_coverage_bounds}
(Immediate results from Theorem 1 in \cite{romano2020classification})
For CP algorithms, regardless which nonconformity score function $S$ is used, 
we have the following lower and upper bounds for the coverage:
\begin{align}\label{eq:cp_coverage_bounds}
1 - \alpha 
\leq 
\P\{ Y \in \widehat C(X) \}
\leq  
1 - \alpha + 1/n
.
\end{align}
\end{lemma}

The proofs for Lemma \ref{lemma:appendix:upper_bound_A}, Lemma \ref{lemma:appendix:upper_bound_B} and Lemma \ref{lemma:useful_inequalities_in_cp_hps_aps} are deferred to Appendix \ref{subsection:appendix:proof:lemma:upper_bound_A}, Appendix \ref{subsection:appendix:proof:lemma:upper_bound_B} and Appendix \ref{subsection:appendix:proof:lemma:useful_inequalities_in_cp_hps_aps}, respectively.
Now we start proving Theorem \ref{theorem:expected_rank_upper_bounds_expected_size} by decomposing the expected size of conformal prediction sets as follows:

\begin{align*}
\E[ | \widehat C(X) | ]
= &
\E\Bigg[ \sum_{ y = 1 }^K \indicator [ S(X, y) \leq \widehat Q ] \Bigg]
=
\E\Bigg[ 
\sum_{ y = 1 }^K \indicator [ S(X, y) \leq \widehat Q ] \cdot 
\bigg( 
\indicator[ R(X, Y) \geq R(X, y) ] + \indicator[ R(X, Y) < R(X, y) ] 
\bigg) 
\Bigg]
\\
= &
\E\Bigg[ 
\underbrace{ 
\sum_{ y = 1 }^K \indicator [ S(X, y) \leq \widehat Q ] \cdot \indicator[ R(X, Y) \geq R(X, y) ]
}_{ \text{(\ref{eq:upper_bound_A}), Lemma \ref{lemma:upper_bound_A}} }
+ \underbrace{ 
\sum_{ y = 1 }^K \indicator [ S(X, y) \leq \widehat Q ] \cdot \indicator[ R(X, Y) < R(X, y) ]
}_{ \text{(\ref{eq:upper_bound_B}), Lemma \ref{lemma:upper_bound_B}} }
\Bigg]
\\
\stackrel{ (a) }{ \leq } &
\E \Bigg[ 
\sum_{ y = 1 }^K \indicator[ R(X, Y) \geq R(X, y) ] \cdot \indicator[ S(X, Y) \leq \widehat Q ] 
+ \sum_{ y = 1 }^K \indicator[ R(X, Y) \geq R(X, y) ] \cdot \indicator[ S(X, y) < S(X, Y) ]
\\
& 
+ \sum_{ y = 1 }^K \indicator[ R(X, Y) < R(X, y) ] \cdot \indicator[ S(X, Y) \leq \widehat Q ] 
\Bigg]
\\
= &
\E \Bigg[ 
\sum_{ y = 1 }^K \bigg( 
\underbrace{ 
\indicator[ R(X, Y) \geq R(X, y) ] + \indicator[ R(X, Y) < R(X, y) ] 
}_{ = 1 }
\bigg) \cdot \indicator[ S(X, Y) \leq \widehat Q ] 
\\
&
+ \sum_{ y = 1 }^K \indicator[ R(X, Y) \geq R(X, y) ] \cdot \indicator[ S(X, y) < S(X, Y) ]
\Bigg]
\\
= &
K \cdot 
\underbrace{ 
\E \Big[ \indicator[ S(X, Y) \leq \widehat Q ] \Big]
}_{ \text{(\ref{eq:cp_coverage_bounds}), Lemma \ref{lemma:cp_coverage_bounds}} }
+ \E \Bigg[ 
\sum_{ y = 1 }^K \underbrace{ 
\indicator[ R(X, Y) \geq R(X, y) ] \cdot \indicator[ S(X, y) < S(X, Y) ]
}_{ \leq \indicator[ R(X, Y) \geq R(X, y) ], \text{ (\ref{eq:useful_inequalities_in_cp_hps_aps_1}), Lemma \ref{lemma:useful_inequalities_in_cp_hps_aps}} }
\Bigg]
\\
\stackrel{ (b) }{ \leq } &
K ( 1 - \alpha + 1 / n )
+ \E \Bigg[ 
\sum_{ y = 1 }^K 
\indicator[ R(X, Y) \geq R(X, y) ]
\Bigg]
=
K ( 1 - \alpha + 1 / n )
+ \E [ R(X, Y) ]
,
\end{align*}
where the inequality $(a)$ is due to Lemma \ref{lemma:upper_bound_A} and Lemma \ref{lemma:upper_bound_B}.

This completes the proof for Theorem \ref{theorem:expected_rank_upper_bounds_expected_size}.
\end{proof}

\subsection{ Proof for Theorem \ref{theorem:new_objective_upper_bounds_expected_rank} }
\label{subsection:appendix:proof:new_objective_upper_bounds_expected_rank}

\begin{theorem}
\label{theorem:new_objective_upper_bounds_expected_rank_appendix}
(Theorem \ref{theorem:new_objective_upper_bounds_expected_rank} restated)
Suppose that $\E[(R(X, Y) - 1)(\ell(X, Y) - 1)] \geq -\E[\ell(X, Y)]$ or $\ell$ is the cross-entropy loss,
then the following inequality holds:
\begin{align*}
\E[R(X, Y)-1]
\leq 
\E[ R(X, Y) \cdot \ell(X, Y)]
=
\calL(f)
.
\end{align*}
\end{theorem}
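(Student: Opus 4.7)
The plan is to split into the two cases matching the two hypotheses. In both cases the target inequality rearranges to showing that
\[
\E[R(X,Y)\cdot\ell(X,Y)] - \E[R(X,Y)-1] \geq 0,
\]
so I would bound the left-hand side from below directly rather than manipulating $\E[R(X,Y)]$ on its own.

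For the covariance-type hypothesis, I would add and subtract $1$ inside both factors of the product $R\cdot\ell$ and expand algebraically:
\[
R\ell = (R-1)(\ell-1) + R + \ell - 1.
\]
Taking expectations and rearranging gives $\E[R\cdot\ell] - \E[R-1] = \E[(R-1)(\ell-1)] + \E[\ell]$, and the stated inequality $\E[(R-1)(\ell-1)] \geq -\E[\ell]$ immediately certifies that the right-hand side is non-negative. This is the easy half.

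For the cross-entropy case, I would avoid the covariance condition altogether and instead prove a stronger pointwise bound $R(X,Y)\cdot\ell(X,Y) \geq R(X,Y)-1$. The key observation is a rank--probability inequality: since $f(X)_Y$ is the $R(X,Y)$-th largest among $K$ non-negative softmax scores that sum to $1$, the $R(X,Y)$ largest scores are each at least $f(X)_Y$, so $R(X,Y)\cdot f(X)_Y \leq 1$, i.e.\ $f(X)_Y \leq 1/R(X,Y)$. Hence $\ell(X,Y) = -\log f(X)_Y \geq \log R(X,Y)$, and the target reduces to the elementary scalar inequality $R\log R \geq R-1$ for $R \geq 1$. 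I would verify this by setting $g(R) = \log R - 1 + 1/R$ and checking $g(1)=0$ and $g'(R) = (R-1)/R^2 \geq 0$ on $[1,\infty)$. Taking expectations then delivers the theorem.

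The main obstacle I anticipate is not computational but a careful justification of the rank--probability step $f(X)_Y \leq 1/R(X,Y)$: one has to be precise that the rank is defined with respect to the softmax simplex (so the scores are non-negative and sum to $1$) and that ties do not interfere, which is handled by the paper's earlier assumption that nonconformity scores are distinct. Once that piece is in place, the covariance case is a two-line expansion and the cross-entropy case is a one-variable calculus lemma, so the overall proof should be short.
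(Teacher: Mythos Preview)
Your proposal is correct and takes essentially the same approach as the paper: Case~(i) is identical (the same algebraic expansion of $R\ell$), and Case~(ii) relies on the same two ingredients, namely the rank--probability inequality $R(X,Y)\,f(X)_Y \le 1$ and the elementary bound $-\log u \ge 1-u$. The only cosmetic difference is the order in which you compose them---the paper applies $-\log u \ge 1-u$ at $u=f(X)_Y$ and then invokes $Rf_Y\le 1$, whereas you first pass to $\ell \ge \log R$ and then use $R\log R \ge R-1$ (which is the same concavity bound evaluated at $u=1/R$); both routes yield the same pointwise inequality $R\ell \ge R-1$.
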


\begin{proof}


The proof of Theorem \ref{theorem:new_objective_upper_bounds_expected_rank} depends on two different conditions: (1) $\E[(R(X, Y) - 1)(\ell(X, Y) - 1)] \geq -\E[\ell(X, Y)]$; (2) $\ell$ is the cross-entropy loss.

Now we discuss the two cases, respectively.

{\it Case (i): the condition $\E[(R(X, Y) - 1)(\ell(X, Y) - 1)] \geq -\E[\ell(X, Y)]$ holds.}

Then we have:

\begin{align*}
0 &
\leq \E[(R(X, Y) - 1)(\ell(X, Y) - 1)] + \E[\ell(X, Y)]
\\
&
= \E[(R(X, Y) - 1)(\ell(X, Y) - 1) + \ell(X, Y)]
\\
&
= \E[(R(X, Y) \cdot \ell(X, Y) - R(X, Y) - \ell(X, Y) + 1 + \ell(X, Y)]
\\
&
= \E[(R(X, Y) \cdot \ell(X, Y) - R(X, Y) + 1 ]
\\
&
= \E[(R(X, Y) \cdot \ell(X, Y)] - \E[R(X, Y) - 1 ],
\end{align*}
where the first inequality is due to condition that $\E[(R(X, Y) - 1)(\ell(X, Y) - 1)] \geq -\E[\ell(X, Y)]$.

Rearranging the above inequality, we have:
\begin{align*}
\E[R(X, Y) - 1 ] \leq \E[(R(X, Y) \cdot \ell(X, Y)].
\end{align*}

{\it Case (ii): $\ell$ is the cross-entropy loss.}

For given input $(X,Y)$, we define the sorted softmax probabilities for all classes \{$1,\cdots,K$\} such that $1 \ge f_{\theta}(x)_{( 1 )} \geq \cdots \ge f_{\theta}(x)_{( K )}\ge 0$
Thus, we have $\sum^{R(X,Y)}_{l=1} \frac{1}{R(X,Y)} f_{(l)}(X) \geq f_{(R(X,Y))}(X)$ for rank $R(X,Y)$, i.e., the mean of the largest $R(X,Y)$-th softmax scores is larger than the $R(X,Y)$-th score.

Then, we have the following inequality holds for any $(X,Y)$
\begin{align*}
1 
= \sum^K_{l=1} f_{(l)}(X) 
\geq \sum^{R(X,Y)}_{l=1} f_{(l)}(X) 
\geq R(X,Y) \cdot f_{(R(X,Y))}(X)
= R(X,Y) \cdot f_{Y}(X).
\end{align*}

Rearranging the above inequality, we have: $R(X,Y) \cdot f_{Y}(X) \leq 1$ for any $(X,Y)$. Taking the expectation for $(X,Y) \sim \calP$, we have: $\E[R(X,Y)\cdot f_Y(X)] \leq 1$

Recall that: $\ell(X, Y) = -\log f_Y(X)$.
According to the inequality that $- \log u \geq 1 - u$ for $u \in (0,1]$, we have:
\begin{align*}
&
\E[R(X,Y)] -1 
\\
\leq &
\E[R(X,Y)] -\E[R(X,Y)\cdot f_Y(X)] 
\\
= &
\E \Big [ R(X,Y) \big (1 -f_Y(X) \big ) \Big ]
\\
\leq &
\E [ R(X,Y) (- \log f_Y(X) ) ]
\\
= &
\E [ R(X,Y) \ell(X, Y)].
\end{align*}

Rearranging the above inequality, we have:
\begin{align*}
\E[R(X,Y)-1] \leq \E [ R(X,Y) \ell(X, Y)].
\end{align*}





\end{proof}

\section{ Proofs for Technical Lemmas }
\label{section:appendix:proofs_for_lemmas}

\subsection{ Proof for Lemma \ref{lemma:upper_bound_A} }
\label{subsection:appendix:proof:lemma:upper_bound_A}

\begin{lemmainappendix}
\label{lemma:appendix:upper_bound_A}
(Lemma \ref{lemma:upper_bound_A} restated)
The following inequality holds:
\begin{align*}
&
\sum_{ y = 1 }^K \indicator [ S(X, y) \leq \widehat Q ] \cdot \indicator[ R(X, Y) \geq R(X, y) ]
\nonumber\\
\leq &
\sum_{ y = 1 }^K \indicator[ R(X, Y) \geq R(X, y) ] \cdot \indicator[ S(X, Y) \leq \widehat Q ] 
+ \sum_{ y = 1 }^K \indicator[ R(X, Y) \geq R(X, y) ] \cdot \indicator[ S(X, y) < S(X, Y) ]
.
\end{align*}
\end{lemmainappendix}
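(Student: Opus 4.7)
The plan is to establish the desired bound term-by-term, since both sides are sums over $y\in\{1,\ldots,K\}$ and share the common nonnegative factor $\indicator[R(X,Y) \geq R(X,y)]$. Specifically, it suffices to prove, for each fixed $y$, the pointwise inequality
\begin{align*}
\indicator[S(X,y) \leq \widehat Q] \leq \indicator[S(X,Y) \leq \widehat Q] + \indicator[S(X,y) < S(X,Y)],
\end{align*}
and then multiply through by the nonnegative weight $\indicator[R(X,Y) \geq R(X,y)]$ and sum over $y$.

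To prove this pointwise inequality, I would use a short case analysis on the value of $S(X,y)$ relative to $\widehat Q$ and $S(X,Y)$. If $S(X,y) > \widehat Q$, then the left-hand side is zero while the right-hand side is nonnegative, so the bound holds trivially. Otherwise, $S(X,y) \leq \widehat Q$, and we split into two subcases. If $S(X,y) < S(X,Y)$, then the second indicator on the right-hand side equals $1$. If instead $S(X,y) \geq S(X,Y)$, then $S(X,Y) \leq S(X,y) \leq \widehat Q$, so the first indicator on the right-hand side equals $1$. In every subcase, the right-hand side is at least $1$ whenever the left-hand side is $1$, completing the verification.

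Honestly, this step presents no real obstacle, as the argument reduces to a purely logical case split. The one conceptual point worth emphasizing is that the pointwise inequality above holds \emph{unconditionally}: it never invokes the rank factor $\indicator[R(X,Y) \geq R(X,y)]$, which merely rides along on both sides. Consequently, this lemma is valid for arbitrary nonconformity score $S$ and calibrated threshold $\widehat Q$, unlike Lemma~\ref{lemma:upper_bound_B} whose proof will exploit the monotone rank–score relationship specific to HPS/APS-type scores. Summing the pointwise inequality (after reintroducing the common factor) over $y \in \{1,\ldots,K\}$ yields the bound stated in the lemma.
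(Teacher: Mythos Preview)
Your proof is correct and essentially the same as the paper's: both establish the pointwise inequality $\indicator[S(X,y)\le\widehat Q]\le\indicator[S(X,Y)\le\widehat Q]+\indicator[S(X,y)<S(X,Y)]$ and then multiply through by the rank factor, with the paper phrasing the case split as multiplication by the partition of unity $\indicator[S(X,Y)\le\widehat Q]+\indicator[S(X,Y)>\widehat Q]=1$ rather than your explicit case analysis. Your remark that the rank factor plays no role in this lemma (in contrast to Lemma~\ref{lemma:upper_bound_B}) is a valid and useful observation.
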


\begin{proof}
(of Lemma \ref{lemma:upper_bound_A})

\begin{align*}
&
\sum_{ y = 1 }^K \indicator [ S(X, y) \leq \widehat Q ] \cdot \indicator[ R(X, Y) \geq R(X, y) ]
\\
= &
\sum_{ y = 1 }^K \indicator [ S(X, y) \leq \widehat Q ] \cdot \indicator[ R(X, Y) \geq R(X, y) ] \cdot \bigg( \indicator[ S(X, Y) \leq \widehat Q ] + \indicator[ S(X, Y) > \widehat Q ] \bigg)
\\
= &
\sum_{ y = 1 }^K \underbrace{ \indicator [ S(X, y) \leq \widehat Q ] }_{ \leq 1 } \cdot \indicator[ R(X, Y) \geq R(X, y) ] \cdot \indicator[ S(X, Y) \leq \widehat Q ] 
\\
&
+ \sum_{ y = 1 }^K \indicator [ S(X, y) \leq \widehat Q ] \cdot \indicator[ R(X, Y) \geq R(X, y) ] \cdot \indicator[ S(X, Y) > \widehat Q ] 
\\
\stackrel{ (a) }{ \leq } &
\sum_{ y = 1 }^K 1 \cdot \indicator[ R(X, Y) \geq R(X, y) ] \cdot \indicator[ S(X, Y) \leq \widehat Q ] 
+ \sum_{ y = 1 }^K \indicator[ R(X, Y) \geq R(X, y) ] \cdot \indicator[ S(X, y) \leq \widehat Q < S(X, Y) ]
\\
\stackrel{ (b) }{ \leq } &
\sum_{ y = 1 }^K \indicator[ R(X, Y) \geq R(X, y) ] \cdot \indicator[ S(X, Y) \leq \widehat Q ] 
+ \sum_{ y = 1 }^K \indicator[ R(X, Y) \geq R(X, y) ] \cdot \indicator[ S(X, y) < S(X, Y) ]
,
\end{align*}
where the inequality $(a)$ is due to $\indicator[\cdot] \leq 1$, and
the inequality $(b)$ is due to $\indicator[x \leq y < z] \leq \indicator[x < z]$.
This completes the proof for Lemma \ref{lemma:upper_bound_A}.
\end{proof}

\subsection{ Proof for Lemma \ref{lemma:upper_bound_B} }
\label{subsection:appendix:proof:lemma:upper_bound_B}

\begin{lemmainappendix}
\label{lemma:appendix:upper_bound_B}
(Lemma \ref{lemma:upper_bound_B} restated)
If the nonconformity score function $S$ is HPS, APS, or their variants,
then the following inequality holds:
\begin{align*}
\sum_{ y = 1 }^K \indicator [ S(X, y) \leq \widehat Q ] \cdot \indicator[ R(X, Y) < R(X, y) ]
\leq 
\sum_{ y = 1 }^K \indicator[ R(X, Y) < R(X, y) ] \cdot \indicator[ S(X, Y) \leq \widehat Q ] 
.
\end{align*}
\end{lemmainappendix}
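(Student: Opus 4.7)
The plan is to reduce Lemma~\ref{lemma:upper_bound_B} to equation~\eqref{eq:useful_inequalities_in_cp_hps_aps_2} of Lemma~\ref{lemma:useful_inequalities_in_cp_hps_aps}, which captures the key rank--score monotonicity property of HPS, APS, and their variants. The core intuition is that for these score functions, the nonconformity score is a monotone function of the rank of the true label: a higher rank (lower softmax probability) induces a larger nonconformity score. Consequently, whenever $R(X,Y) < R(X,y)$, we must have $S(X,Y) \leq S(X,y)$, so that if $y$ is already admitted into the prediction set ($S(X,y) \leq \widehat Q$), then $Y$ must also be admitted ($S(X,Y) \leq \widehat Q$). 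This is exactly the content of \eqref{eq:useful_inequalities_in_cp_hps_aps_2}.

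To carry this out formally, I would first insert the trivial decomposition $\indicator[S(X,Y) \leq \widehat Q] + \indicator[S(X,Y) > \widehat Q] = 1$ into the left-hand side of \eqref{eq:upper_bound_B}, giving
\begin{align*}
\sum_{y=1}^K \indicator[S(X,y) \leq \widehat Q] \cdot \indicator[R(X,Y) < R(X,y)]
= &~ \sum_{y=1}^K \indicator[S(X,y) \leq \widehat Q] \cdot \indicator[R(X,Y) < R(X,y)] \cdot \indicator[S(X,Y) \leq \widehat Q] \\
&+ \sum_{y=1}^K \indicator[S(X,y) \leq \widehat Q] \cdot \indicator[R(X,Y) < R(X,y)] \cdot \indicator[S(X,Y) > \widehat Q].
\end{align*}
Next, I would apply \eqref{eq:useful_inequalities_in_cp_hps_aps_2} term by term to conclude that the second sum vanishes identically. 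Finally, I would bound the first sum from above by dropping the factor $\indicator[S(X,y) \leq \widehat Q] \leq 1$ and rearranging the remaining indicators to match the right-hand side of \eqref{eq:upper_bound_B}.

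The only nontrivial step is the rank--score monotonicity underlying \eqref{eq:useful_inequalities_in_cp_hps_aps_2}, but since that lemma is already assumed available, the proof of Lemma~\ref{lemma:upper_bound_B} itself is a short bookkeeping argument with no residual obstacle. The hypothesis restricting $S$ to HPS, APS, or their variants enters only through Lemma~\ref{lemma:useful_inequalities_in_cp_hps_aps}; once that is granted, the decomposition above handles everything mechanically.
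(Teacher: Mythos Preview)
Your proposal is correct and mirrors the paper's proof essentially step for step: the paper also inserts the decomposition $\indicator[S(X,Y)\leq\widehat Q]+\indicator[S(X,Y)>\widehat Q]=1$, kills the second sum via \eqref{eq:useful_inequalities_in_cp_hps_aps_2} of Lemma~\ref{lemma:useful_inequalities_in_cp_hps_aps}, and bounds the first sum by dropping the factor $\indicator[S(X,y)\leq\widehat Q]\leq 1$.
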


\begin{proof}
(of Lemma \ref{lemma:upper_bound_B})

\begin{align*}
&
\sum_{ y = 1 }^K \indicator [ S(X, y) \leq \widehat Q ] \cdot \indicator[ R(X, Y) < R(X, y) ]
\\
= &
\sum_{ y = 1 }^K \indicator [ S(X, y) \leq \widehat Q ] \cdot \indicator[ R(X, Y) < R(X, y) ] \cdot \bigg( \indicator[ S(X, Y) \leq \widehat Q ] + \indicator[ S(X, Y) > \widehat Q ] \bigg)
\\
= &
\sum_{ y = 1 }^K \underbrace{ \indicator [ S(X, y) \leq \widehat Q ] }_{ \leq 1 } \cdot \indicator[ R(X, Y) < R(X, y) ] \cdot \indicator[ S(X, Y) \leq \widehat Q ] 
\\
&
+ \sum_{ y = 1 }^K \underbrace{ 
\indicator [ S(X, y) \leq \widehat Q ] \cdot \indicator[ R(X, Y) < R(X, y) ] \cdot \indicator[ S(X, Y) > \widehat Q ] 
}_{ = 0, \text{ Lemma \ref{lemma:useful_inequalities_in_cp_hps_aps}} }
\\
\stackrel{ (a) }{ \leq } &
\sum_{ y = 1 }^K \indicator[ R(X, Y) < R(X, y) ] \cdot \indicator[ S(X, Y) \leq \widehat Q ] 
,
\end{align*}
where the inequality $(a)$ is due to $\indicator[\cdot] \leq 1$, and 
$\indicator[ S(X, y) \leq \widehat Q ] \cdot \indicator[ R(X, Y) < R(X, y) ] \cdot \indicator[ S(X, Y) \leq \widehat Q ] = 0$ if the nonconformity score function $S$ is HPS, APS, or their variants, according to (\ref{eq:useful_inequalities_in_cp_hps_aps_2}), Lemma \ref{lemma:useful_inequalities_in_cp_hps_aps}.

This completes the proof for Lemma \ref{lemma:upper_bound_B}.
\end{proof}

\subsection{ Proof for Lemma \ref{lemma:useful_inequalities_in_cp_hps_aps} }
\label{subsection:appendix:proof:lemma:useful_inequalities_in_cp_hps_aps}

\begin{lemmainappendix}
\label{lemma:appendix:useful_inequalities_in_cp_hps_aps}
(Lemma \ref{lemma:useful_inequalities_in_cp_hps_aps} restated)
If the nonconformity score function $S$ is HPS, APS, or their variants,
then we have the following inequalities:
\begin{align*}
(1) ~&~
\indicator[ R(X, Y) \geq R(X, y) ] \cdot \indicator[ S(X, y) < S(X, Y) ]
\leq \indicator[ R(X, Y) \geq R(X, y) ]
,
\\
(2) ~&~
\indicator[ S(X, y) \leq \widehat Q ] \cdot \indicator[ R(X, Y) < R(X, y) ] \cdot \indicator[ S(X, Y) > \widehat Q ] = 0
.
\end{align*}
\end{lemmainappendix}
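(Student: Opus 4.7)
The plan is to observe that Claim (1) is essentially trivial and holds for any nonconformity score, whereas Claim (2) crucially exploits the monotonicity of $S(X,\cdot)$ in the rank $R(X,\cdot)$ shared by HPS, APS, RAPS, and SAPS.

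For Claim (1), I would simply note that $\indicator[S(X,y) < S(X,Y)] \in \{0,1\}$, so multiplying any nonnegative quantity by it can only decrease or preserve it. Hence
\begin{align*}
\indicator[R(X,Y)\geq R(X,y)] \cdot \indicator[S(X,y)<S(X,Y)] \leq \indicator[R(X,Y)\geq R(X,y)]
\end{align*}
follows immediately, with no structural assumption on $S$ needed; it is recorded here because it is used in tandem with Claim (2) in the proof of Lemma~\ref{lemma:upper_bound_A}.

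For Claim (2), the core observation is that for HPS ($S(x,y)=1-f(x)_y$), APS (cumulative softmax mass over classes ranked at or above $y$), RAPS, and SAPS, the map $y\mapsto S(x,y)$ is non-decreasing in the rank $R(x,y)$: a class ranked more prominently (smaller rank index) always receives a smaller nonconformity score. Under the paper's tie-free assumption on scores, this monotonicity is in fact strict. I would record this property as a brief preliminary, checking it case-by-case for the four score families in an appendix, since it is the single structural fact behind both Lemma~\ref{lemma:upper_bound_B} and the bound of Theorem~\ref{theorem:expected_rank_upper_bounds_expected_size}.

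Given that property, I would finish Claim (2) by contradiction. Suppose all three indicators equal $1$, so that simultaneously $S(X,y)\leq \widehat Q$, $R(X,Y)<R(X,y)$, and $S(X,Y)>\widehat Q$. The middle condition combined with the strict rank-monotonicity of $S$ forces $S(X,Y) < S(X,y) \leq \widehat Q$, contradicting $S(X,Y)>\widehat Q$. Hence at least one indicator vanishes and the product is identically zero. The only real hurdle is making the rank-monotonicity fact fully explicit for every score variant covered by the phrase ``or their variants''; nothing in the argument is analytically delicate, but a careful recap of the HPS, APS, RAPS, and SAPS definitions is needed to keep the statement watertight.
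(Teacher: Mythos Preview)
Your proposal is correct. For Claim~(1), you take the trivially correct route (any indicator is at most $1$), whereas the paper actually proves the stronger \emph{equality}
\[
\indicator[R(X,Y)\geq R(X,y)]\cdot\indicator[S(X,y)<S(X,Y)] \;=\; \indicator[R(X,Y)>R(X,y)]
\]
via a case analysis for HPS and APS (using an auxiliary lemma that $f(X)_y>f(X)_Y \Longleftrightarrow R(X,Y)>R(X,y)$), and only then bounds by $\indicator[R(X,Y)\geq R(X,y)]$. Your shortcut is cleaner and correctly highlights that (1) needs no structural hypothesis on $S$; the paper's longer route, however, yields a reusable byproduct which it recycles (in symmetric form) to prove Claim~(2). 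For Claim~(2) itself, your contradiction argument and the paper's argument are logically identical: both rest on the rank-monotonicity $R(X,Y)<R(X,y)\Rightarrow S(X,Y)<S(X,y)$, which you propose to check score-by-score and the paper obtains from its intermediate equality. Net effect: your presentation separates the trivial part from the substantive part more transparently; the paper's unifies both under one equivalence but somewhat obscures the triviality of (1).
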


\begin{proof}
(of Lemma \ref{lemma:useful_inequalities_in_cp_hps_aps})

Before proving Lemma \ref{lemma:useful_inequalities_in_cp_hps_aps}, we first the the following technical lemma:
\begin{lemma}
\label{lemma:tech_lemma_1_for_lemma_3}
\begin{align*}
\indicator\bigl[f(X)_y > f(X)_Y\bigr]
\;=\;
\indicator\bigl[R(X,Y) > R(X,y)\bigr].
\end{align*}
\end{lemma}

The proof of Lemma \ref{lemma:tech_lemma_1_for_lemma_3} is deferred at the end of proof of Lemma \ref{lemma:useful_inequalities_in_cp_hps_aps}. Now we begin to prove Lemma \ref{lemma:useful_inequalities_in_cp_hps_aps}.
This proof consists of two parts, (1) Equation (\ref{eq:useful_inequalities_in_cp_hps_aps_1}): $\indicator[ R(X, Y) \geq R(X, y) ] \cdot \indicator[ S(X, y) < S(X, Y) ]
= \indicator[ R(X, Y) \geq R(X, y) ]$ ;
and (2) Equation (\ref{eq:useful_inequalities_in_cp_hps_aps_2}): $\indicator[ S(X, y) \leq \widehat Q ] \cdot \indicator[ R(X, Y) < R(X, y) ] \cdot \indicator[ S(X, Y) > \widehat Q ] = 0$.

{\bf Proof of Equation (\ref{eq:useful_inequalities_in_cp_hps_aps_1}):}

Recall the definitions of HPS and APS scoring functions:
\begin{align*}
S^{\HPS}(X, Y) 
&
=  
1 - f(X)_Y,
\\
S^{\APS}(X, Y) 
&
=  
\sum_{l=1}^{R(X, Y)-1} f(X)_{(l)} + U \cdot f(X)_{(R(X, Y))} 
,
\end{align*}
where $U \in [0,1]$ is a uniform random variable to break ties.

Then we discuss the two cases where HPS and APS are used as the nonconformity score function $S$, respectively.

\medskip

{\it Case (i): HPS is used as the nonconformity score function $S$.}

\begin{align*}
&
\indicator[ R(X, Y) \geq R(X, y) ] \cdot \indicator[ S(X, y) < S(X, Y) ]
\nonumber \\
=
&
\indicator[ R(X, Y) \geq R(X, y) ] \cdot \indicator[ 1 - f(X)_y < 1 - f(X)_Y ]
\nonumber \\
=
&
\indicator[ R(X, Y) \geq R(X, y) ] \cdot \indicator[ \underbrace{f(X)_y > f(X)_Y}_{ = R(X, Y) > R(X, y), \text{ Lemma \ref{lemma:tech_lemma_1_for_lemma_3}} } ]
\nonumber \\
=
&
\indicator[ R(X, Y) \geq R(X, y) ] \cdot \indicator[ R(X, Y) > R(X, y) ]
\nonumber \\
=
&
\indicator[ \{ R(X, Y) \geq R(X, y) \} \cap \{ R(X, Y) > R(X, y) \} ] 
\nonumber \\
=
&
\indicator[ R(X, Y) > R(X, y) ], 
\end{align*}
where the first equality is due to the definition of HPS score, the second equality is due to the result of rearranging $1 - f(X)_y < 1 - f(X)_Y$ to $f(X)_y > f(X)_Y$, the third equality is due to Lemma \ref{lemma:tech_lemma_1_for_lemma_3}, and the fourth equality is due to $\indicator[A] \cdot \indicator[B] = \indicator[ A \cap B] $.

\medskip

Define two sets $\{R(X, Y) > R(X, y) \} $ and $ \{ R(X, Y) \geq R(X, y) \}$. Due to $\{R(X, Y) > R(X, y) \} \subseteq \{ R(X, Y) \geq R(X, y) \}$,
the following inequality holds: 
\begin{align*}
\indicator[ R(X, Y) > R(X, y) ] \leq \indicator[ R(X, Y) \geq R(X, y) ]. 
\end{align*}

Therefore, we have:
\begin{align}
\label{eq:lemma_3_hps}
\indicator[ R(X, Y) \geq R(X, y) ] \cdot \indicator[ S(X, y)< S(X, Y) ] 
= 
\indicator[ R(X, Y) > R(X, y) ]
\leq \indicator[ R(X, Y) \geq R(X, y) ]. 
\end{align}

{\it Case (ii): APS is used as the nonconformity score function $S$.}

  Recall that
  \[
    R(X,y)
    \;=\;
    \sum_{\ell\in\calY}
      \indicator\bigl[f(X)_\ell \;\ge\;f(X)_y\bigr],
  \]
  so \(R(X,y)\) is exactly the position of \(y\) in the sorted list
  \(\bigl\{\,f(X)_\ell : \ell\in\calY\bigr\}\) (sorted in descending order).  Likewise,
  \[
    R(X,Y)
    \;=\;
    \sum_{\ell\in\calY}
      \indicator\bigl[f(X)_\ell \;\ge\;f(X)_Y\bigr].
  \]
  
  We now show that, for any fixed \(U\in[0,1]\),
  \[
    R(X,y)\;<\;R(X,Y)
    \quad\Longleftrightarrow\quad
    S^{\APS}(X,y)\;<\;S^{\APS}(X,Y),
  \]
  and if \(R(X,y)=R(X,Y)\) then \(S^{\APS}(X,y)=S(X,Y)\).  From this, it follows immediately that
  \[
    \indicator\bigl[R(X,Y)\ge R(X,y)\bigr]\;
    \indicator\bigl[S(X,y)<S(X,Y)\bigr]
    \;=\;
    \indicator\bigl[R(X,Y)>R(X,y)\bigr].
  \]

  \medskip

  \textbf{Condition 1:} \(R(X,y)<R(X,Y)\).  
  
According to Lemma \ref{lemma:tech_lemma_1_for_lemma_3},
  \[
    R(X,y)<R(X,Y)
    \;\Longrightarrow\;
    f(X)_{(\,R(X,y)\,)}
    \;>\;
    f(X)_{(\,R(X,Y)\,)}.
  \]
  
  Therefore,
  \begin{align*}
    S(X,y)
    \;=\;
    \sum_{\ell=1}^{\,R(X,y)-1\,} f(X)_{(\ell)}
    \;+\;
    U\,f(X)_{(\,R(X,y)\,)}
    &<
    \sum_{\ell=1}^{\,R(X,Y)-1\,} f(X)_{(\ell)}
    \;+\;
    U\,f(X)_{(\,R(X,Y)\,)}
    \;=\;
    S(X,Y),
  \end{align*}
  regardless of the particular value of \(U\).  Hence
  \[
    R(X,y)<R(X,Y)
    \quad\Longrightarrow\quad
    S(X,y)<S(X,Y).
  \]

  \medskip

  \textbf{Condition 2:} \(R(X,y)>R(X,Y)\).  
  
  A completely symmetric argument shows
  \[
    R(X,y)>R(X,Y)
    \quad\Longrightarrow\quad
    S(X,y)>S(X,Y),
  \]
  again for any \(U\in[0,1]\).

  \medskip

  \textbf{Condition 3:} \(R(X,y)=R(X,Y)\).  
  
  In this situation, both sums
  \(\sum_{\ell=1}^{R-1}f(X)_{(\ell)}\) are identical, and both have the same final term \(U\,f(X)_{(R)}\).  Concretely,
  \[
    S(X,y)
    =
    \sum_{\ell=1}^{\,R(X,y)-1\,} f(X)_{(\ell)}
    \;+\;
    U\,f(X)_{(\,R(X,y)\,)}
    =
    \sum_{\ell=1}^{\,R(X,Y)-1\,} f(X)_{(\ell)}
    \;+\;
    U\,f(X)_{(\,R(X,Y)\,)}
    =
    S(X,Y).
  \]
  

  \medskip

  Combining all three conditions shows:
  \begin{itemize}
    \item If \(R(X,y)<R(X,Y)\), then \(S(X,y)<S(X,Y)\).
    \item If \(R(X,y)=R(X,Y)\), then \(S(X,y)=S(X,Y)\).
    \item If \(R(X,y)>R(X,Y)\), then \(S(X,y)>S(X,Y)\).
  \end{itemize}

  Equivalently,
  \[
    R(X,y)<R(X,Y)
    \;\Longleftrightarrow\;
    S(X,y)<S(X,Y),
    \quad\text{and}\quad
    R(X,y)=R(X,Y)
    \;\Longrightarrow\;
    S(X,y)=S(X,Y).
  \]

Therefore, we have:
\begin{align}
\label{eq:lemma_3_aps}
\indicator\bigl[R(X,Y)\ge R(X,y)\bigr]
    \;\cdot\;
    \indicator\bigl[S(X,y)<S(X,Y)\bigr]
    \;=\;
    \indicator\bigl[R(X,Y)>R(X,y)\bigr]
    \; \leq \;
    \indicator\bigl[R(X,Y) \geq R(X,y)\bigr],
\end{align}
where the last inequality is due to $\{R(X, Y) > R(X, y) \} \subseteq \{ R(X, Y) \geq R(X, y) \}$. 

\medskip

Combining Equation (\ref{eq:lemma_3_hps}) for HPS score and (\ref{eq:lemma_3_aps}) for APS score, we finished the proof of Equation (\ref{eq:useful_inequalities_in_cp_hps_aps_1}) when nonconformity score function $S$ is HPS, APS, or their variants:
\begin{align*}
\indicator\bigl[R(X,Y)\ge R(X,y)\bigr]
    \;\cdot\;
    \indicator\bigl[S(X,y)<S(X,Y)\bigr]
    \; \leq \;
    \indicator\bigl[R(X,Y) \geq R(X,y)\bigr]. 
\end{align*}

\medskip

{\bf Proof of Equation (\ref{eq:useful_inequalities_in_cp_hps_aps_2}):}

From Equation (\ref{eq:lemma_3_hps}) for HPS score and (\ref{eq:lemma_3_aps}) for APS score, we have: 

\begin{align*}
\indicator\bigl[R(X,Y)\ge R(X,y)\bigr]
    \;\cdot\;
    \indicator\bigl[S(X,y)<S(X,Y)\bigr]
    \;=\;
    \indicator\bigl[R(X,Y)>R(X,y)\bigr].
\end{align*}

A completely symmetric argument shows: 
\begin{align*}
\indicator\bigl[R(X,Y)\le R(X,y)\bigr]
    \;\cdot\;
    \indicator\bigl[S(X,y)>S(X,Y)\bigr]
    \;=\;
    \indicator\bigl[R(X,Y)<R(X,y)\bigr].
\end{align*}

\medskip

Therefore, we have:
\begin{align*}
&
\indicator[ S(X, y) \leq \widehat Q ] \cdot \indicator[ R(X, Y) < R(X, y) ] \cdot \indicator[ S(X, Y) > \widehat Q ] 
\\
=
&
\indicator[ \{ S(X, y) \leq \widehat Q \} \cap \{ S(X, Y) > \widehat Q\} ] \cdot \indicator[ R(X, Y) < R(X, y) ] 
\\
=
&
\indicator[ S(X, y) \leq \widehat Q < S(X, Y) ] \cdot \indicator[ R(X, Y) < R(X, y) ] 
\\
=
&
\indicator[ \underbrace{S(X, y) \leq \widehat Q < S(X, Y)}_{A_1} ] \cdot \indicator\bigl[R(X,Y)\le R(X,y)\bigr] \cdot
\indicator\bigl[\underbrace{S(X,y)>S(X,Y)}_{A2}\bigr].
\end{align*}
where we define two events $A_1 = S(X, y) \leq \widehat Q < S(X, Y)$ and $A_2 = S(X,y)>S(X,Y)$, and the two events are mutually exclusive.

That is, the following equality holds: 
\begin{align*}
\indicator[S(X, y) \leq \widehat Q < S(X, Y)] \cdot \indicator\bigl[R(X,Y)\le R(X,y)\bigr] \cdot
\indicator\bigl[ S(X,y)>S(X,Y)\bigr]
= 0.
\end{align*}

Therefore, we have:
\begin{align*}
\indicator[ S(X, y) \leq \widehat Q ] \cdot \indicator[ R(X, Y) < R(X, y) ] \cdot \indicator[ S(X, Y) > \widehat Q ] 
= 0.
\end{align*}
\end{proof}

\begin{proof}
(of Lemma \ref{lemma:tech_lemma_1_for_lemma_3})

Recall definition of $R(X, y) = \sum_{l \in \calY} \indicator[ f(X)_l \geq f(X)_y ]$, now we prove that $\indicator[ f(X)_y > f(X)_Y ] = \indicator[ R(X, Y) > R(X, y) ]$:

First, we will prove that:

\noindent\textbf{(1) If $f(X)_y > f(X)_Y$, then $R(X,Y) > R(X,y)$.}

Since \(f(X)_y > f(X)_Y\), any label \(l\) satisfying $f(X)_l \;\ge\; f(X)_y$ must also satisfy $f(X)_l \;\ge\; f(X)_Y$,
because \(f(X)_y\) is strictly larger than \(f(X)_Y\).  

Hence
\begin{align*}
\bigl\{\,l : f(X)_l \ge f(X)_y \bigr\}
    \;\subsetneq\;
    \bigl\{\,l : f(X)_l \ge f(X)_Y \bigr\}.
\end{align*}

Counting the elements of these sets gives
    \[
    R(X,y)
    \;=\;
    \bigl|\{\,l : f(X)_l \ge f(X)_y\}\bigr|
    \;<\;
    \bigl|\{\,l : f(X)_l \ge f(X)_Y\}\bigr|
    \;=\;
    R(X,Y).
    \]
    
That is, \(R(X,Y) > R(X,y)\).  
Therefore, 
\begin{align*}
f(X)_y > f(X)_Y \Longrightarrow R(X,Y) > R(X,y).
\end{align*}


\noindent\textbf{(2) If $R(X,Y) > R(X,y)$, then $f(X)_y > f(X)_Y$.}

Suppose, for the sake of contradiction, that \(f(X)_y \le f(X)_Y\).  There are two subcases:

\emph{Case A:} \(f(X)_y < f(X)_Y\).  In that case, whenever 
    \[
    f(X)_l \;\ge\; f(X)_Y,
    \]
    it follows that
    \[
    f(X)_l \;\ge\; f(X)_y,
    \]
    since \(f(X)_Y\) exceeds \(f(X)_y\).  Thus
    \[
    \bigl\{\,l : f(X)_l \ge f(X)_Y \bigr\}
    \;\subseteq\;
    \bigl\{\,l : f(X)_l \ge f(X)_y \bigr\},
    \]
    and counting yields
    \[
    R(X,Y)
    \;=\;
    \bigl|\{\,l : f(X)_l \ge f(X)_Y\}\bigr|
    \;\le\;
    \bigl|\{\,l : f(X)_l \ge f(X)_y\}\bigr|
    \;=\;
    R(X,y).
    \]
    Hence \(R(X,Y) \le R(X,y)\), contradicting the assumption \(R(X,Y) > R(X,y)\).

\emph{Case B:} \(f(X)_y = f(X)_Y\).  Then
    \[
    \bigl\{\,l : f(X)_l \ge f(X)_y \bigr\}
    \;=\;
    \bigl\{\,l : f(X)_l \ge f(X)_Y \bigr\},
    \]
    
so \(R(X,y) = R(X,Y)\), again contradicting \(R(X,Y) > R(X,y)\).

In either subcase, the assumption \(f(X)_y \le f(X)_Y\) leads to 
    \(R(X,Y) \le R(X,y)\), which contradicts \(R(X,Y) > R(X,y)\).  

Therefore it must be that \(f(X)_y > f(X)_Y\).  Equivalently,
\begin{align*}
R(X,Y) > R(X,y) \implies f(X)_y > f(X)_Y.
\end{align*}

Combining (1) and (2) shows
\[
f(X)_y > f(X)_Y 
\quad\Longleftrightarrow\quad
R(X,Y) > R(X,y),
\]
and hence
\[
\indicator\bigl[f(X)_y > f(X)_Y\bigr]
\;=\;
\indicator\bigl[R(X,Y) > R(X,y)\bigr].
\]

\end{proof}

\section{Additional background details}
\label{appendix:section:additional_details}

\noindent \textbf{nonconformity scoring functions.} The homogeneous prediction sets (HPS) \cite{sadinle2019least} scoring function is
defined as follows:
\begin{align}
\label{eq:HPS}
S_f^{\HPS}(X, Y) = 1 - f_{\theta}(X)_Y.
\end{align}

\cite{romano2020classification} has proposed another conformity scoring function, Adaptive Prediction Sets (APS). 
APS scoring function is based on sorted probabilities.
For a given input $X$, we sort the softmax probabilities for all classes \{$1,\cdots,K$\} such that $1 \ge f_{\theta}(x)_{( 1 )} \geq \cdots \ge f_{\theta}(x)_{( K )}\ge 0$, and compute the cumulative confidence as follows:

\begin{align}
\label{eq:APS}
S_f^{\APS}(X, Y) = \sum^{r_f(X, Y)-1}_{l=1} f_{\theta}(X)_{( l )} + U \cdot f_{\theta}(X)_{( r_f(X, Y) )},
\end{align}
where $U \in [0,1]$ is a uniform random variable to break ties.

To reduce the probability of including unnecessary labels (i.e., labels with high ranks) and thus improve the predictive efficiency, \cite{angelopoulos2021uncertainty} has proposed Regularized Adaptive Prediction Sets (RAPS) scoring function. 
The RAPS score is computed as follows:
\begin{align}
\label{eq:RAPS}
S_f^{\RAPS}(X, Y) = 
\sum^{r_f(X, Y)-1}_{l=1} f_{\theta}(X)_{( l )} + U \cdot f_{\theta}(X)_{( r_f(X, Y) )} + \lambda_{\RAPS} (r_f(X, Y) - k_{\reg})^+,
\end{align}
where $\lambda_{\RAPS}$ and $ k_{\reg}$ are two hyper-parameters.

To reduce the dependence of the conformal score on potentially miscalibrated softmax probabilities while still leveraging label-ranking information, \cite{huang2024conformal} proposed the Sorted Adaptive Prediction Sets (SAPS) scoring function. 
The key idea of SAPS is to discard all probability values except for the maximum softmax probability and to encode the remaining uncertainty purely through the rank of the true label.
The SAPS score is computed as follows:
\begin{align}
\label{eq:SAPS}
S_f^{\SAPS}(X, Y) =
\begin{cases}
U \cdot f_{\theta}(X)_{(1)}, & \text{if } r_f(X, Y) = 1, \\
f{\theta}(X)_{(1)} + \lambda_{\SAPS} \bigl(r_f(X, Y) - 2 + U\bigr) , & \text{otherwise},
\end{cases}
\end{align}
where $f_{\theta}(X)_{(1)} = \max_{c} f_{\theta}(X)_c$ denotes the maximum softmax probability, and $\lambda{\SAPS} > 0$ is a hyper-parameter that controls the strength of the rank-based penalty.

\noindent \textbf{Objective function for conformal training methods.} ConfTr \cite{stutz2021learning} estimates a soft measure of the prediction set size, defined as follows:
\begin{align}
\label{eq:ps_size_approximate_conftr}
\widehat \calL_{\ConfTr}(f) 
= &
\E_{\widehat q_f \sim \calQ_f} \Bigg [ \frac{1}{n} \sum_{i=1}^n \sum_{y \in \calY } \tilde \indicator \big [ S_f(X_i, y) \leq \widehat q_f \big ] \Bigg ]
,
\end{align}
where $\tilde \indicator[\cdot]$ is a smoothed estimator for the indicator function $\indicator[\cdot]$ and defined by a Sigmoid function \cite{stutz2021learning}, 
i.e.,
$\tilde \indicator[ S \leq q ] = 1/(1+\exp(-(q-S)/\tau_\Sigmoid))$ with a tunable temperature parameter $\tau_\Sigmoid$.


CUT \cite{einbinder2022training} measures the maximum deviation from the uniformity of conformity scores, defined as follows:
\begin{align}
\label{eq:ps_size_approximate_cut}
\widehat \calL_{\UA}(f) 
= &
\E_{\widehat q_f \sim \calQ_f} \Big [ \sup_{\alpha \in [0,1]} \big |  (1-\alpha) - \widehat q_f(\alpha) \big | \Big ]
,
\end{align}
where $\widehat q_f(\alpha)$ is the empirical batch-level quantile in $\calB$ of input $\alpha \in [0,1]$.

\section{Additional Experimental Setup Details}
\label{subsec:train_test_strategies}

\noindent \textbf{Dataset and Split.}
We utilize three widely used benchmark datasets CIFAR-100 \cite{krizhevsky2009learning}, Caltech-101 \cite{FeiFei2004LearningGV},
and iNaturalist \cite{van2018iNaturalist}. 
The original test sets are divided into calibration and test subsets to enable more reliable evaluation. 
Table \ref{tab:Data_stat} presents a summary of key dataset statistics, which are elaborated upon in the following sections.  

\vspace{-1ex}

\begin{table*}[!ht]
\centering
\caption{Description of the data sets are given in the table. $^{*}$The number of classes in the iNaturalist data set depends on the taxonomy level (e.g., species, genus, family). We employ "Fungi" species which has 341 different categories.}
\label{tab:Data_stat}
\begin{tabular}{|p{2.8cm}|p{2.3cm}|p{2.3cm}|p{2.3cm}|p{2.3cm}|p{2.3cm}|}
\hline
Data            & Number of Classes    & Number of Training Data      & Number of Validation Data     & Number of Calibration Data      & Number of Test Data   \\ \hline
CIFAR-100        & 100                  & 45000                        & 5000                          & 3000                            & 7000                  \\ \hline
Caltech-101     & 101                  & 4310                         & 1256                          & 1111                            & 2000                  \\ \hline
iNaturalist     & 341*                 & 15345                        & 1705                          & 1410                            & 2000                  \\ \hline
\end{tabular}
\end{table*}

\noindent \textbf{Hyperparameters for training.}
We treat the dataset, base model, batch size, number of training epochs, learning rate, learning schedule, momentum, gamma, weight decay, and temperature as hyperparameters.
Momentum and weight decay help stabilize optimization, gamma controls learning rate decay, and temperature adjusts the smoothness of softmax scores for nonconformity calculation.
We use a gradual, performance-driven hyperparameter tuning strategy tailored to each model--dataset pair. 
Specifically, we consider the learning rate \( \eta \in [0.0001, 0.05] \), 
weight decay \( \lambda \in [0.0001, 0.002] \), and temperature \( T \in [1.0, 2.5] \), 
and evaluate multiple intermediate values based on validation performance 
(e.g., \( \eta = 0.0002, 0.018, 0.025 \)). 
Learning rate schedules are selected from a small set of commonly used decay steps: 
\(\{[3], [5], [18, 30], [25, 35], [25, 40]\}\). 
The number of training epochs is fixed at either 40 or 60, and the batch size is set to either 64 or 128. 
Momentum is consistently set to 0.9, and \(\gamma\) is chosen from 
\(\{0.1, 0.4, 0.95, 0.97\}\). 
The final hyperparameter values used in our main experiment results are summarized in Table~\ref{tab:finetune_hyper_params}.

\begin{table}[!ht]
\centering
\caption{
\textbf{hyperparameters.}
The below table shows the details we used to train our models. We reported the hyperparameters which gives the best predictive efficiency. We employed SGD optimizer for all training unless specified.}
\label{tab:finetune_hyper_params}
\resizebox{\textwidth}{!}{
\begin{tabular}{|c|c|c|c|c|c|c|c|c|c|}
\hline
Data                           & Architecture & Batch size & Epochs & $\eta$    & lr schedule & Momentum & weight decay & $\gamma$ & Temperature \\ \hline
\multirow{2}{*}{CIFAR-100}      & ResNet & 128        & 60   & 0.04  & 25, 40   & 0.9  & 0.0008  & 0.1 & 1.5       \\ \cline{2-10} 
& DenseNet & 64         & 60  & 0.025  & 25, 40    & 0.9 & 0.0004  & 0.1  & 1.8    \\ 
\hline
\multirow{2}{*}{Caltech-101} & ResNet  & 128       & 60  & 0.018  & 20, 35   & 0.9   & 0.0006   & 0.4   & 1.0       \\  \cline{2-10} 
 & DenseNet & 128        & 60  & 0.019 & 18, 30  & 0.9  & 0.0005   & 0.4   & 1.0       \\ 
 \hline 
\multirow{2}{*}{iNaturalist} & ResNet   & 128     & 60  & 0.0005 & 5    & 0.9   & 0.0008 & 0.95  & 1.5      \\ \cline{2-10} 
  & DenseNet  & 128     & 60  & 0.0002 & 5       & 0.9  & 0.0006  & 0.97  & 2.0      \\
 \hline
\end{tabular}
}
\end{table}

\vspace{1ex}
\noindent
\textbf{Evaluation results.} In our study, after the training phase for each method, we apply HPS (Equation~\ref{eq:HPS}), APS (Equation~\ref{eq:APS}), and RAPS (Equation~\ref{eq:RAPS}) during the calibration and testing stages.
We compute the marginal coverage and the APSS, reporting the mean and standard deviation across runs.
The results are summarized in Tables~\ref{tab:cvg_set_hps_train_hps_test}, \ref{tab:cvg_set_hps_train_aps_test}, and \ref{tab:cvg_set_hps_train_raps_test}.


\section{Additional Experiments}
\label{appendix:subsec:additional_exps}

\subsection{Additional Experiments for Marginal Coverage }
\label{appendix:subsec:additional_exps_marginal}

\begin{table*}[!ht]
\centering
\caption{
\textbf{Overall comparison on three datasets and calibrated by HPS on DenseNet and ResNet} with $\alpha = 0.1$.
All methods are evaluated under HPS calibration strategies to assess robustness across scoring functions.
We report the mean and standard deviation of the reported APSS and marginal coverage over 10 independent runs.
We benchmark four methods: standard CE, CUT, ConfTr, and  RWCE.
Arrows $\downarrow$ and $\uparrow$ indicate improvements or degradations in predictive efficiency relative to the best baseline.
Overall, RWCE consistently produces the smallest prediction sets across all datasets and evaluation metrics, demonstrating a relative improvement of $14.87\%$ in APSS—highlighting its superior calibration quality and generalization capability.
}
\label{tab:cvg_set_hps_train_hps_test}
\resizebox{\textwidth}{!}{
\begin{NiceTabular}{@{}c|cccc|cccc@{}}
\toprule
\multirow{2}{*}{Model} & \multicolumn{4}{c|}{Marginal Coverage} & \multicolumn{4}{c }{Prediction Set Size} \\ 
\cmidrule(lr){2-5} \cmidrule(lr){6-9}
& CE & CUT & ConfTr & RWCE & CE & CUT & ConfTr & RWCE  \\ 
\midrule
\Block{1-*}{Caltech-101}
\\
\midrule
ResNet   
& 0.90 $\pm$ 0.008 & 0.90 $\pm$ 0.005 & 0.90 $\pm$ 0.004 & 0.90 $\pm$ 0.008
& 1.52 $\pm$ 0.045 & 1.55 $\pm$ 0.039 & 1.26 $\pm$ 0.03 & \textbf{0.96 $\pm$ 0.008} ($\downarrow$ 23.81\%) 
\\ 
DenseNet 
& 0.90 $\pm$ 0.006 & 0.90 $\pm$ 0.006 & 0.90 $\pm$ 0.006 & 0.90 $\pm$ 0.004
& 3.51 $\pm$ 0.10  & 1.66 $\pm$ 0.038 & 3.13 $\pm$ 0.07  & \textbf{0.94 $\pm$ 0.005} ($\downarrow$ 43.37\%)  
\\ 
\midrule
\Block{1-*}{CIFAR-100}
\\
\midrule
ResNet   
& 0.90 $\pm$ 0.005 & 0.90 $\pm$ 0.006 & 0.90 $\pm$ 0.004 & 0.90 $\pm$ 0.006
& 3.39 $\pm$ 0.10  & 2.91 $\pm$ 0.08  & 3.98 $\pm$ 0.077  & \textbf{2.68 $\pm$ 0.083}  ($\downarrow$ 7.90\%)
\\ 
DenseNet 
& 0.90 $\pm$ 0.005 & 0.90 $\pm$ 0.006 & 0.90 $\pm$ 0.005 & 0.90 $\pm$ 0.006
& 2.59 $\pm$ 0.053 & 2.07 $\pm$ 0.06  & 2.19 $\pm$ 0.034  & \textbf{2.04 $\pm$ 0.051 }  ($\downarrow$ 1.45\%) 
\\ 
\midrule
\Block{1-*}{iNaturalist}
\\
\midrule
ResNet   
& 0.91 $\pm$ 0.017 & 0.90 $\pm$ 0.009 & 0.90 $\pm$ 0.012 & 0.91 $\pm$ 0.007
& 98.69 $\pm$ 8.86 & 73.19 $\pm$ 3.13 & 76.30 $\pm$ 3.12 & \textbf{69.05 $\pm$ 2.56} ($\downarrow$ 5.66\%) 
\\ 
DenseNet 
& 0.90 $\pm$ 0.008 & 0.90 $\pm$ 0.012 & 0.90 $\pm$ 0.010 & 0.90 $\pm$ 0.011
& 93.18 $\pm$ 2.92 & 74.11 $\pm$ 3.35 & 72.25 $\pm$ 2.07 & \textbf{67.17 $\pm$ 2.07} ($\downarrow$ 7.03\%) 
\\ 
\bottomrule
\end{NiceTabular}
}
\end{table*}

\begin{table*}[!t]
\centering
\caption{
\textbf{Overall comparison on three datasets and calibrated by APS on DenseNet and ResNet} with $\alpha = 0.1$.
All methods are evaluated under APS calibration strategies to assess robustness across scoring functions.
We report the mean and standard deviation of the reported APSS and marginal coverage over 10 independent runs.
We benchmark four methods: standard CE, CUT, ConfTr, and  RWCE.
Arrows $\downarrow$ and $\uparrow$ indicate improvements or degradations in predictive efficiency relative to the best baseline.
Overall, RWCE consistently produces the smallest prediction sets across all datasets and evaluation metrics, demonstrating a relative improvement of $27.89\%$ in APSS—highlighting its superior calibration quality and generalization capability.
}
\label{tab:cvg_set_hps_train_aps_test}
\resizebox{\textwidth}{!}{
\begin{NiceTabular}{@{}c|cccc|cccc@{}}
\toprule
\multirow{2}{*}{Model} & \multicolumn{4}{c|}{Marginal Coverage} & \multicolumn{4}{c }{Prediction Set Size} \\ 
\cmidrule(lr){2-5} \cmidrule(lr){6-9}
& CE & CUT & ConfTr & RWCE & CE & CUT & ConfTr & RWCE  \\ 
\midrule
\Block{1-*}{Caltech-101}
\\
\midrule
ResNet   
& 0.90 $\pm$ 0.003 & 0.90 $\pm$ 0.005 & 0.90 $\pm$ 0.004 & 0.90 $\pm$ 0.006
& 4.96 $\pm$ 0.094 & 4.89 $\pm$ 0.095 & 4.25 $\pm$ 0.081 & \textbf{1.33 $\pm$ 0.017} ($\downarrow$ 68.71\%) 
\\ 
DenseNet 
& 0.90 $\pm$ 0.006 & 0.90 $\pm$ 0.006 & 0.90 $\pm$ 0.007 & 0.90 $\pm$ 0.004
& 8.9  $\pm$ 0.18 & 4.60 $\pm$ 0.078  & 9.69 $\pm$ 0.22  & \textbf{1.27 $\pm$ 0.008} ($\downarrow$ 72.39\%) 
\\ 
\midrule
\Block{1-*}{CIFAR-100}
\\
\midrule
ResNet   
& 0.90 $\pm$ 0.006 & 0.90 $\pm$ 0.05   & 0.90 $\pm$ 0.005 & 0.90 $\pm$ 0.006 
& 3.98 $\pm$0.13 & 3.49 $\pm$ 0.104 & 5.13 $\pm$ 0.117  & \textbf{3.08 $\pm$ 0.078} ($\downarrow$ 11.75\%)
\\ 
DenseNet 
& 0.90 $\pm$ 0.006 & 0.90 $\pm$ 0.006 & 0.90 $\pm$ 0.007 & 0.90 $\pm$ 0.007
& 3.38 $\pm$ 0.12 & \textbf{2.19 $\pm$ 0.060} & 3.04 $\pm$ 0.069   & 2.335 $\pm$ 0.076 ($\uparrow$ 6.62\%)
\\ 
\midrule
\Block{1-*}{iNaturalist}
\\
\midrule
ResNet   
& 0.90 $\pm$ 0.011 & 0.90 $\pm$ 0.009 & 0.90 $\pm$ 0.012 & 0.91 $\pm$ 0.009
& 95.18 $\pm$ 3.50 & 79.58 $\pm$ 2.87 & 87.80 $\pm$ 1.97   & \textbf{73.39 $\pm$ 2.32} ($\downarrow$ 7.78\%) 
\\ 
DenseNet 
& 0.90 $\pm$ 0.008 & 0.90 $\pm$ 0.014 & 0.90 $\pm$ 0.012 & 0.90 $\pm$ 0.011
& 101.55 $\pm$ 3.16 & 87.27 $\pm$ 2.30 & 92.88 $\pm$ 2.89 & \textbf{75.65 $\pm$ 2.44} ($\downarrow$ 13.31\%)
\\ 
\bottomrule
\end{NiceTabular}
}
\end{table*}

\begin{table*}[!t]
\centering
\caption{
\textbf{Overall comparison on three datasets and calibrated by RAPS on DenseNet and ResNet} with $\alpha = 0.1$.
All methods are evaluated under RAPS calibration strategies to assess robustness across scoring functions.
We report the mean and standard deviation of the reported APSS and marginal coverage over 10 independent runs.
We benchmark four methods: standard CE, CUT, ConfTr, and  RWCE.
Arrows $\downarrow$ and $\uparrow$ indicate improvements or degradations in predictive efficiency relative to the best baseline.
Overall, RWCE consistently produces the smallest prediction sets across all datasets and evaluation metrics, demonstrating a relative improvement of $23.79\%$ in APSS—highlighting its superior calibration quality and generalization capability.
}
\label{tab:cvg_set_hps_train_raps_test}
\resizebox{\textwidth}{!}{
\begin{NiceTabular}{@{}c|cccc|cccc@{}}
\toprule
\multirow{2}{*}{Model} & \multicolumn{4}{c|}{Marginal Coverage} & \multicolumn{4}{c }{Prediction Set Size} \\ 
\cmidrule(lr){2-5} \cmidrule(lr){6-9}
& CE & CUT & ConfTr & RWCE & CE & CUT & ConfTr & RWCE  \\ 
\midrule
\Block{1-*}{Caltech-101}
\\
\midrule
ResNet   
& 0.90 $\pm$ 0.003 & 0.90 $\pm$ 0.004 & 0.90 $\pm$ 0.005 & 0.90 $\pm$ 0.006
& 3.83 $\pm$ 0.061 & 4.13 $\pm$ 0.068 & 3.58 $\pm$ 0.066 & \textbf{1.32 $\pm$ 0.016} ($\downarrow$ 63.13\%) 
\\ 
DenseNet 
& 0.90 $\pm$ 0.007 & 0.90 $\pm$ 0.007 & 0.90 $\pm$ 0.008 & 0.90 $\pm$ 0.004
& 6.57  $\pm$ 0.12 & 3.65 $\pm$ 0.09  & 6.77 $\pm$ 0.14  & \textbf{1.26 $\pm$ 0.008} ($\downarrow$ 65.48\%) 
\\ 
\midrule
\Block{1-*}{CIFAR-100}
\\
\midrule
ResNet   
& 0.90 $\pm$ 0.007 & 0.90 $\pm$ 0.07 & 0.90 $\pm$ 0.005 & 0.90 $\pm$ 0.007 
& 3.25 $\pm$ 0.13 & 2.92 $\pm$ 0.07 & 4.08 $\pm$ 0.07  & \textbf{2.77 $\pm$ 0.065} ($\downarrow$ 5.14\%) 
\\ 
DenseNet 
& 0.90 $\pm$ 0.005 & 0.90 $\pm$ 0.005 & 0.90 $\pm$ 0.007 & 0.90 $\pm$ 0.007
& 2.73 $\pm$ 0.043 & \textbf{2.01 $\pm$ 0.039} & 2.74 $\pm$ 0.045   & 2.13 $\pm$ 0.043 ($\uparrow$ 5.97\%)
\\ 
\midrule
\Block{1-*}{iNaturalist}
\\
\midrule
ResNet   
& 0.90 $\pm$ 0.008 & 0.90 $\pm$ 0.009 & 0.90 $\pm$ 0.009 & 0.90 $\pm$ 0.007
& 98.72 $\pm$ 2.63 & 81.47 $\pm$ 3.22 & 82.39 $\pm$ 3.50   & \textbf{79.52 $\pm$ 1.90} ($\downarrow$ 2.39\%) 
\\ 
DenseNet 
& 0.89 $\pm$ 0.007 & 0.90 $\pm$ 0.011 & 0.90 $\pm$ 0.013 & 0.90 $\pm$ 0.007
& 95.48 $\pm$ 2.30 & 81.67 $\pm$ 3.89 & 84.51 $\pm$ 4.33 & \textbf{71.93 $\pm$ 1.95} ($\downarrow$ 12.59\%)
\\ 
\bottomrule
\end{NiceTabular}
}
\end{table*}

\begin{table*}[!ht]
\centering
\caption{
\textbf{Overall comparison on three datasets and calibrated by SAPS on DenseNet and ResNet} with $\alpha = 0.1$.
All methods are evaluated under SAPS calibration strategies to assess robustness across scoring functions.
We report the mean and standard deviation of the reported APSS and marginal coverage over 10 independent runs.
We benchmark four methods: standard CE, CUT, ConfTr, and  RWCE.
Arrows $\downarrow$ and $\uparrow$ indicate improvements or degradations in predictive efficiency relative to the best baseline.
Overall, RWCE consistently produces the smallest prediction sets across all datasets and evaluation metrics, demonstrating a relative improvement of $20.61\%$ in APSS—highlighting its superior calibration quality and generalization capability.
}
\label{tab:cvg_set_hps_train_saps_test}
\resizebox{\textwidth}{!}{
\begin{NiceTabular}{@{}c|cccc|cccc@{}}
\toprule
\multirow{2}{*}{Model} & \multicolumn{4}{c|}{Marginal Coverage} & \multicolumn{4}{c }{Prediction Set Size} \\ 
\cmidrule(lr){2-5} \cmidrule(lr){6-9}
& CE & CUT & ConfTr & RWCE & CE & CUT & ConfTr & RWCE  \\ 
\midrule
\Block{1-*}{Caltech-101}
\\
\midrule
ResNet   
& 0.90 $\pm$ 0.007 & 0.90 $\pm$ 0.006 & 0.90 $\pm$ 0.005 & 0.90 $\pm$ 0.006
& 2.50 $\pm$ 0.033 & 2.51 $\pm$ 0.032 & 2.45 $\pm$ 0.02 & \textbf{1.34 $\pm$ 0.019} ($\downarrow$ 45.36\%) 
\\ 
DenseNet 
& 0.90 $\pm$ 0.006 & 0.90 $\pm$ 0.007 & 0.90 $\pm$ 0.007 & 0.90 $\pm$ 0.006
& 3.84 $\pm$ 0.18  & 2.43 $\pm$ 0.041 & 4.75 $\pm$ 0.26  & \textbf{1.30 $\pm$ 0.014} ($\downarrow$ 46.70\%)  
\\ 
\midrule
\Block{1-*}{CIFAR-100}
\\
\midrule
ResNet   
& 0.90 $\pm$ 0.006 & 0.90 $\pm$ 0.006 & 0.90 $\pm$ 0.005 & 0.90 $\pm$ 0.007
& 3.98 $\pm$ 0.13  & 3.44 $\pm$ 0.11  & 4.59 $\pm$ 0.13  & \textbf{3.17 $\pm$ 0.12}  ($\downarrow$ 7.73\%)
\\ 
DenseNet 
& 0.90 $\pm$ 0.006 & 0.90 $\pm$ 0.006 & 0.90 $\pm$ 0.005 & 0.90 $\pm$ 0.005
& 2.97 $\pm$ 0.12 & 2.63 $\pm$ 0.078  & 2.41 $\pm$ 0.046  & \textbf{2.39 $\pm$ 0.066 }  ($\downarrow$ 9.16\%) 
\\ 
\midrule
\Block{1-*}{iNaturalist}
\\
\midrule
ResNet   
& 0.90 $\pm$ 0.008 & 0.90 $\pm$ 0.009 & 0.90 $\pm$ 0.008 & 0.90 $\pm$ 0.010
& 98.69 $\pm$ 2.84 & 81.62 $\pm$ 3.19 & 83.73 $\pm$ 2.66 & \textbf{79.16 $\pm$ 2.89} ($\downarrow$ 3.00\%) 
\\ 
DenseNet 
& 0.90 $\pm$ 0.006 & 0.90 $\pm$ 0.010 & 0.90 $\pm$ 0.013 & 0.90 $\pm$ 0.008
& 95.98 $\pm$ 2.00 & 82.40 $\pm$ 3.48 & 85.67 $\pm$ 3.42 & \textbf{72.74 $\pm$ 2.41} ($\downarrow$ 11.72\%) 
\\ 
\bottomrule
\end{NiceTabular}
}
\end{table*}

\noindent 
\textbf{RWCE generates smaller prediction sets.}
Table~\ref{tab:cvg_set_hps_train_hps_test}, Table~\ref{tab:cvg_set_hps_train_aps_test},  Table~\ref{tab:cvg_set_hps_train_raps_test}, and Table~\ref{tab:cvg_set_hps_train_saps_test} report the APSS and marginal coverage rates of all methods across three datasets under a fixed coverage level of $\alpha = 0.1$. 
All models are calibrated using the HPS score during the training phase and are evaluated with HPS, APS, RAPS and SAPS scoring functions during the evaluation phase, with further details provided in Appendix~\ref{subsec:train_test_strategies}.
Our method, RWCE, demonstrates consistently strong performance across all datasets and evaluation settings by producing smaller and more efficient prediction sets than existing baselines.
Averaging the relative improvements across all configurations, RWCE achieves a 21.79\% reduction in prediction set size compared to the strongest baseline in each case.
Table~\ref{tab:cvg_set_hps_train_hps_test} presents the APSS and marginal coverage rates for different methods when using the HPS score across training, calibration, and testing phases.  
RWCE outperforms all existing baselines, achieving an average of 14.87\% reduction in prediction set size across all datasets.
Table~\ref{tab:cvg_set_hps_train_aps_test} reports results when HPS is used during training and APS is used for calibration and evaluation.  
RWCE continues to outperform nearly all baselines, achieving a 27.89\% average reduction in prediction set size, except for an increase of 6.62\% on CIFAR-100 with DenseNet.
Table~\ref{tab:cvg_set_hps_train_raps_test} summarizes the performance under the RAPS scoring function, with HPS used during training.  
RWCE achieves a 23.79\% average reduction in prediction set size, again demonstrating competitive or superior performance in most settings.  
The only notable exception is CIFAR-100 with DenseNet, where RWCE yields an increase of 5.97\% compared to the strongest baseline.
Table~\ref{tab:cvg_set_hps_train_saps_test} presents results under the SAPS scoring function with HPS training.
RWCE attains an average 20.61 \% reduction in prediction set size, demonstrating robust and generally superior performance across diverse scenarios.

\begin{figure*}[!t]
    \centering
    \begin{minipage}[t]{0.30\linewidth}
    \centering
    \textbf{(a)}  Loss Convergence
    \includegraphics[width = \linewidth]{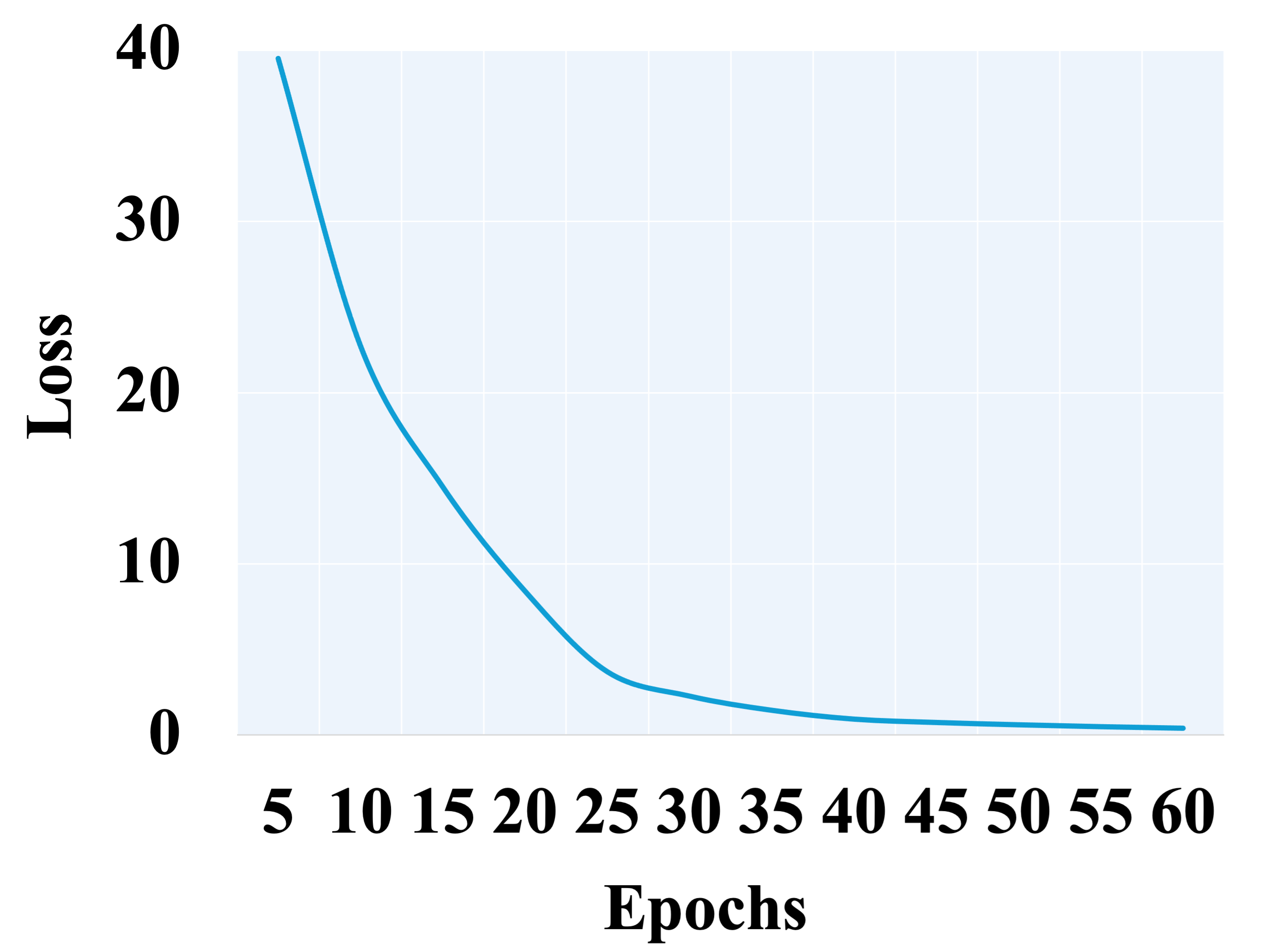}
    \\
    \includegraphics[width = \linewidth]{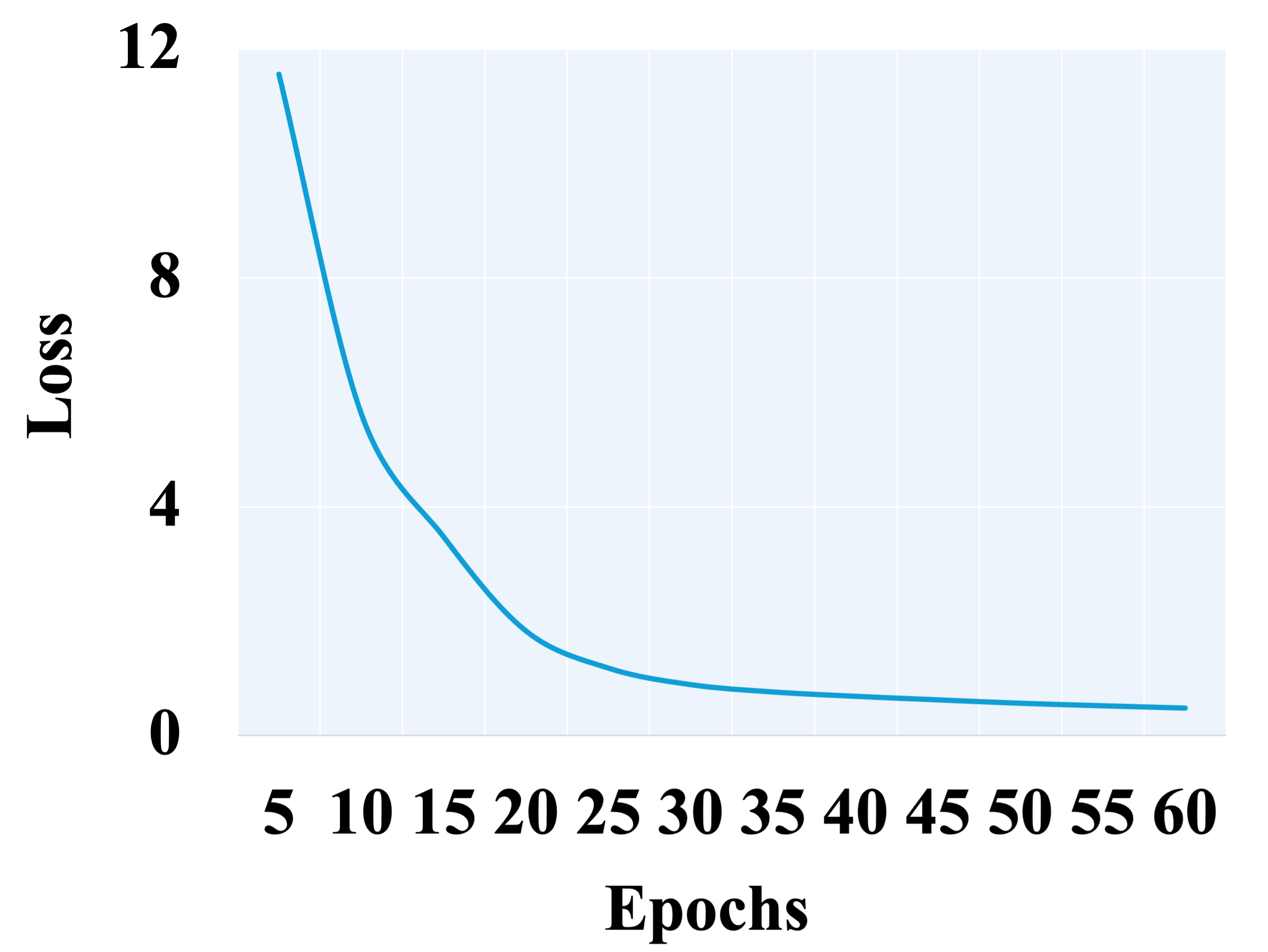}
    \end{minipage} 
    \begin{minipage}[t]{0.31\linewidth}
    
    \centering    
    \textbf{(b)} Loss vs APSS
    \includegraphics[width=\linewidth]{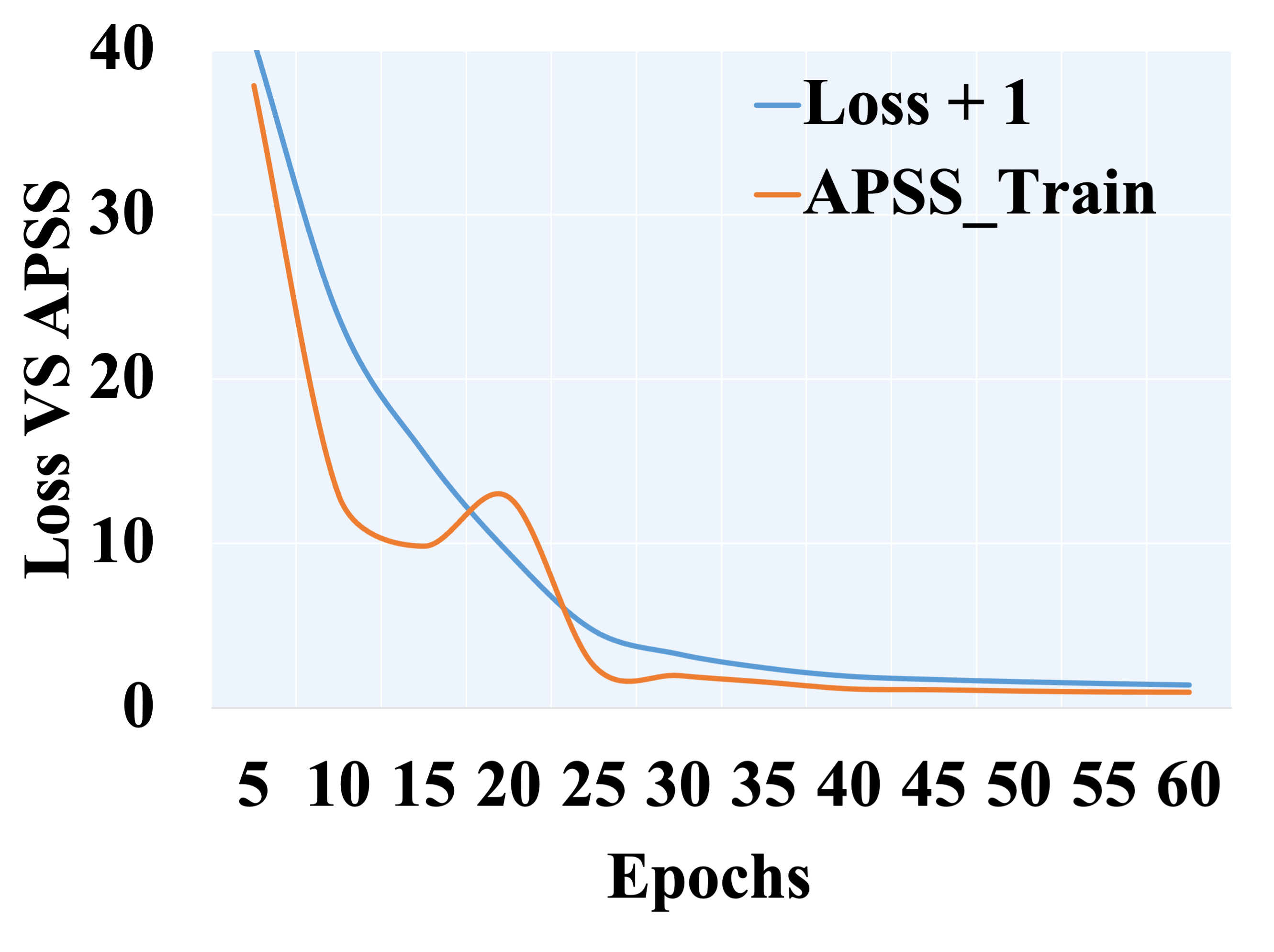}
    \\
    \includegraphics[width=\linewidth]{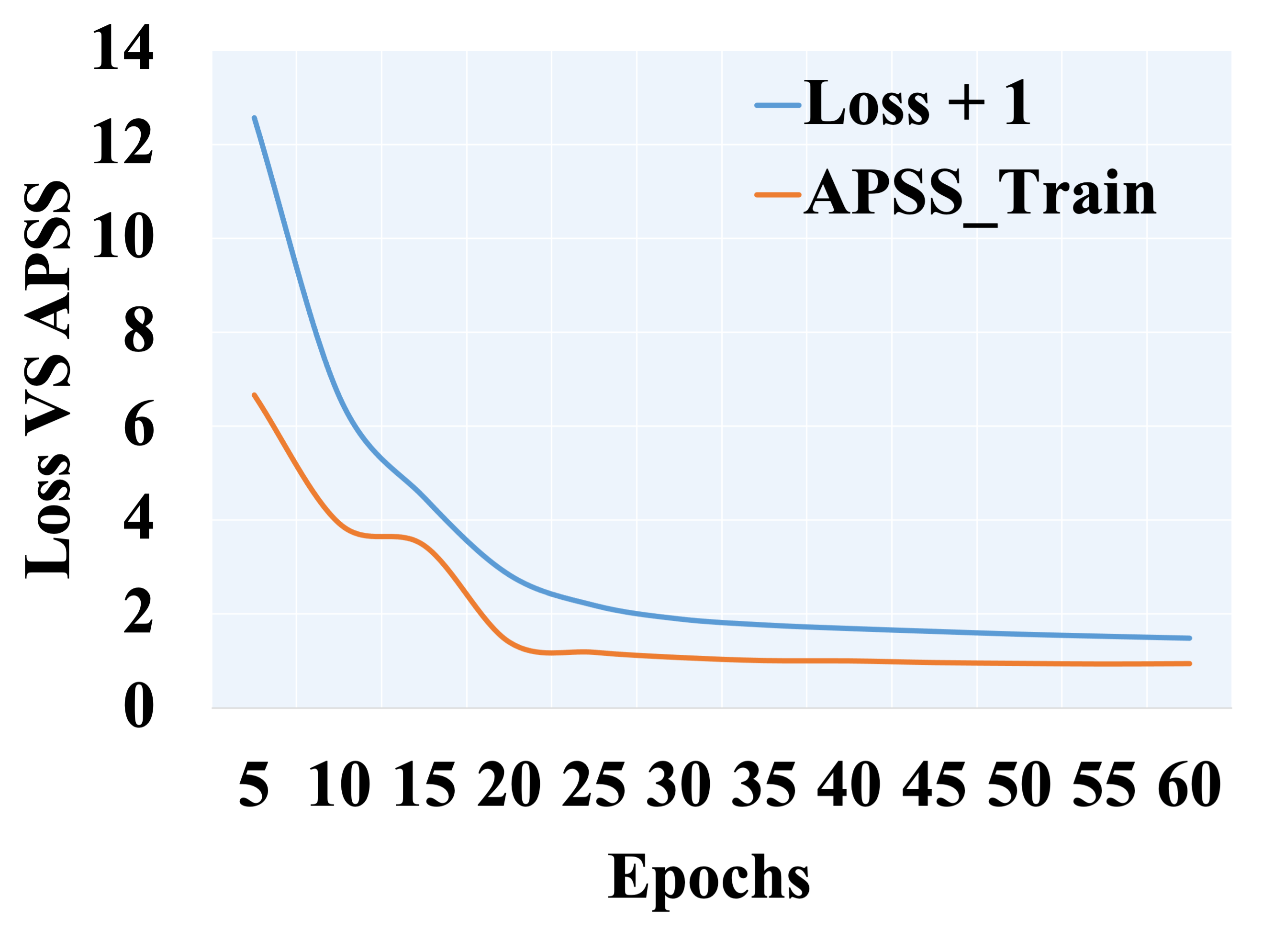}
    \end{minipage} 
    \begin{minipage}[t]{0.32\linewidth}
    \centering
    \textbf{(c)}    APSS Comparison 
     \includegraphics[width=\linewidth]{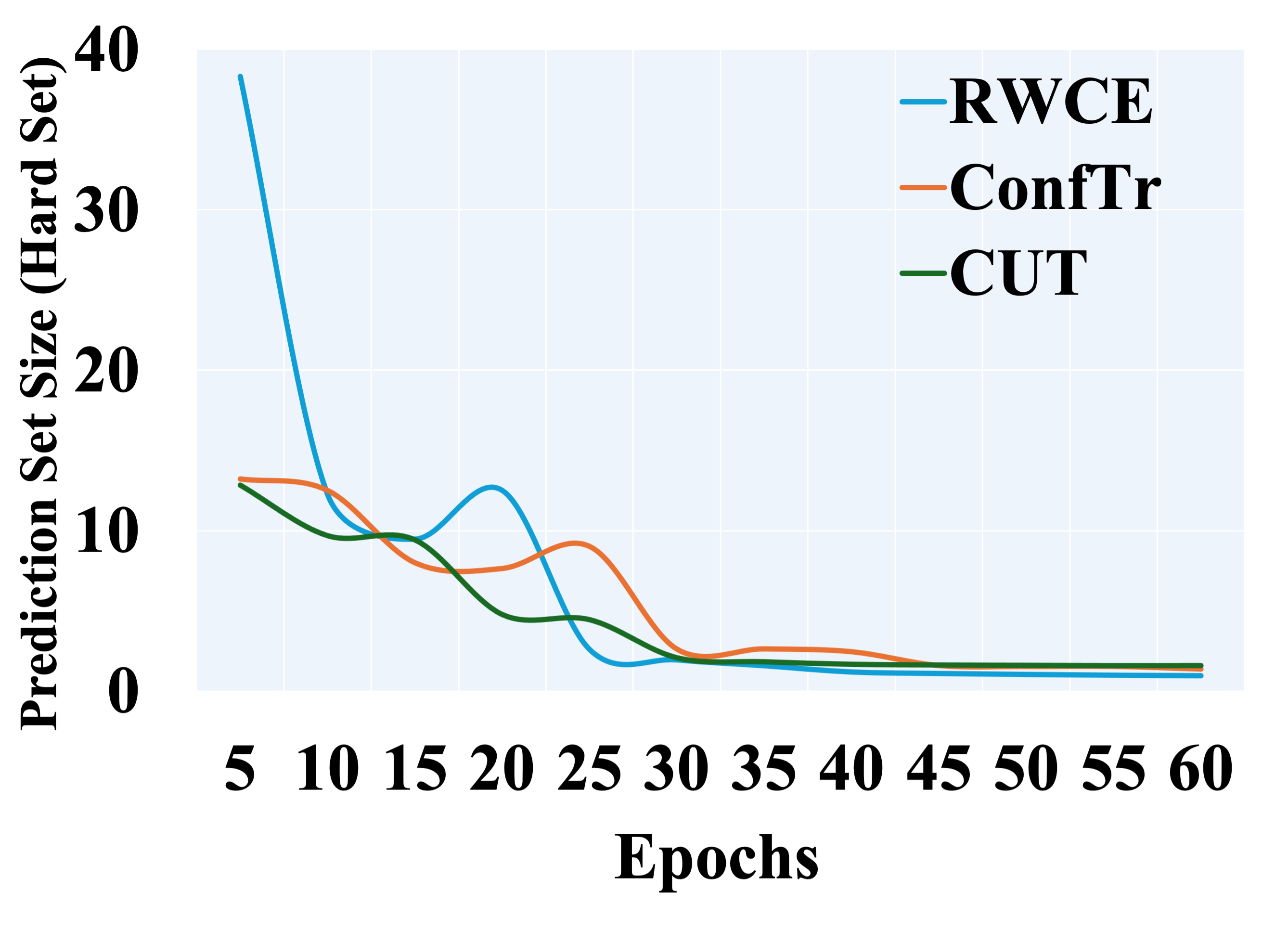}
    \\   
    \includegraphics[width=\linewidth]{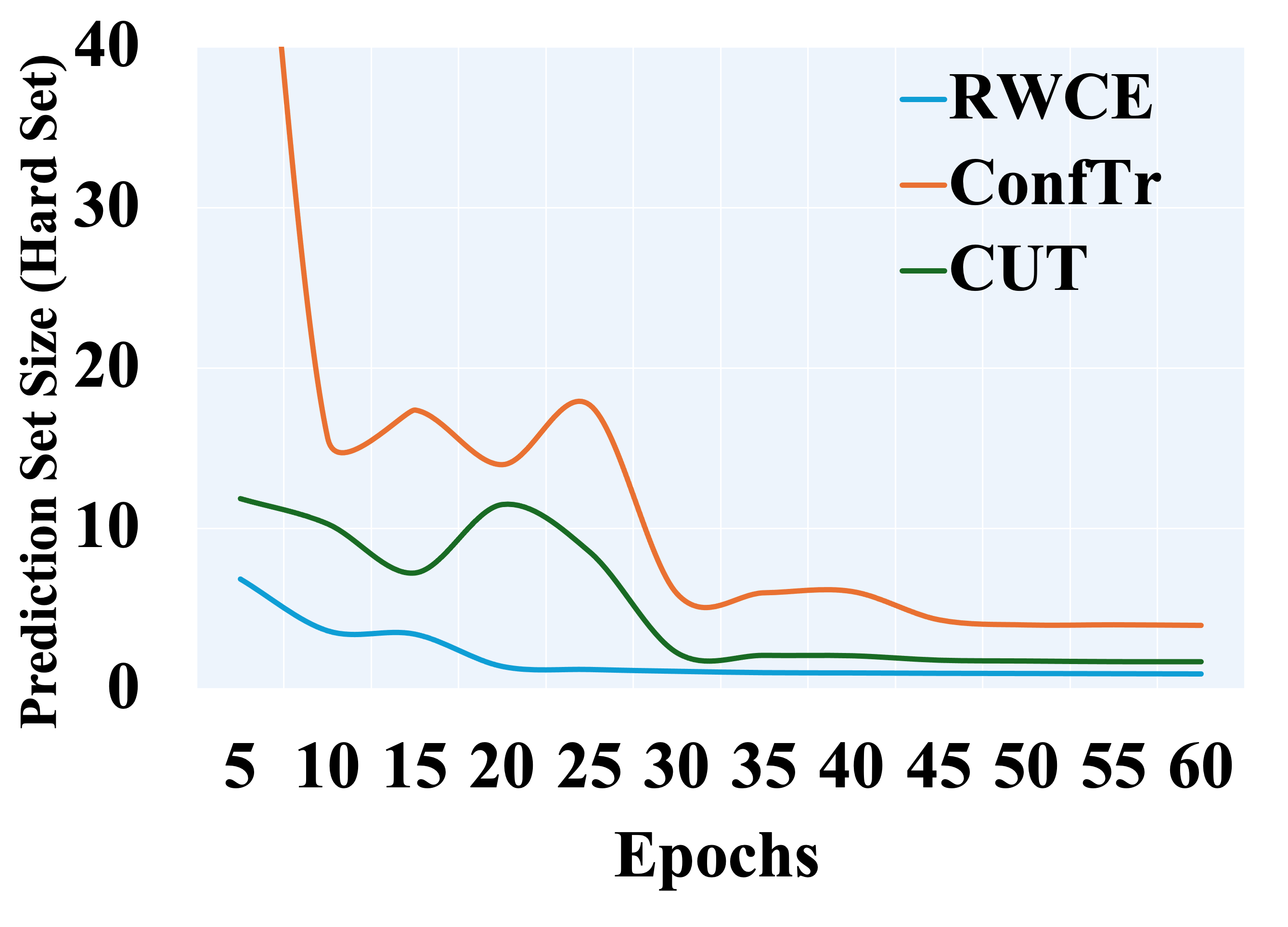}
    \end{minipage} 
    \caption{
    \textbf{Justification experiments using ResNet (Top row) and DenseNet (Bottom row)} on Caltech-101.
    \textbf{(a)} the training loss convergence of RWCE . The results demonstrate that RWCE converges smoothly and stably on both architectures.  
    \textbf{(b)} the RWCE training loss inflated with $1$ (according to remark~\ref{remark:new_objective_bounds_rank}) with the actual APSS on the validation set. 
    The loss closely upper bounds APSS with a small and stable gap, indicating that RWCE effectively approximates and directly minimizes the true set size objective.  
    \textbf{(c)} the APSS of RWCE, ConfTr, and CUT calibrated by HPS score. RWCE consistently achieves smaller prediction sets on both architectures, confirming its superior efficiency in directly minimizing set size.
    }
    \label{fig:results_overall_Caltech}
\end{figure*}

\begin{figure}[!t]
    \centering
    \begin{minipage}[t]{0.49\linewidth}
    \centering
    \textbf{(a)} Alignment Inequality on ResNet
    \end{minipage} 
    \begin{minipage}[t]{0.49\linewidth}
    \centering
    \textbf{(b)} Alignment Inequality on DenseNet
    \end{minipage} 
    \hfill
    \begin{minipage}[t]{0.49\linewidth}
     \centering   
     \includegraphics[width = \linewidth]{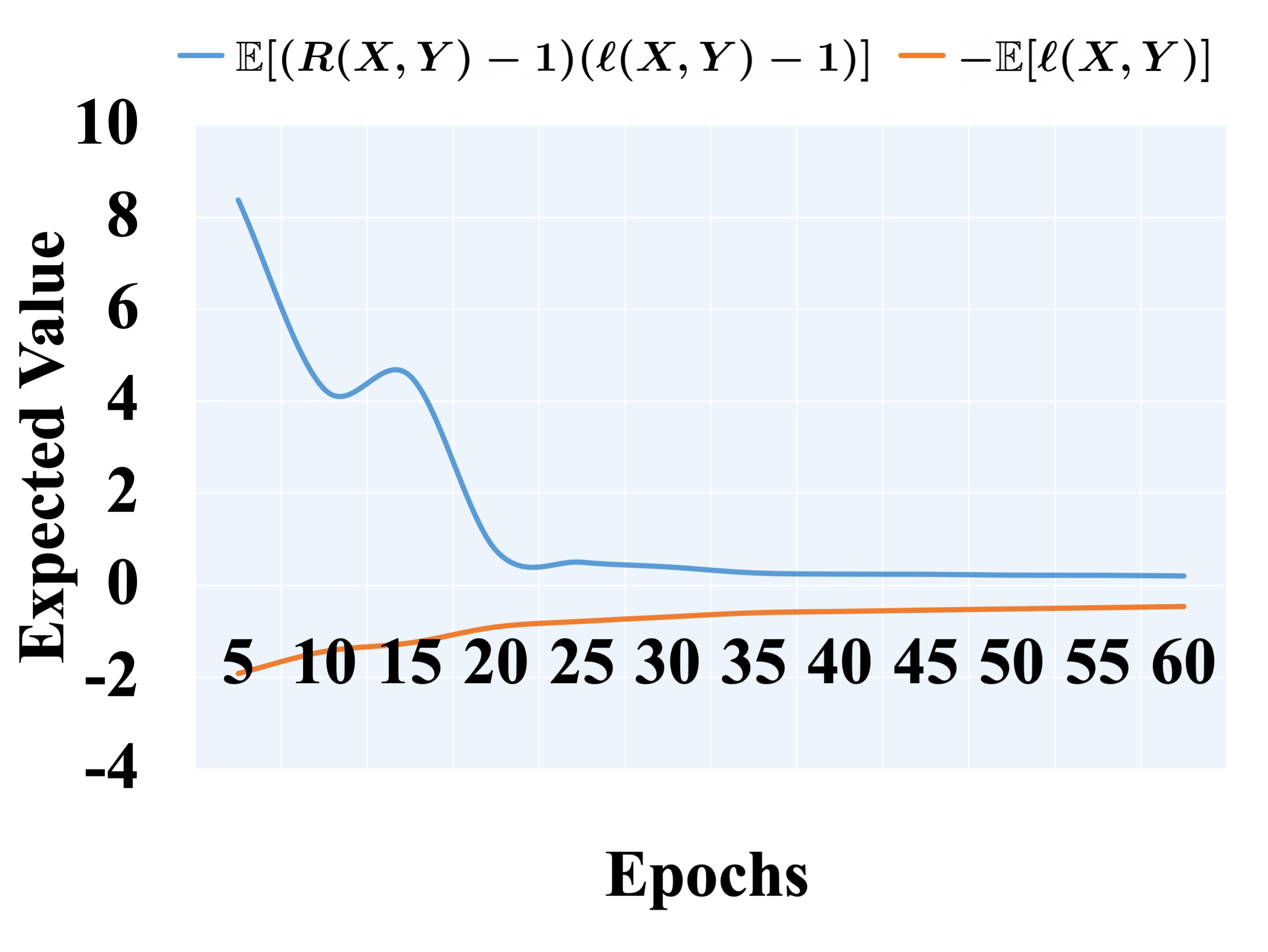}
     \end{minipage}
    \begin{minipage}[t]{0.49\linewidth}
    \centering
    \includegraphics[width=\linewidth]{RWCE_Rank_Loss_caltech_dense.png}
    \end{minipage}
    \caption{
    \textbf{
    Empirical validation of the alignment assumption in Theorem~\ref{theorem:new_objective_upper_bounds_expected_rank}, which states that $\mathbb{E}[(R(X,Y) - 1)(\ell(X,Y) - 1)] \geq -\mathbb{E}[\ell(X,Y)]$} during training on Caltech-101 using ResNet (a) and DenseNet (b).  
    We plot the left-hand side (expected rank–loss interaction term, blue) and right-hand side (negative expected cross-entropy loss, orange). 
    In both cases, the blue curve remains significantly above the orange curve throughout training, confirming that the inequality holds in practice.
    }
    \label{fig:rank_ce_alignment_caltech}
    \vspace{-2.0ex}
\end{figure}

\vspace{1ex}
\noindent
\textbf{RWCE Converges Stably During Training.}  
To assess the optimization dynamics of RWCE, we visualize its training loss (according to remark~\ref{remark:new_objective_bounds_rank}) over 60 fine-tuning epochs on Caltech-101 using both ResNet and DenseNet architectures, as shown in Figure~\ref{fig:results_overall_Caltech}(a). 
The training curves exhibit a consistent convergence pattern across both models. The loss decreases rapidly during the first 20 epochs, followed by a more gradual decline. In both models, convergence begins to stabilize around epoch 30, and the loss becomes stable after epoch 40.
This loss corresponds to an rank-weighted objective based on the rank of the true label, which we theoretically show to be a tight upper bound on the expected prediction set size. 
The observed convergence behavior thus not only indicates optimization stability, but also reflects RWCE's ability to directly approximate the set size minimization objective without relying on relaxed approximations of the indicator function.

\vspace{1ex}

\noindent
\textbf{RWCE Directly Minimizes Prediction Set Size.}  
To examine whether RWCE effectively minimizes the true prediction set size on Caltech-101, we compare its training loss with the actual APSS on the training set, as shown in Figure~\ref{fig:results_overall_Caltech}(b). For both ResNet and DenseNet, the training loss inflated with $1$ (according to remark~\ref{remark:new_objective_bounds_rank}) closely upper bounds the APSS throughout training, with the two curves converging to similar values after epoch 40. This tight alignment empirically validates our theoretical claim that the RWCE objective serves as a provable upper bound on the expected set size.
Furthermore, we track the test-time APSS of RWCE, ConfTr, and CUT under HPS calibration during training in Figure~\ref{fig:results_overall_Caltech}(c). 
RWCE effectively reduces prediction set size throughout training. On the ResNet backbone (top), it starts with relatively large sets but overtakes both ConfTr and CUT by around epoch 30, maintaining the smallest APSS thereafter. On the DenseNet backbone (bottom), RWCE consistently achieves the lowest prediction set size from the very beginning. ConfTr exhibits relatively smooth convergence on ResNet, but its performance on DenseNet is unstable in the early stages and remains inferior to RWCE throughout. CUT performs competitively in the early epochs on both architectures but is eventually outperformed by RWCE. These results highlight RWCE’s robustness and efficiency in minimizing prediction set size across different model architectures on Caltech-101.

\newpage
\subsection{Additional Experiments on NLP Dataset with Transformer Architecture}
\label{appendix:subsec:additional_exps_nlp}

\noindent

\noindent
\textbf{RWCE Generates Smaller Prediction Sets on NLP Tasks.}  
To further assess the generality of RWCE beyond vision task, we evaluate it on the fine-grained sentiment classification dataset SST-5 ~\citep{socher2013recursive} using the BERT transformer \citep{devlin2019bert}.
As summarized in Table~\ref{tab:nlp_set_hps_train_test}, RWCE achieves an overall 3.90\% reduction in the average prediction set size compared to baseline, ConfTr and CUT across all calibration scores (HPS, APS, RAPS, SAPS), while maintaining comparable marginal coverage around $0.90$. 
Overall, RWCE generalizes robustly across domains—retaining its advantage in minimizing prediction set size and improving predictive efficiency not only on image classification tasks but also in NLP applications built on large transformer models.

\begin{table*}[!ht]
\centering
\caption{
\textbf{Overall comparison on SST-5 and calibrated by all scores on BERT} with $\alpha = 0.1$.
All methods are evaluated under calibration strategies to assess robustness across scoring functions.
We report the mean and standard deviation of the reported APSS and marginal coverage over 10 independent runs.
We benchmark four methods: standard CE, CUT, ConfTr, and  RWCE.
Arrows $\downarrow$ and $\uparrow$ indicate improvements or degradations in predictive efficiency relative to the best baseline.
Overall, RWCE consistently produces the smallest prediction sets across all evaluation metrics, demonstrating a relative improvement of $3.90\%$ in APSS—highlighting its superior calibration quality and generalization capability.
}
\label{tab:nlp_set_hps_train_test}
\resizebox{\textwidth}{!}{
\begin{NiceTabular}{@{}c|cccc|cccc@{}}
\toprule
\multirow{2}{*}{Model} & \multicolumn{4}{c|}{Prediction Set Size} & \multicolumn{4}{c }{Marginal Coverage} \\ 
\cmidrule(lr){2-5} \cmidrule(lr){6-9}
& CE & CUT & ConfTr & RWCE & CE & CUT & ConfTr & RWCE  \\ 
\midrule
\Block{1-*}{SST-5}
\\
\midrule
HPS   
& 3.88 $\pm$ 0.083 & 2.52 $\pm$ 0.073 & 2.62 $\pm$ 0.047 & \textbf{2.39 $\pm$ 0.078} ($\downarrow$ 5.16\%)
& 0.90 $\pm$ 0.017 & 0.90 $\pm$ 0.020 & 0.90 $\pm$ 0.014 & 0.89 $\pm$ 0.020  
\\ 
APS 
& 4.19 $\pm$ 0.067 & 2.59 $\pm$ 0.057 & 2.71 $\pm$ 0.050  & \textbf{2.52 $\pm$ 0.051} ($\downarrow$ 2.70\%)
& 0.90 $\pm$ 0.018 & 0.90 $\pm$ 0.015 & 0.90 $\pm$ 0.017 & 0.90 $\pm$ 0.021  
\\ 
RAPS 
& 4.56 $\pm$ 0.323  & 2.61 $\pm$ 0.050 & 2.66 $\pm$ 0.055  & \textbf{2.55 $\pm$ 0.045} ($\downarrow$ 2.30\%)
& 0.94 $\pm$ 0.045 & 0.90 $\pm$ 0.016 & 0.90 $\pm$ 0.018 & 0.90 $\pm$ 0.014  
\\ 
SAPS 
& 4.17 $\pm$ 0.092 & 2.88 $\pm$ 0.080 & 2.79 $\pm$ 0.053  & \textbf{2.64 $\pm$ 0.061} ($\downarrow$ 5.38\%)
& 0.90 $\pm$ 0.021 & 0.90 $\pm$ 0.019 & 0.90 $\pm$ 0.013 & 0.90 $\pm$ 0.021  
\\ 
\bottomrule
\end{NiceTabular}
}
\end{table*}

\vspace{1ex}

\end{document}